%% file: main.tex
\documentclass{article}

\usepackage{booktabs}
\usepackage[table]{xcolor}
\usepackage{makecell}
\usepackage{array}
\usepackage[skip=4pt]{caption}
\usepackage{subcaption}
\usepackage{tabularx}
\usepackage{adjustbox}
\usepackage{float}
\usepackage[normalem]{ulem}
\usepackage{microtype}
\usepackage{xurl}
\usepackage{url}

\usepackage{amsmath}
\usepackage{amssymb}
\usepackage{mathtools}
\usepackage{amsthm}
\usepackage{bm}
\usepackage{bbm}

\usepackage{listings}
\usepackage{mdframed}
\usepackage[many]{tcolorbox}
\usepackage{tikz}

\usepackage{hyperref}
\usepackage{cleveref}

\usepackage[preprint]{icml2026}

\definecolor{avgshade}{RGB}{245,235,235}
\definecolor{pairblue}{RGB}{45,95,220}
\definecolor{pairyellow}{RGB}{184,134,11}
\definecolor{pairpurple}{RGB}{128,0,180}

\newcommand{\winblue}[1]{\textcolor{pairblue}{#1}}
\newcommand{\winyellow}[1]{\textcolor{pairyellow}{#1}}
\newcommand{\winpurple}[1]{\textcolor{pairpurple}{#1}}

\definecolor{stepGray}{HTML}{F3F4F6}
\definecolor{step1}{HTML}{E8F0FE}
\definecolor{step2}{HTML}{E6F4EA}
\definecolor{step3}{HTML}{FFF4E5}
\definecolor{step4}{HTML}{FCE8E6}
\definecolor{step5}{HTML}{EDE7F6}

\definecolor{codebg}{RGB}{245,245,250}
\definecolor{codecomment}{RGB}{0,135,0}
\definecolor{codekeyword}{RGB}{191,35,120}
\definecolor{codestring}{RGB}{163,21,21}
\definecolor{codenumber}{RGB}{120,120,120}

\lstdefinestyle{compactpython}{
  language=Python,
basicstyle=\ttfamily\fontsize{6.6}{7.3}\selectfont,  keywordstyle=\color{codekeyword},
  commentstyle=\color{codecomment},
  stringstyle=\color{codestring},
  identifierstyle=\color{black},
  numbers=left,
  numberstyle=\tiny\color{codenumber},
  numbersep=4pt,
  xleftmargin=1.2em,
  backgroundcolor=\color{codebg},
  breaklines=true,
  breakatwhitespace=false,
  columns=fullflexible,
  keepspaces=true,
  showstringspaces=false,
  aboveskip=0pt,
  belowskip=0pt
}

\newmdenv[
  linecolor=black!12,
  linewidth=0.5pt,
  roundcorner=2pt,
  backgroundcolor=black!2,
  skipabove=2pt,
  skipbelow=2pt,
  innerleftmargin=3pt,
  innerrightmargin=3pt,
  innertopmargin=3pt,
  innerbottommargin=3pt
]{codebox}

\newtcolorbox{steppanel}[1]{%
  enhanced,
  boxrule=0pt,
  colback=#1,
  arc=2pt,
  left=3pt,right=3pt,top=3pt,bottom=3pt
}

\input{custom_templates/math}
\input{custom_templates/style}

\theoremstyle{plain}
\newtheorem{theorem}{Theorem}[section]

\newtheorem{lemma}[theorem]{Lemma}

\theoremstyle{definition}

\theoremstyle{remark}

\newcommand{\bb}[1]{\bm{#1}}
\newcommand{\mat}[1]{\mathbf{#1}}

\DeclareMathOperator{\diag}{diag}
\DeclareMathOperator{\tril}{tril}

\icmltitlerunning{Preconditioned DeltaNet: Curvature-aware Sequence Modeling for Linear Recurrences}

\begin{document}

\definecolor{mutedcite}{RGB}{82,98,145}
\definecolor{mutedlink}{RGB}{135,85,135}

\hypersetup{
  colorlinks=true,
  citecolor=mydarkblue,
  urlcolor=mydarkblue,
  linkcolor=mydarkblue
}

\twocolumn[
\icmltitle{Preconditioned DeltaNet: Curvature-aware Sequence Modeling for Linear Recurrences}

\icmlsetsymbol{equal}{*}
\icmlsetsymbol{equal_senior}{\dag}

\begin{icmlauthorlist}
\icmlauthor{Neehal Tumma}{equal,mit,liquid}
\icmlauthor{Noel Loo}{equal,mit,liquid}
\icmlauthor{Daniela Rus}{mit,liquid}
\end{icmlauthorlist}

\icmlaffiliation{mit}{MIT}
\icmlaffiliation{liquid}{Liquid AI}

\icmlcorrespondingauthor{Neehal Tumma}{ntumma@mit.edu}

\icmlkeywords{linear attention, least squares, preconditioning, linear recurrence, efficient architecture, sequence modeling}

\vskip 0.3in
]

% ✅ THIS IS THE IMPORTANT FIX
\printAffiliationsAndNotice{\icmlEqualContribution}

\makeatletter
\def\@copyrightspace{}
\makeatother

\begin{abstract}
% Efficient long context sequence models have rapidly expanded beyond softmax attention, as sustained work on linear attention has produced a large and growing space of subquadratic recurrent operators. Recent efforts have been made to taxonomize these recurrences; one compelling view is the test-time regression (TTR) framework, which interprets them as performing online least squares updates that learn a linear map from the keys to values. Under the TTR lens, existing delta-rule recurrences (e.g. DeltaNet, Gated DeltaNet, KDA) can be seen as first-order approximations to this objective, effectively treating the key Gram curvature as the identity. To capture anisotropy present in the curvature, in this work, we consider a diagonal approximation to the key Gram, which can be implemented via lightweight preconditioning of the write key while preserving the delta-rule computational structure.
% We derive a IO-aware chunkwise parallel algorithm by augmenting the WY representation in the delta-rule form and implement custom Triton kernels for our Preconditioned DeltaNet, Gated DeltaNet and KDA instantiations. Empirically, we find that our preconditioned delta-rule recurrences yield consistent performance improvements across synthetic recall benchmarks and language modeling at the 340M and 1B scale.
To address the increasing long-context compute limitations of softmax attention, several subquadratic recurrent operators have been developed. This work includes models such as Mamba-2, DeltaNet, Gated DeltaNet (GDN), and Kimi Delta Attention (KDA). As the space of recurrences grows, a parallel line of work has arisen to taxonomize them. One compelling view is the test-time regression (TTR) framework, which interprets recurrences as performing online least squares updates that learn a linear map from the keys to values. Existing delta-rule recurrences can be seen as first-order approximations to this objective, but notably ignore the \textit{curvature} of the least-squares loss during optimization. In this work, we address this by introducing \textit{preconditioning} to these recurrences. Starting from the theory of online least squares, we derive equivalences between linear attention and the delta rule in the exactly preconditioned case. Next, we realize this theory in practice by proposing a diagonal approximation: this enables us to introduce preconditioned variants of DeltaNet, GDN, and KDA alongside efficient chunkwise parallel algorithms for computing them. Empirically, we find that our preconditioned delta-rule recurrences yield consistent performance improvements across synthetic recall benchmarks and language modeling at the 340M and 1B scale. Code can be found here: \url{https://github.com/ntumm120/preconditioned-deltanet}.
\end{abstract}

\input{main_material/1_intro}

\input{main_material/2_background}

\input{main_material/3_method}
\input{main_material/4_experiments}
\input{main_material/5_taxonomy}
\input{main_material/6_conclusion}

\newpage

\input{supplementary_material/acknowledgements}
\input{supplementary_material/_reproducibility}

\input{manual_bib}
% \bibliographystyle{icml2026}
% \bibliography{references}
% \input{output.bbl}
% \bibliographystyle{icml2026}
% \clearpage
% \input{output.bbl}

\newpage
\appendix
\numberwithin{equation}{subsection}
\onecolumn

% \rule[0pt]{\columnwidth}{1pt}
% \begin{center}
%     \huge{Supplementary Material}
% \end{center}
% \rule[0pt]{\columnwidth}{1.5pt}

%\doparttoc
% \tableofcontents
% \newpage

\input{supplementary_material/A_relatedwork}
\input{supplementary_material/G_taxonomy}
\input{supplementary_material/B_theory}
\input{supplementary_material/C_methods}
\input{supplementary_material/E_experiments_appendix}
\input{supplementary_material/D_chunkwise_forms}
\input{supplementary_material/F_POCP}

\end{document}

%% file: custom_templates/math.tex
\usepackage{amsmath, amssymb, amsfonts, mathtools}

\usepackage{physics}

\usepackage{cancel}

\usepackage{amsthm}

\usepackage{accents}

\usepackage{bm}

\usepackage{array}
\usepackage{tabularx}
\usepackage{booktabs}
\usepackage{caption}

%% file: custom_templates/style.tex
\usepackage[many]{tcolorbox}
\tcbuselibrary{breakable,xparse,skins}
\newtcolorbox{example}[0]{breakable, enhanced, sharp corners, frame hidden, colback=green!5}

\newcounter{finding} % Global counter (no reset per section)
\newtcolorbox{myfindingbox}{
    colback=blue!10!white, % Lighter blue background color of the box
    colframe=white, % No visible border (same as background color)
    boxrule=0pt, % No border line
    left=1pt, % Padding on the left
    right=1pt, % Padding on the right
    top=1pt, % Padding on the top
    bottom=1pt, % Padding on the bottom
    breakable, % Allow box to break across pages
}

\newtcolorbox{note}{
    colback=black!10!white, % Lighter blue background color of the box
    colframe=white, % No visible border (same as background color)
    boxrule=0pt, % No border line
    left=1pt, % Padding on the left
    right=1pt, % Padding on the right
    top=1pt, % Padding on the top
    bottom=1pt, % Padding on the bottom
    breakable, % Allow box to break across pages
}

\hypersetup{
    bookmarks=true,
    colorlinks=true,
    linkcolor=magenta,
    citecolor=blue!70,
}

\providecommand*{\backref}[1]{}
\providecommand*{\backrefalt}[4]{}

\usepackage{listings}% for code
\usepackage{xcolor} % for code
\definecolor{codegreen}{rgb}{0,0.6,0}
\definecolor{codegray}{rgb}{0.5,0.5,0.5}
\definecolor{codepurple}{rgb}{0.58,0,0.82}
\definecolor{backcolour}{rgb}{0.95,0.95,0.92}
\definecolor{ghostwhite}{rgb}{0.97, 0.97, 1.0}

\lstdefinestyle{mystyle}{
    backgroundcolor=\color{ghostwhite},
    commentstyle=\color{codegreen},
    keywordstyle=\color{magenta},
    numberstyle=\tiny\color{codegray},
    stringstyle=\color{codepurple},
    basicstyle=\ttfamily\footnotesize,
    breakatwhitespace=false,
    breaklines=true,
    captionpos=b,
    keepspaces=true,
    numbers=left,
    numbersep=5pt,
    showspaces=false,
    showstringspaces=false,
    showtabs=false,
    tabsize=2
}

\AtBeginDocument{
\setlength\intextsep{5pt}\setlength\textfloatsep{10pt}}

%% file: main_material/1_intro.tex
\newcommand{\TT}{^{\mathsf T}}
\section{Introduction}
\label{sec:intro}
% TODO: 1-2 paragraphs: motivation + problem statement.
% Delta-rule models explore decay/gain; curvature/preconditioning axis mostly unexplored except MesaNet.

Scaling LLMs to longer context lengths is increasingly limited by the quadratic compute requirements of softmax attention \citep{attention, lfm2}. This has driven a rapid evolution of subquadratic recurrent operators, many of which fall under the umbrella of linear attention \citep{linear_attention}.
% notably structured state space models (SSMs) and linear attention. Mamba-2 \citep{mamba2} unifies these two frameworks (via the state-space duality) by instantiating a recurrence that can be viewed both as a vector-valued SSM and matrix-valued linear attention model admitting an efficient chunkwise parallel form. % The aforementioned recurrences 
One such example is Mamba-2 \citep{mamba2} which can be expressed as follows: $\mat{S}_t = \alpha_t \mat{S}_{t-1} + \bb{v}_t \bb{k}_t^{\TT}$ where $\alpha_t$ represents a scalar between 0 and 1. A drawback of this functional form is that it does not model the varying importance of different key-value associations: if the model needs to forget specific key-value pairs in its associative map $\mat{S}_t$, all such pairs are equally decayed. To address this, DeltaNet \citep{deltanet} models the recurrence using the \emph{delta rule} \citep{deltarule}: $\mat{S}_t = \mat{S}_{t-1} (\mat{I} - \beta_t \bb{k}_t \bb{k}_t^{\TT}) + \beta_t \bb{v}_t \bb{k}_t^{\TT}$. \citet{deltanet} showed that delta-style updates improve performance while also admitting an efficient chunkwise parallel form. Subsequent work in Gated Deltanet \citep{gdn}, Kimi Delta Attention \citep{kda}, Comba \citep{comba}, and RWKV-7 \citep{rwkv7} refined the DeltaNet recurrence with a decay factor $\bb{\alpha}_t$, resulting in significant improvements on recall-driven language tasks. 

% Concurrently, a more theoretical line of work characterizes these linear-time sequence models using the test-time regression (TTR) framework \citep{ttr,longhorn}: the recurrent state implements an online map from keys to values that approximately minimizes an in-context least-squares objective given by $||\mat{S} \bb{k}_t - \bb{v}_t||_2$. Notably, these delta-style models (e.g. DeltaNet, GDN, KDA, RWKV-7, Comba) can be interpreted as performing online gradient descent steps on the least squares objective, acting as first-order approximates to a convex, second-order optimization problem. From this optimization perspective, the existing delta-rule recurrences do not model the key Gram curvature that governs the regression geometry, and thus can be viewed as coarse approximations to the underlying least-squares solution. Mesalayer \citep{mesalayer} and its more efficient form MesaNet \citep{mesanet} are, to our knowledge, the only linear recurrences to explicitly model the key-curvature axis. However, a drawback with MesaNet is that it requires unrolling several steps of a conjugate gradient solver, which is prone to numerical instability. Our goal in this work is to address both of these issues by interpolating between the two extremes: retain the efficient, stable delta-rule computational structure while capturing meaningful, approximate curvature information. 

Concurrently, a more theoretical line of work characterizes these linear-time sequence models using the test-time regression (TTR) framework \citep{ttr,longhorn}: the recurrent state implements an online map from keys to values that approximately minimizes an in-context least-squares objective given by $||\mat{S} \bb{k}_t - \bb{v}_t||_2^2$. Notably, these delta-style models (e.g. DeltaNet, GDN, KDA, RWKV-7, Comba) can be interpreted as performing online gradient descent steps on the least squares objective, acting as first-order approximates to a convex, second-order optimization problem. From this perspective, the existing delta-rule recurrences do not account for the curvature that governs the regression geometry, and thus can be viewed as coarse approximations to the underlying least-squares solution. In contrast, Mesalayer \citep{mesalayer} and its more efficient form MesaNet \citep{mesanet} attempt to solve least squares exactly. The former does so via the direct inverse space recurrence, which suffers from a lack of parallelizability. The latter does so using conjugate gradient steps, which are prone to numerical instability. In this work, we study linear recurrences through the TTR lens and introduce \textit{approximate preconditioning} to their updates, allowing us to retain both efficiency and stability, while benefiting from second-order information.

Our main technical contributions can be summarized as follows: \textbf{(i)} 
We derive an equivalence between linear attention and DeltaNet in the \textit{exactly preconditioned} case. \textbf{(ii)} We leverage this insight to uncover efficient chunkwise parallel forms for \textbf{Preconditioned Linear Attention} (PLA) and \textbf{Preconditioned DeltaNet} (PDN) using an \textit{approximate, diagonal preconditioner}. \textbf{(iii)} We efficiently implement and train PDN (and its decayed variants), finding that they improve performance on synthetic tasks and language modeling at the 340M/1B scale.

% Drawing inspiration from diagonal preconditioners like Adam \citep{adam} used in optimization, we propose \textbf{Preconditioned Linear Attention} (PLA), a diagonal preconditioning of the write key that augments delta-rule updates with a tractable approximation to key space curvature. 

% we propose \textbf{Preconditioned Linear Attention}: an approximate regression-inspired recurrence that applies a diagonal preconditioner to the write key, capturing key-space curvature while staying within the efficient diagonal-plus-low-rank (DPLR) family of operators. \textbf{(ii)}: we derive a hardware-aware chunkwise parallel form of preconditioned delta-rule recurrences by augmenting the WY representation introduced in \cite{deltanet}, and instantiate it for DeltaNet, Gated DeltaNet \citep{gdn} and KDA \citep{kda} with Triton implementations. 

% \textbf{(iii)}: we do a thorough empirical validation and find that our preconditioned recurrences,

% which we term \textbf{preconditioned Gated DeltaNet} (PDN), \textbf{preconditioned Gated DeltaNet} (PGDN) and \textbf{preconditioned KDA} (PKDA), 

% , demonstrating the value of explicitly modeling the key geometry in linear attention. 

  % \item We introduce a \emph{(decay, gain, preconditioner, solve)} decomposition (DGPS) that taxonomizes the set of existing recurrent models, exposing the preconditioner axis as a previously unexplored lever in the design space. 

%% file: main_material/2_background.tex
\section{Background and preliminaries}
\label{sec:background}

\textbf{Notation.}
We index tokens by $t\in\{1,\dots,T\}$ and chunk the sequence into contiguous blocks of length $C$ (so chunk $i$ contains tokens $t\in\{iC+1,\dots,(i+1)C\}$, assuming $C\mid T$ for simplicity). For any per-token quantity $\bb{x}_t$, we denote its chunked matrix by $\mat{X}_{[i]}\in\mathbb{R}^{C\times d}$ whose rows are the vectors $\bb{x}_t^\top$ within chunk $i$ (with $d$ being the feature dimension of $\bb{x}_t$). In particular, $\mat{Q}_{[i]},\mat{K}_{[i]}\in\mathbb{R}^{C\times d_k}$ and $\mat{V}_{[i]},\mat{O}_{[i]}\in\mathbb{R}^{C\times d_v}$ stack the per-token queries, keys, values, and outputs in chunk $i$, respectively. We also use $\mat{S}_{[i]}:=\mat{S}_{iC}$ to denote the recurrent state at the end of chunk $i$.

\subsection{Chunkwise parallel linear attention}
\label{sec:background:linearattention}
Linear attention maintains a running key--value summary
$\mat{S}_t \in \mathbb{R}^{d_v \times d_k}$ via
$\mat{S}_t = \mat{S}_{t-1} + \bb{v}_t \bb{k}_t^\top$ and outputs $\bb{o}_t = \mat{S}_t \bb{q}_t$.
As written, this computation is sequential over $t$, which limits training throughput. For efficient training, \citet{gla} derive an exact chunkwise parallel form which allows for an interpolation between the recurrent and quadratic forms of the model. In particular, we can split the sequence into chunks of length $C$.
Then the chunkwise parallel form is given by 
{%
\setlength{\parskip}{0pt}
\setlength{\abovedisplayskip}{4pt}
\setlength{\belowdisplayskip}{4pt}
\setlength{\abovedisplayshortskip}{1pt}
\setlength{\belowdisplayshortskip}{1pt}
\[
\begin{aligned}
&\mat{S}_{[i+1]} = \mat{S}_{[i]} + \mat{V}_{[i]}^\top \mat{K}_{[i]},\\
&\mat{O}_{[i]} = \underbrace{\mat{Q}_{[i]}\mat{S}_{[i]}^\top}_{\mathclap{\text{inter-chunk output}}}
\;+\;
\underbrace{\Big(\big(\mat{Q}_{[i]}\mat{K}_{[i]}^\top\big)\odot \mat{M}\Big)\mat{V}_{[i]}}_{\mathclap{\text{intra-chunk output}}}.
\end{aligned}
\]
}%
where $\mat{M}\in\{0,1\}^{C\times C}$ is a causal mask. Note that the intra-chunk portion of the output can be computed in parallel across all chunks. Furthermore, the intra-chunk computations are matrix multiplications, which trigger the tensor cores on GPUs, improving wall-clock time despite performing more FLOPs. 

\subsection{Chunkwise parallel delta-rule}
\label{sec:background:deltarule}
DeltaNet replaces the write term $\bb{v}_t \bb{k}_t^\top$ in linear attention with the delta rule update \citep{deltanet} $(\bb{v}_t - \mat{S}_{t-1} \bb{k}_t) \bb{k}_t^\top$. As in linear attention, a chunkwise parallel form can be derived for DeltaNet, enabling efficient training \citep{deltanet}. Using the same notation as above:

{%
\setlength{\parskip}{0pt}
\setlength{\abovedisplayskip}{4pt}
\setlength{\belowdisplayskip}{4pt}
\setlength{\abovedisplayshortskip}{1pt}
\setlength{\belowdisplayshortskip}{1pt}
\setlength{\jot}{2pt}
\[
\begin{aligned}
&\mat{S}_{[i+1]} = \mat{S}_{[i]} + \big(\mat{U}_{[i]}-\mat{W}_{[i]}\mat{S}_{[i]}^\top\big)^\top \mat{K}_{[i]},\\
&\mat{O}_{[i]} = \mat{Q}_{[i]}\mat{S}_{[i]}^\top
+ \big(\mat{Q}_{[i]}\mat{K}_{[i]}^\top\odot \mat{M}\big)\big(\mat{U}_{[i]}-\mat{W}_{[i]}\mat{S}_{[i]}^\top\big),\\
&\mat{T}_{[i]} = \Big(\mat{I}+\tril\!\big(\diag(\bb{\beta}_{[i]})\,\mat{K}_{[i]}\mat{K}_{[i]}^\top,\,-1\big)\Big)^{-1}\diag(\bb{\beta}_{[i]}),\\
&\mat{W}_{[i]} = \mat{T}_{[i]}\mat{K}_{[i]},\qquad
 \mat{U}_{[i]} = \mat{T}_{[i]}\mat{V}_{[i]}.
\end{aligned}
\]
}%

Here $\diag(\cdot)$ maps a length-$C$ vector to a $C\times C$ diagonal matrix, and $\tril(\cdot,-1)$ keeps the strict lower-triangular part. To compute $\mat{U}_{[i]},\mat{W}_{[i]}$ efficiently, \citet{deltanet} use the UT transform. We refer the reader to the DeltaNet paper for the intuition behind this derivation.

\subsection{Test-time regression}
\label{sec:background:ttr}

Test-time regression is a framework for studying sequence modeling layers and views sequence mixers as performing regression inside their forward passes \citep{ttr}.
Given a prefix of key--value pairs $\{(\bb{k}_i,\bb{v}_i)\}_{i=1}^t$ and a query $\bb{q}_t$, the layer constructs a memory map
$m_t$ by minimizing a regression objective over the prefix, and then applies the resulting map to the query to produce an output.
TTR organizes this design space using three axes: (i) how associations are weighted, (ii) the function class used for the memory map,
and (iii) the optimization algorithm used as a solver at test time (i.e. the \textit{solve} axis).

\textbf{The solve axis.} To isolate (iii), for simplicity, we consider an unweighted objective and restrict the memory map to be linear,
$m(\bb{k})=\mat{S}\bb{k}$ with $\mat{S}\in\mathbb{R}^{d_v\times d_k}$.
Weights and nonlinear memory maps can be incorporated into the TTR framework with modifications, but we defer this discussion to Appendix \ref{app:theory:taxonomy}. Under these assumptions, the TTR objective reduces to the following least-squares problem: 
{%
\setlength{\parskip}{0pt}
\abovedisplayskip=4pt
\belowdisplayskip=4pt
\abovedisplayshortskip=1pt
\belowdisplayshortskip=1pt
\begin{equation}
\label{eq:ttr_obj}
\mat{S}_t \in \arg\min_{\mat{S}\in\mathbb{R}^{d_v\times d_k}} \ 
\frac12 \sum_{i=1}^t \|\mat{S}\bb{k}_i - \bb{v}_i\|_2^2
\;+\;
\frac{\lambda}{2}\|\mat{S}\|_F^2,
\end{equation}
}%
with layer output given by $\bb{o}_t = \mat{S}_t \bb{q}_t$. If we let $\mat{K}_t\in\mathbb{R}^{t\times d_k}$ and $\mat{V}_t\in\mathbb{R}^{t\times d_v}$ denote the matrices formed by stacking keys and values as rows, then the closed-form solution at time step $t$ is characterized by the normal equations
{%
\setlength{\parskip}{0pt}
\abovedisplayskip=4pt
\belowdisplayskip=4pt
\abovedisplayshortskip=0.5pt
\belowdisplayshortskip=0.5pt
\begin{equation}
\label{eq:normal_eqs}
\mat{S}_t \underbrace{\big(\mat{K}_t^{\top}\mat{K}_t +\lambda \mat{I}\big) }_{\mat{G}_t}
=
\underbrace{\mat{V}_t^{\top}\mat{K}_t}_{\mat{C}_t} .
\end{equation}
}%
where $\mat{G}_t\in\mathbb{R}^{d_k\times d_k}$ is the key Gram matrix and $\lambda \mat{I}$ denotes the ridge term. There are two common approaches to solving Eq.~\eqref{eq:ttr_obj} at test time. One can compute an \emph{offline}, batch solution by treating
$\mat{C}_t$ as an approximation of $\mat{S}_t$ in Eq.~\eqref{eq:normal_eqs}. Alternatively, one can maintain an \emph{online} solution by approximately updating $\mat{S}_t$ sequentially as new pairs arrive via online gradient descent. The choice of an offline versus online solve yields the recurrences for linear attention and DeltaNet, respectively. Specifically, DeltaNet can be interpreted as performing an approximate online solve to the least squares solution by doing a single SGD step on the per-token squared loss
$L_t(\mat{S})=\tfrac12\|\mat{S}\bb{k}_t-\bb{v}_t\|_2^2$ with step size given by $\beta_t$: 
{%
\setlength{\parskip}{0pt}
\abovedisplayskip=4pt
\belowdisplayskip=4pt
\abovedisplayshortskip=1pt
\belowdisplayshortskip=1pt
\[
\begin{aligned}
\mat{S}_t
&= \mat{S}_{t-1}-\beta_t \nabla_{\mat{S}}L_t(\mat{S}_{t-1})\\
&= \mat{S}_{t-1}-\beta_t(\mat{S}_{t-1}\bb{k}_t-\bb{v}_t)\bb{k}_t^\top .
\end{aligned}
\]
}%
Crucially, both choices only \emph{approximately} solve Eq.~\eqref{eq:normal_eqs}. In section \ref{sec:method}, we will show that by preconditioning these solves using the key Gram matrix $\mat{G}_t$, linear attention and DeltaNet both recover the exact solution to Eq.~\eqref{eq:normal_eqs}.

% We will use this intuition to derive an equivalence between linear attention and DeltaNet in Section \ref{sec:method}. 

% by showing that the offline and online solves constitute two equivalent realizations of the least squares solution when using a key Gram preconditioner. 

% This approach is explored in the Mesalayer paper \citep{mesalayer}.
% TTR emphasizes that many sequence layers can be interpreted as \emph{approximate solvers} for \eqref{eq:ttr_ls}. In particular, \emph{offline} (batch) optimization over the prefix yields the linear-attention class, while \emph{online} optimization via
% stochastic gradient updates yields the delta-rule class. We will use this perspective to relate the solve axis to the linear-attention recurrences in Section~\ref{sec:background:linearattention} and the delta-rule recurrences in Section~\ref{sec:background:deltarule}.

% Note that the offline and online solves constitute two equivalent realizations of the least squares solution. We will relate these two views to different classes of linear recurrences in Section \ref{sec:background:efficient_linear_recurrences}.

% \subsection{Efficient linear recurrences}
% \label{sec:background:efficient_linear_recurrences}

% As with linear attention, Delta-rule recurrences are often augmented with state decay (scalar decay yields Gated DeltaNet; diagonal decay yields KDA-style variants) \citep{gdn,kda}.

%% file: main_material/3_method.tex
\section{Preconditioning linear recurrences}
\label{sec:method}

\subsection{Unifying linear attention and DeltaNet with preconditioning}
\label{sec:method:atk-atq}

As discussed in Section~\ref{sec:background:ttr}, linear attention and DeltaNet can be viewed as approximate solvers for the least squares problem with loss $\sum_{i = 1}^t\|\mat{S}\bb{k}_i - \bb{v}_i\|_2^2$. Here, we show that each recurrence admits a simple modification that recovers the \emph{exact} least-squares solution. This reveals an instance where linear attention and DeltaNet are two realizations of the same operator. Define the key Gram and cross-covariance
{%
\small
\setlength{\parskip}{0pt}
\abovedisplayskip=4pt
\belowdisplayskip=4pt
\abovedisplayshortskip=1pt
\belowdisplayshortskip=1pt
\[
\mat{G}_t := \sum_{i=1}^t \bb{k}_i\bb{k}_i^\top +\lambda \mat{I}, 
\qquad
\mat{C}_t := \sum_{i=1}^t \bb{v}_i\bb{k}_i^\top,
\qquad
\mat{P}_t := \mat{G}_t^{-1},
\]
}%
and let $\mat{S}_t^\star := \mat{C}_t \mat{P}_t$ denote the least squares map derived in Equation \ref{eq:normal_eqs}.

\textbf{Exact solve via linear attention updates.}
Linear attention maintains $\mat{C}_t$ via the recurrence $\mat{C}_t=\mat{C}_{t-1}+\bb{v}_t\bb{k}_t^\top$.
The exact least squares readout is obtained by applying the inverse Gram to the query:
{%
\setlength{\parskip}{0pt}
\abovedisplayskip=4pt
\belowdisplayskip=4pt
\abovedisplayshortskip=1pt
\belowdisplayshortskip=1pt
\begin{equation}
\label{eq:atq_readout}
\begin{aligned}
\bb{o}_t
&=\underbrace{(\mat{C}_t\mat{P}_t)}_{\text{precondition $\mat{C}_t$}}\bb{q}_t
=\mat{C}_t\underbrace{(\mat{P}_t\bb{q}_t)}_{\text{``apply-to-query" (\textbf{ATQ})}}
=\mat{C}_t\,\tilde{\bb{q}}_t
\end{aligned}
\end{equation}
}%
This aligns with the observation in TTR that linear attention corresponds to an offline solve, in which the Gram is effectively treated as $\mat{I}$: applying the inverse key Gram recovers the normal equations. Another interpretation of the offline solve is a single step of full batch gradient descent (BGD). In particular, we can consider linear attention as doing a single step of offline BGD with no preconditioner; in contrast, we can interpret the exact solve shown in Eq.~\eqref{eq:atq_readout} as doing a single step of offline BGD with a perfect preconditioner (i.e. the inverse Gram) on the key dimension of the state \citep{ttr}.

In practice, we \emph{realize} the exact solution by applying the inverse Gram transform via pre-multiplication on the query (as opposed to post-multiplication on the state). We will refer to the application of an arbitrary preconditioner to the query vector by \textbf{ATQ} (apply-to-query) as shown in Eq.~\eqref{eq:atq_readout}. An instance of the ATQ formulation is shown in MesaNet \citep{mesanet}, which applies an iterative transform on the queries to solve the least squares objective via conjugate gradients.

% \paragraph{Exact solve via DeltaNet.}
% As discussed in Section \ref{sec:background:ttr}, DeltaNet performs an online SGD-style update on $L_t(\mat{S})=\tfrac12\|\mat{S}\bb{k}_t-\bb{v}_t\|_2^2$ with step size $\beta_t$.
% When we precondition this update using the inverse Gram, the delta-rule update becomes
% \begin{equation}
% \label{eq:atk_update}
% \mat{S}_t \;=\; \mat{S}_{t-1} + (\bb{v}_t-\mat{S}_{t-1}\bb{k}_t)\,\bb{g}_t^\top,
% \qquad
% \bb{g}_t \;:=\; \frac{\mat{P}_{t-1}\bb{k}_t}{1+\bb{k}_t^\top \mat{P}_{t-1}\bb{k}_t},
% \end{equation}
% with output $\bb{o}_t=\mat{S}_t\bb{q}_t$. Here $\bb{g}_t$ is the normalized, preconditioned write key.

\textbf{Exact solve via DeltaNet updates.}
DeltaNet performs an online update of the form \citep{deltanet}
\begin{equation}
\small
\mat{S}_t
=\mat{S}_{t-1}-\beta_t(\mat{S}_{t-1}\underbrace{\bb{k}_t}_{\text{``read key"}}-\bb{v}_t)\underbrace{\bb{k}_t^\top}_{\text{``write key"}},
\quad
\bb{o}_t=\mat{S}_t\bb{q}_t.
\end{equation}
\noindent which, as mentioned in Section~\ref{sec:background:deltarule}, can be interpreted as a single SGD step on the per-token squared loss $L_t(\mat{S})=\tfrac12\|\mat{S}\bb{k}_t-\bb{v}_t\|_2^2$. We now show that DeltaNet also admits an exact least squares realization when the key Gram inverse is incorporated into the recurrence. Define
{%
\setlength{\parskip}{0pt}
\abovedisplayskip=4pt
\belowdisplayskip=4pt
\abovedisplayshortskip=1pt
\belowdisplayshortskip=1pt
\begin{align*}
\mat{P}^{\mathrm{norm}}_{t-1} \;:=\; \frac{\mat{P}_{t-1}}{1+\bb{k}_t^\top \mat{P}_{t-1}\bb{k}_t},
\qquad
\tilde{\bb{k}}_t \;:=\; \mat{P}^{\mathrm{norm}}_{t-1}\bb{k}_t 
\end{align*}
}%
Then the modified DeltaNet update can be written as a preconditioned SGD step:
\begin{equation}
\label{eq:atk_update}
\begin{aligned}
\mat{S}_t
&=\mat{S}_{t-1}
-\Big(\underbrace{(\mat{S}_{t-1}\bb{k}_t-\bb{v}_t)\bb{k}_t^\top}_{\nabla_{\mat{S}} L_t(\mat{S}_{t-1})}\Big)\mat{P}^{\mathrm{norm}}_{t-1}\\
&=\mat{S}_{t-1}
+(\bb{v}_t-\mat{S}_{t-1}\bb{k}_t)\,
\underbrace{\big(\mat{P}^{\mathrm{norm}}_{t-1}\bb{k}_t\big)^\top}_{\text{``apply-to-key"(\textbf{ATK})}}\\
&=\mat{S}_{t-1}+(\bb{v}_t-\mat{S}_{t-1}\bb{k}_t)\tilde{\bb{k}}_t^\top,
\qquad
\bb{o}_t=\mat{S}_t\bb{q}_t.
\end{aligned}
\end{equation}
We claim that this preconditioned DeltaNet recurrence recovers the exact least squares solution (see Theorem~\ref{thm:atk_atq_equiv}). In practice, we realize this by introducing a second key, the write key, $\tilde{\bb{k}}_t$, in addition to the standard read key, $\bb{k}_t$. This differs from the standard case in DeltaNet, where both keys are the same. We refer to the application of an arbitrary preconditioner to the write key as \textbf{ATK} (apply-to-key). 

We formalize this equivalence between preconditioned linear attention (\textbf{PLA}) and preconditioned DeltaNet (\textbf{PDN}) in the case where the preconditioner is the inverse key Gram as follows:

\begin{theorem}
\label{thm:atk_atq_equiv}
Let $\mat{C}_t,\mat{G}_t,\mat{P}_t$ be defined as above and initialize $\mat{S}_0=\bb{0}$.
Then the PDN recursion Eq.~\eqref{eq:atk_update} satisfies $\mat{S}_t=\mat{C}_t\mat{P}_t$ for all $t$.
Consequently, for all queries $\bb{q}_t$, the outputs match:
\[
\bb{o}_t^{\mathrm{PDN}}=\mat{S}_t\bb{q}_t=(\mat{C}_t\mat{P}_t)\bb{q}_t=\mat{C}_t(\mat{P}_t\bb{q}_t)=\bb{o}_t^{\mathrm{PLA}}.
\]
\end{theorem}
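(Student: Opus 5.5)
We argue by induction on $t$, using the Sherman--Morrison formula to show that the normalized preconditioned write key $\tilde{\bb{k}}_t$ equals $\mat{P}_t\bb{k}_t$. With that identity in hand, the PDN update collapses to the standard recursive-least-squares (RLS) update for $\mat{C}_t\mat{P}_t$.

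\textbf{Base case.} At $t=0$ we have $\mat{C}_0=\bb{0}$ and $\mat{G}_0=\lambda\mat{I}$, so $\mat{P}_0=\lambda^{-1}\mat{I}$ is well defined for $\lambda>0$. Hence $\mat{C}_0\mat{P}_0=\bb{0}=\mat{S}_0$. We assume $\lambda>0$ throughout so that every $\mat{G}_t$ is invertible.

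\textbf{Key identity.} Since $\mat{G}_t=\mat{G}_{t-1}+\bb{k}_t\bb{k}_t^\top$, Sherman--Morrison gives
\[
\mat{P}_t=\mat{P}_{t-1}-\frac{\mat{P}_{t-1}\bb{k}_t\bb{k}_t^\top\mat{P}_{t-1}}{1+\bb{k}_t^\top\mat{P}_{t-1}\bb{k}_t}.
\]
The denominator is positive because $\mat{P}_{t-1}$ is positive definite. Multiplying on the right by $\bb{k}_t$ and writing $s=\bb{k}_t^\top\mat{P}_{t-1}\bb{k}_t$, we get
\[
\mat{P}_t\bb{k}_t=\mat{P}_{t-1}\bb{k}_t\Bigl(1-\tfrac{s}{1+s}\Bigr)=\frac{\mat{P}_{t-1}\bb{k}_t}{1+s}=\tilde{\bb{k}}_t.
\]
Using the symmetry of $\mat{P}_{t-1}$, the rank-one correction can therefore be rewritten as $\mat{P}_t=\mat{P}_{t-1}-\tilde{\bb{k}}_t\bb{k}_t^\top\mat{P}_{t-1}$, or equivalently
\[
\mat{P}_{t-1}=\mat{P}_t+\tilde{\bb{k}}_t\bb{k}_t^\top\mat{P}_{t-1}.
\]

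\textbf{Inductive step.} Assume $\mat{S}_{t-1}=\mat{C}_{t-1}\mat{P}_{t-1}$. We must show that
\[
\mat{C}_{t-1}\mat{P}_{t-1}+(\bb{v}_t-\mat{C}_{t-1}\mat{P}_{t-1}\bb{k}_t)\tilde{\bb{k}}_t^\top
\]
equals
\[
\mat{C}_t\mat{P}_t=\mat{C}_{t-1}\mat{P}_t+\bb{v}_t\bb{k}_t^\top\mat{P}_t .
\]

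The $\bb{v}_t$ terms match because $\bb{k}_t^\top\mat{P}_t=(\mat{P}_t\bb{k}_t)^\top=\tilde{\bb{k}}_t^\top$, using the symmetry of $\mat{P}_t$.

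For the remaining terms, we need
\[
\mat{C}_{t-1}\bigl(\mat{P}_{t-1}-\mat{P}_{t-1}\bb{k}_t\tilde{\bb{k}}_t^\top\bigr)=\mat{C}_{t-1}\mat{P}_t .
\]
This follows by transposing the rank-one identity above. Because $\mat{P}_{t-1}$ and $\mat{P}_t$ are symmetric, transposition gives
\[
\mat{P}_t=\mat{P}_{t-1}-\mat{P}_{t-1}\bb{k}_t\tilde{\bb{k}}_t^\top.
\]

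This completes the induction, so $\mat{S}_t=\mat{C}_t\mat{P}_t$ for all $t$.

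\textbf{Output equality.} The output statement then follows by associativity of matrix multiplication: $\mat{S}_t\bb{q}_t=(\mat{C}_t\mat{P}_t)\bb{q}_t=\mat{C}_t(\mat{P}_t\bb{q}_t)$.

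The main point requiring care is the key identity $\tilde{\bb{k}}_t=\mat{P}_t\bb{k}_t$: the normalization in $\mat{P}^{\mathrm{norm}}_{t-1}$ must match the Sherman--Morrison denominator exactly. The symmetry of $\mat{P}_{t-1}$ and $\mat{P}_t$ is also essential when converting between the left and right forms of the rank-one update. Beyond these, the argument is the standard RLS derivation.
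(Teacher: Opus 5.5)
Your proof is correct and follows essentially the same route as the paper's: induction on $t$, with Sherman--Morrison applied to $\mat{P}_t=(\mat{G}_{t-1}+\bb{k}_t\bb{k}_t^\top)^{-1}$. The differences are only organizational. You establish $\tilde{\bb{k}}_t=\mat{P}_t\bb{k}_t$ up front (the paper records this only as a remark after its proof) and argue forward from the PDN update, whereas the paper expands $\mat{C}_{t+1}\mat{P}_{t+1}$ directly and regroups it into the PDN form.
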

The proof (see Appendix~\ref{app:theory:atk-atq-equivalence}) follows from Sherman--Morrison applied to $\mat{P}_t=(\mat{G}_{t-1}+\bb{k}_t\bb{k}_t^\top)^{-1}$.

% Overall, linear attention and DeltaNet correspond to offline (batch) and online (streaming) solvers in the TTR framework, and their solutions
% coincide when the key Gram is used as the preconditioner, realized either at read time (ATQ) or write time (ATK).

\subsection{Preconditioned chunkwise parallel forms} 
Recall that linear attention and DeltaNet admit chunkwise parallel forms that enable efficient training. Suppose we have access to the per-token within-chunk preconditioner $\mat{P}_{[i]}$; then we can construct a chunkwise parallel form for PLA as follows. For a complete derivation, refer to Appendix~\ref{app:chunkwise}. 
{%
\setlength{\abovedisplayskip}{4pt}
\setlength{\belowdisplayskip}{4pt}
\setlength{\abovedisplayshortskip}{1pt}
\setlength{\belowdisplayshortskip}{1pt}
\[
\begin{aligned}
&\mat{S}_{[i+1]} \;=\; \mat{S}_{[i]} + \mat{V}_{[i]}^\top \mat{K}_{[i]},\\
&\mat{O}_{[i]} \;=\; \underbrace{{\color{blue}\tilde{\mat{Q}}}_{[i]}\mat{S}_{[i]}^\top}_{\mathclap{\text{inter-chunk output}}}
\;+\;
\underbrace{\Big(\big({\color{blue}\tilde{\mat{Q}}}_{[i]}\mat{K}_{[i]}^\top\big)\odot \mat{M}\Big)\mat{V}_{[i]}}_{\mathclap{\text{intra-chunk output}}}.
\end{aligned}
\]
}%
Here we leverage the ATQ realization described earlier to transform $\mat{Q}_{[i]} \to \mat{Q}_{[i]} \mat{P}_{[i]}^{\top} =\color{blue}\tilde{\mat{Q}}_{[i]}$. 

Analogously, for PDN, we can use the ATK realization on the write key to construct the following chunkwise parallel form. Again, for a complete derivation, refer to Appendix~\ref{app:chunkwise}.
{%
\setlength{\parskip}{0pt}
\setlength{\abovedisplayskip}{4pt}
\setlength{\belowdisplayskip}{4pt}
\setlength{\abovedisplayshortskip}{1pt}
\setlength{\belowdisplayshortskip}{1pt}
\setlength{\jot}{2pt}
\begin{equation}
\label{eq:chunkwise_atk_form}
\begin{aligned}
&\mat{S}_{[i+1]} = \mat{S}_{[i]} + \big(\mat{U}_{[i]}-\mat{W}_{[i]}\mat{S}_{[i]}^\top\big)^\top
{\color{blue}\tilde{\mat{K}}_{[i]}},\\
&\mat{O}_{[i]} = \mat{Q}_{[i]}\mat{S}_{[i]}^\top
+ \big(\mat{Q}_{[i]}{\color{blue}\tilde{\mat{K}}_{[i]}^\top}\odot \mat{M}\big)\big(\mat{U}_{[i]}-\mat{W}_{[i]}\mat{S}_{[i]}^\top\big),\\[2pt]
&\mat{T}_{[i]} =
\Big(\mat{I}+\tril\!\big(\diag(\bb{\beta}_{[i]})\,\mat{K}_{[i]}{\color{blue}\tilde{\mat{K}}_{[i]}^\top},\,-1\big)\Big)^{-1}\diag(\bb{\beta}_{[i]}),\\
&\mat{W}_{[i]}=\mat{T}_{[i]}\mat{K}_{[i]},\qquad
 \mat{U}_{[i]}=\mat{T}_{[i]}\mat{V}_{[i]}.
\end{aligned}
\end{equation}
}%
% So, if we let $\mat{P}_{[i]}$ be the per-chunk inverse Gram (or normalized per-chunk inverse Gram for PDN), then we can use either PLA or PDN to compute the exact least squares solution. Unfortunately, this requires the computation of 
% $\mat{P}_{[i]}$ itself to be chunkwise parallelizable. However, this is not possible as computing $\mat{P}_i$ per time-step corresponds to a recurrence on $\mat{P}_i = (\mat{G}_i)^{-1}$ which is non-linear in $(\mat{G}_i)^{-1}$ by Sherman-Morrison (see Appendix \ref{app:theory} for details). 
So, if we let $\mat{P}_{[i]} = (\mat{G}_{[i]})^{-1}$ denote the per-chunk inverse Gram (or normalized per-chunk inverse Gram for PDN), then PLA/PDN recover the exact least squares map, provided $\mat{P}_{[i]}$ is available. Unfortunately, this requires computing $\mat{P}_{[i]}$ without introducing a sequential dependence on all previous time steps. Exact inverse Gram updates follow the Sherman-Morrison recursion which depends on $\mat{P}_{t-1}$ in a way that cannot be summarized into a chunk-local affine transition, precluding a chunkwise parallel form from being constructed (see Appendix~\ref{app:theory:mesa} for more details). Thus, Mesalayer \citep{mesalayer}, which employs the naive, sequential form of the recurrence, is quite inefficient.

One way to address this is shown in MesaNet \citep{mesanet} which iterates upon Mesalayer by using conjugate gradients (CG) to iteratively approximate the key Gram. This avoids transforming the problem into inverse space, so the recurrence remains linear, enabling a chunkwise parallel form. The drawback of this approach is that it requires several steps of CG, which is prone to numerical instabilities during training (see Appendix~\ref{app:mesa_train} for further discussion). 

\subsection{Approximating the key Gram}
As discussed above, solving for the least squares solution exactly in Mesalayer and MesaNet presents either speed or stability issues. On the other end of the spectrum, we have linear attention and DeltaNet which entirely ignore the key Gram curvature, resulting in efficient, stable algorithms at the cost of entirely ignoring second-order information. For our approach, we attempt to interpolate between these two extremes and approximate second-order information by considering a diagonal approximation to the key Gram as follows: 
{%
\setlength{\parskip}{0pt}
\abovedisplayskip=4pt
\belowdisplayskip=4pt
\abovedisplayshortskip=1pt
\belowdisplayshortskip=1pt
\begin{equation}
\label{eq:diag_gram}
\mat{A}_t \;=\; \mat{A}_{t-1} + \bb{k}_t \odot \bb{k}_t.
\end{equation}
}%
where $\mat{A}_t\in\mathbb{R}^{d_k}$ stores the coordinate-wise second moments of the keys. This induces the diagonal preconditioner $\mat{P}_t =\; \mathrm{Diag}(\mat{A}_t)^{-1}$. Notably, unlike the exact Gram, its diagonal approximation admits a chunkwise parallel form because its state update is an additive prefix-sum (applied elementwise across dimensions) and inversion is elementwise. Define the shorthand
% This means that each chunk can be summarized by chunk-local sums of $k^{\odot 2}$ (i.e. elementwise square) and expanded to per time step prefix states via a causal mask representing an intra-chunk cumulative sum (as is done in linear attention). Define the shorthand
{%
\setlength{\parskip}{0pt}
\abovedisplayskip=8pt
\belowdisplayskip=8pt
\abovedisplayshortskip=3pt
\belowdisplayshortskip=3pt
\[
\mat{K}^{(2)}_{[i]} := \mat{K}_{[i]}\odot\mat{K}_{[i]},
\qquad
\mat{M}_{<}:=\mat{M}\mat{Z},
\]
}%
where $\mat{M}$ is the causal mask and $\mat{Z}$ is the one-step shift (so $\mat{M}_{<}$ is the strict-causal mask). Then we have that

{%
\setlength{\parskip}{0pt}
\setlength{\abovedisplayskip}{4pt}
\setlength{\belowdisplayskip}{4pt}
\setlength{\abovedisplayshortskip}{1pt}
\setlength{\belowdisplayshortskip}{1pt}
\setlength{\jot}{2pt}
\begin{equation}
\label{eq:chunkwise_A_tildeK_tildeQ}
\resizebox{\columnwidth}{!}{$
\begin{aligned}
&\mat{A}_{iC} = \mat{A}_{(i-1)C} + \bb{1}^\top \mat{K}^{(2)}_{[i]},\quad
 \mat{A}^{\mathrm{pre}}_{[i]} = \mat{A}_{(i-1)C} + (\mat{M}_{<} \mat{K}^{(2)}_{[i]}) + \underbrace{\bb{\Lambda}}_{\text{ridge}},\\
&{\color{blue}\tilde{\mat{K}}_{[i]}}
=
\underbrace{\diag\!\Big(\big(\bb{1}+ (((\mat{A}^{\mathrm{pre}}_{[i]})^{-1}\odot \mat{K}^{(2)}_{[i]})\bb{1})\big)^{-1}\Big)}_{\text{normalization term}}
\Big((\mat{A}^{\mathrm{pre}}_{[i]})^{-1}\odot \mat{K}_{[i]}\Big),\\
&{\color{blue}\tilde{\mat{Q}}_{[i]}}
=
\Big(\mat{A}_{(i-1)C} + (\mat{M} \mat{K}^{(2)}_{[i]}) + \bb{\Lambda}\Big)^{-1}\odot \mat{Q}_{[i]}.
\end{aligned}
$}
\end{equation}
}%
where $\mat{A}_{iC}$ are chunk-boundary diagonal states in $\mathbb{R}^{d_k}$. Note that this enables us to construct both the preconditioned keys and preconditioned queries efficiently. And using the ATK and ATQ realizations, we can instantiate diagonal PGDN and diagonal PLA, respectively, by composing the chunkwise parallel form of the diagonal preconditioner recurrence with the chunkwise parallel form of the main recurrence. Importantly, once we approximate the key gram, Theorem~\ref{thm:atk_atq_equiv} no longer holds: namely, the ATK realization of DeltaNet is not equal to the ATQ realization of linear attention (see Appendix~\ref{app:theory:atk-atq-not-equal} for details). Because of this, not only is the use of a preconditioner a lever in the recurrence design space, but in the case of approximate ones, it also matters where you apply it. 

\begin{figure}[H]
\centering
\includegraphics[scale=0.35]{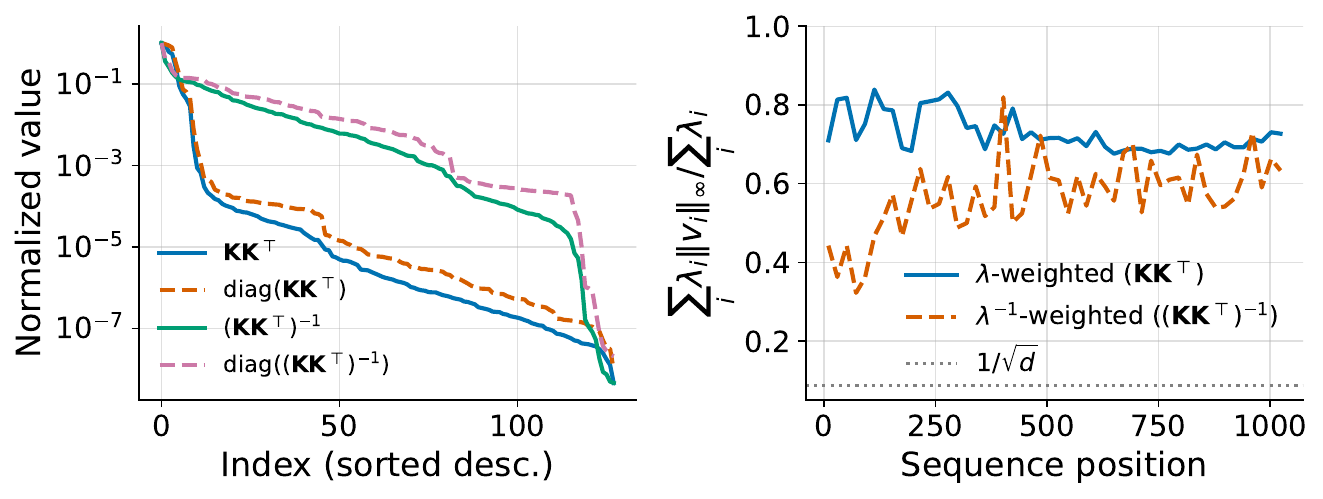}
\caption{In these plots, we analyze the key Gram matrix in an arbitrary layer of a pretrained DeltaNet-340M model from HuggingFace. \textbf{Left.} Plot of sorted eigenspectrum in various instances of the key Gram. \textbf{Right.} Eigenvalue-weighted average of the $\ell_{\infty}$ norm of the key Gram eigenvectors. A value of $1$ indicates perfect axis-alignment and a value of $\frac{1}{\sqrt{d}}$ indicates perfect misalignment.}
\label{fig:anisotropy-plot}
\end{figure}

To understand how well a diagonal approximation captures the exact key Gram, we analyze a pretrained DeltaNet model (i.e. a linear recurrence without preconditioning) as shown in Figure~\ref{fig:anisotropy-plot}. In particular, we observe two things that lend support to a diagonal approximation: (i) the eigenspectrum of the key Gram and diagonal key Gram are highly corrrelated and (ii) the eigenvectors of the key Gram are fairly axis-aligned. 

From the perspective of optimization, a diagonal approximation is commonly used in optimizers like Adam \citep{adam}, which analogously accumulates only the diagonal of the second-order term. Given the connection between these recurrences and gradient descent, drawing inspiration from preconditioners used in optimization is a natural extension of our work. In our case, a diagonal preconditioner corrects for certain directions in key space being overly amplified or dampened, promoting the learning of a state whose recurrent dynamics are well-conditioned. In Section~\ref{sec:taxonomy} and Appendix~\ref{app:relatedwork}, we also discuss how preconditioning fits into the online-convex programming view presented in \cite{longhorn}.

\subsection{Generalizing to recurrences with decay}
So far, we have only considered linear attention and DeltaNet (and their preconditioned variants). Drawing inspiration from prior work that introduces a decay term to these recurrences, \citep{mamba2, gdn}, we analogously add one to ours. Note that linear attention with scalar decay $\alpha_t$ corresponds to Mamba-2 and linear attention with diagonal decay $\bb{\alpha}_t$ corresponds to GLA. Similarly, DeltaNet with scalar decay corresponds to Gated DeltaNet (GDN), and DeltaNet with diagonal decay corresponds to Kimi Delta Attention (KDA). Fortunately, the preconditioner axis we have introduced into the design space is orthogonal to the decay axis. As a result, we can perform the ATQ transform  $\mat{Q}_{[i]} \to \mat{Q}_{[i]} \mat{P}_{[i]}^{\top}  =\color{blue}\tilde{\mat{Q}}_{[i]}$ and substitute this into the chunkwise parallel forms of Mamba-2 and GLA to form preconditioned Mamba-2 and preconditioned GLA, respectively. Analogously, we can perform the ATK transform  $\mat{K}_{[i]} \to  \mat{K}_{[i]} \mat{P}_{[i]}^{\top}  =\color{blue}\tilde{\mat{K}}_{[i]}$ and substitute this into the chunkwise parallel forms of GDN and KDA to form preconditioned GDN (\textbf{PGDN}) and preconditioned KDA (\textbf{PKDA}), respectively. See Appendix \ref{app:chunkwise} for details. 

Given that in the case of an approximate preconditioner, linear attention and DeltaNet (as well as their decayed forms) are no longer equivalent, a natural question to ask is which unpreconditioned base recurrence is a better approximation of the exact least squares recurrence. We answer this question with the following result (see Appendix~\ref{app:theory:atk-better-atq} for proof): 
\begin{theorem}
\label{thm:atk_better_atq}
Assume the keys $\{\bb{k}_t\}$ and queries $\{\bb{q}_t\}$ are i.i.d.\ and distributed uniformly on the unit sphere, and that $\{\bb{q}_t\}$ is independent of $\{\bb{k}_t\}$.
Let $\bb{o}_t^{\mathrm{LA}},\,\bb{o}_t^{\mathrm{DN}},\,\bb{o}_t^\star$ denote the outputs of linear attention, DeltaNet, and the exact least-squares recurrence at time $t$, respectively, and assume $\mat{S}_0=\bb{0}$.
Then there exists $t_0=t_0(d,\beta)$ such that for all $t\ge t_0$,
\[
\mathbb{E}\,\big\|\bb{o}_t^{\mathrm{DN}}-\bb{o}_t^\star\big\|_2^2
\;\le\;
\mathbb{E}\,\big\|\bb{o}_t^{\mathrm{LA}}-\bb{o}_t^\star\big\|_2^2,
\]
\end{theorem}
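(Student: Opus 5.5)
Since the queries are independent of the keys and isotropic on the unit sphere, $\mathbb{E}[\bb{q}\bb{q}^\top]=\tfrac1d\mat{I}$, and for any key-measurable matrix $\mat{S}$ we get $\mathbb{E}\|\mat{S}\bb{q}_t-\mat{S}^\star_t\bb{q}_t\|_2^2=\tfrac1d\,\mathbb{E}\|\mat{S}-\mat{S}^\star_t\|_F^2$. The comparison therefore reduces to the expected squared Frobenius distance of each state, $\mat{S}^{\mathrm{LA}}_t=\mat{C}_t$ and $\mat{S}^{\mathrm{DN}}_t$, from $\mat{S}^\star_t=\mat{C}_t\mat{P}_t$ (Theorem~\ref{thm:atk_atq_equiv}).

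The plan is to show the linear-attention error grows linearly in $t$ while the DeltaNet error stays bounded. The values are not specified in the statement, so I will take $\bb{v}_i$ fixed (or i.i.d. with bounded second moment, independent of the keys). The comparison can then be made conditionally on the values, treating them as arbitrary bounded vectors.

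\textbf{Linear attention.} Write $\mat{C}_t-\mat{C}_t\mat{P}_t=\mat{C}_t(\mat{I}-\mat{P}_t)$.
\begin{itemize}
\item By the law of large numbers, $\mat{G}_t=\sum_i\bb{k}_i\bb{k}_i^\top+\lambda\mat{I}\approx (t/d)\mat{I}$, so $\mat{P}_t\approx (d/t)\mat{I}\to 0$ and $\mat{I}-\mat{P}_t\to\mat{I}$.
\item The cross-term satisfies $\mathbb{E}\|\mat{C}_t\|_F^2=\sum_{i,j}\bb{v}_i^\top\bb{v}_j\,\mathbb{E}[\bb{k}_i^\top\bb{k}_j]=\sum_i\|\bb{v}_i\|^2$, using $\mathbb{E}[\bb{k}_i^\top\bb{k}_j]=0$ for $i\ne j$. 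This is $\Theta(t)$.
\item Concentration of $\mat{G}_t/t$ (matrix Bernstein, or just the fact $\lambda_{\min}(\mat{G}_t)\ge\lambda$ together with high-probability bounds) then gives $\mathbb{E}\|\mat{C}_t(\mat{I}-\mat{P}_t)\|_F^2\ge c\,t$ for large $t$.
\end{itemize}

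\textbf{DeltaNet.} The state satisfies $\mat{S}_t=\mat{S}_{t-1}(\mat{I}-\beta\bb{k}_t\bb{k}_t^\top)+\beta\bb{v}_t\bb{k}_t^\top$. For $0<\beta<2$ and unit keys, each factor $\mat{I}-\beta\bb{k}\bb{k}^\top$ is a contraction. Its second moment gives $\mathbb{E}\|\mat{X}(\mat{I}-\beta\bb{k}\bb{k}^\top)\|_F^2=(1-\beta(2-\beta)/d)\|\mat{X}\|_F^2$. Hence $\mathbb{E}\|\mat{S}_t\|_F^2\le \rho\,\mathbb{E}\|\mat{S}_{t-1}\|_F^2+\beta^2\|\bb{v}_t\|^2+(\text{cross term})$, with $\rho=1-\beta(2-\beta)/d<1$. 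The cross term is bounded via Cauchy--Schwarz, which gives $\sup_t\mathbb{E}\|\mat{S}^{\mathrm{DN}}_t\|_F^2\le B(d,\beta)$.

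For the optimum, $\|\mat{S}^\star_t\|_F\le\|\mat{C}_t\|_F\,\|\mat{P}_t\|_{op}$, and $\|\mat{P}_t\|_{op}=O(d/t)$ with high probability. So $\mathbb{E}\|\mat{S}^\star_t\|_F^2=O(d^2/t)$, with the low-probability event controlled using $\|\mat{P}_t\|\le 1/\lambda$. Therefore $\mathbb{E}\|\mat{S}^{\mathrm{DN}}_t-\mat{S}^\star_t\|_F^2\le 2B+O(1/t)$, which is bounded.

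\textbf{Conclusion.} The linear-attention error grows like $ct$ while the DeltaNet error stays below a constant, so there is a $t_0(d,\beta)$ beyond which the claimed inequality holds.

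\textbf{Main obstacle.} The hardest step is the lower bound for linear attention. It requires controlling $\mat{C}_t(\mat{I}-\mat{P}_t)$ even though $\mat{P}_t$ depends on the same keys as $\mat{C}_t$. The first approach to try is to condition on the high-probability event $\{\lambda_{\min}(\mat{G}_t)\ge t/(2d)\}$. On that event $\|\mat{I}-\mat{P}_t\|$ is bounded below, i.e.\ its smallest singular value is at least $1-2d/t$, which gives $\|\mat{C}_t(\mat{I}-\mat{P}_t)\|_F\ge(1-2d/t)\|\mat{C}_t\|_F$. The remaining task is to show that $\|\mat{C}_t\|_F^2$ restricted to this event still carries $\Theta(t)$ expectation, using a fourth-moment bound on $\|\mat{C}_t\|_F^2$. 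I also expect the constants to require a nondegeneracy condition on the values, such as $\tfrac1t\sum_i\|\bb{v}_i\|^2$ being bounded below, which would need to be stated as an assumption.
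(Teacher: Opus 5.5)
Your argument holds under the hypotheses you add, but it proves the statement in a different regime and by a different mechanism than the paper.

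\textbf{The paper's route.} The statement says nothing about the values, so the paper supplies a realizable, noiseless model $\bb{v}_t=\mat{S}^\star\bb{k}_t$ for a fixed $\mat{S}^\star$, and sets $\lambda=0$.
\begin{itemize}
\item Then $\mat{C}_t=\mat{S}^\star\mat{G}_t$, so the least-squares map is exactly $\mat{S}^\star$ for $t\ge d$. The coupling between $\mat{C}_t$ and $\mat{P}_t$ that you flag as your main obstacle simply disappears.
\item The linear-attention error becomes $\frac1d\|\mat{S}^\star(\mat{G}_t-\mat{I})\|_F^2$. Its expectation follows in closed form from $\mathbb{E}(\bb{k}_i^\top\bb{k}_j)^2=1/d$ and equals $\frac{\|\mat{S}^\star\|_F^2}{d}\bigl(1-\frac td+\frac{t(t-1)}{d^2}\bigr)$.
\item The DeltaNet error $\bb{\Delta}_t=\mat{S}^\star-\mat{S}_t$ obeys $\bb{\Delta}_t=\bb{\Delta}_{t-1}(\mat{I}-\beta_t\bb{k}_t\bb{k}_t^\top)$. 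This is a pathwise Frobenius contraction, so the DeltaNet output error is at most $\|\mat{S}^\star\|_F^2/d$.
\end{itemize}
This gives the explicit, scale-free threshold $t_0=d+1$ with no concentration inequality.

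\textbf{Your route.} You take values independent of the keys. In that pure-noise regime $\mat{S}^\star_t=\mat{C}_t\mat{P}_t\to\bb{0}$, so the claim reduces to the linear-attention state norm growing like $t$ while the DeltaNet state stays bounded.
\begin{itemize}
\item It buys the ridge $\lambda>0$ and arbitrary bounded, non-realizable values.
\item The price is matrix Chernoff, a fourth-moment bound on $\|\mat{C}_t\|_F^2$, and a nondegeneracy assumption on the values.
\item Your $t_0$ also depends on the value statistics, and on $\lambda$ through $\|\mat{P}_t\|\le 1/\lambda$ on the bad event, not only on $(d,\beta)$ as the statement asserts.
\end{itemize}

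\textbf{Neither argument covers the other's setting.} Your identity $\mathbb{E}\|\mat{C}_t\|_F^2=\sum_i\|\bb{v}_i\|^2$ relies on value--key independence and fails when $\bb{v}_t=\mat{S}^\star\bb{k}_t$. The paper's reading is arguably the intended one, since ``better approximates least squares'' is informative only when the regression target is nontrivial.

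\textbf{A simplification on your side.} Under independence, the DeltaNet cross term equals $2\beta(1-\beta)\,\bb{v}_t^\top\mat{S}_{t-1}\bb{k}_t$, which has zero mean because $\mathbb{E}\,\bb{k}_t=\bb{0}$. The second-moment recursion is therefore an exact equality, giving $\mathbb{E}\|\mat{S}^{\mathrm{DN}}_t\|_F^2\le\frac{\beta d}{2-\beta}\sup_i\|\bb{v}_i\|^2$ with no Cauchy--Schwarz needed.
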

Additionally, we find that the ATK transform on the DeltaNet recurrence empirically outperforms the ATQ transform on linear attention (see Appendix~\ref{app:ablation-narrative}). As such, for our empirical study, we will restrict our scope to the PDN, PGDN, and PKDA recurrences, and leave further exploration of the ATQ axis to future work (see Section~\ref{sec:taxonomy} for additional discussion). 

\begin{figure}[t]
\centering
\includegraphics[scale=0.4]{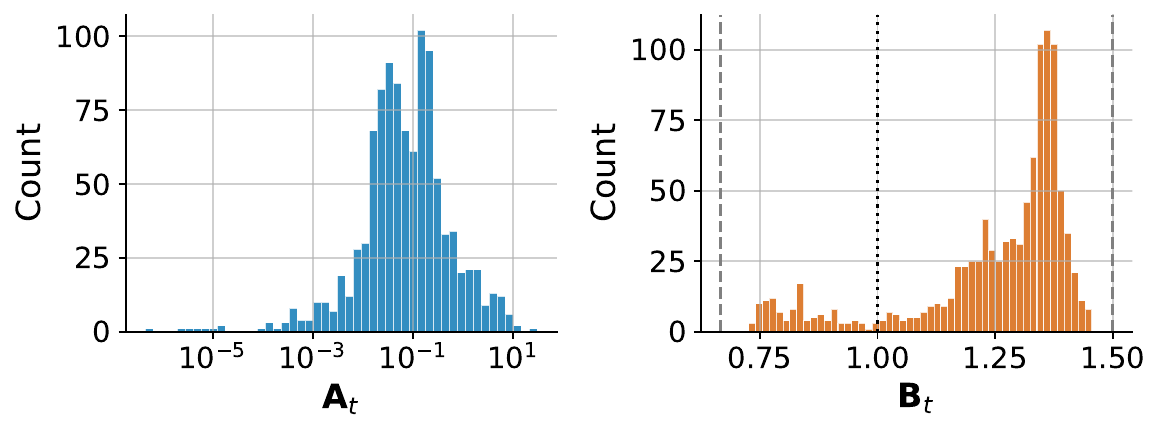}
\vspace{-0.1in}
\caption{Distributions of the diagonal Gram preconditioner before (\textbf{left}) and after (\textbf{right}) squashing where $x = 1.5$ in $\mat{B}_t$. Note that the left plot shows distribution in log-spaced buckets, demonstrating the log-normality of $\mat{A}_t$, which justifies the log-space parameterization in \eqref{eq:precond_end2end}.}
\label{fig:At-dist}
\end{figure}

\subsection{Improving the trainability of the preconditioner}
In practice, we find that using the raw form of the diagonal preconditioner recurrence in Eq.~\eqref{eq:diag_gram} is suboptimal from the perspective of trainability. In particular, it has the following drawbacks: \textbf{(i)} The empirical distribution of $\mat{A}_t$ is log-normal as it represents the accumulation of a positive semi-definite matrix, which leads to heavy right tails as shown in Figure~\ref{fig:At-dist}. \textbf{(ii)} It forces us to align with the Sherman-Morrison form, namely the use of the normalization term from Eq.~\eqref{eq:chunkwise_A_tildeK_tildeQ} as well as the previous time-step $\mat{A}_{t - 1}$ as the preconditioner. \textbf{(iii)} It necessitates the explicit formation of the inverse of $\mat{A}_t$ to precondition the key, which is prone to numerical instability. To address these issues, we propose the following recurrence for PDN:
{%
% \small
\setlength{\parskip}{0pt}
\abovedisplayskip=5pt
\belowdisplayskip=5pt
\abovedisplayshortskip=1pt
\belowdisplayshortskip=1pt
\setlength{\jot}{3pt}
\begin{equation}
\label{eq:precond_end2end}
\begin{aligned}
\mat{A}_t
&= \alpha_t^{P}\mat{A}_{t-1}+\beta_t^{P}(\bb{k}_t\odot\bb{k}_t),\\
\bb{r}_t
&= \log(\mat{A}_t)-\mu,
\qquad
\bb{s}_t
= \bb{r}_t\oslash(\bb{1}+|\bb{r}_t|),\\
\mat{B}_t
&= \exp\!\big(-\log(x)\bb{s}_t\big),
\qquad
\tilde{\bb{k}}_t
= \mat{B}_t\odot\bb{k}_t,\\
\mat{S}_t
&= \mat{S}_{t-1}-\beta_t(\mat{S}_{t-1}\bb{k}_t-\bb{v}_t)\tilde{\bb{k}}_t^{\!\top},
\qquad
\bb{o}_t=\mat{S}_t\bb{q}_t.
\end{aligned}
\end{equation}
}%
where $\oslash$ denotes elementwise devision, $\mu > 0$ is a per-head learnable parameter and $x$ is a fixed hyperparameter. Here, $\mat{B}_t$ is our preconditioner, which approximately models the inverse of $\mat{A}_t$, but is modified for stability. Note that this parameterization squashes $\mat{B}_t$ onto the interval $[\frac{1}{x}, x]$ (as shown in Figure \ref{fig:At-dist}) and maintains the per-dimension monotonicity induced by $(\mat{A}_t)^{-1}$. As such, to guarantee stability (i.e. recurrent matrix eigenvalues $\leq 1$), it suffices that $x \leq 2$. See Appendix \ref{app:exp-ablations} for details. 

The recurrences for PGDN and PKDA are derived by simply replacing $\mat{S}_{t-1}$ with $\alpha_t \mat{S}_{t-1}$ and $\bb{\alpha}_t \mat{S}_{t-1}$ in Eq.~\eqref{eq:precond_end2end}, respectively. For additional commentary on the choice of this parameterization as well as ablations on the functional form, refer to Appendix~\ref{app:ablation-narrative}. 

% A chunkwise parallel form for this recurrence follows from a simple modification of the one for $\mat{A}_t$ (see Appendix \ref{app:chunkwise} for details). 

%% file: main_material/4_experiments.tex
\section{Empirical study}
\label{sec:experiments}

\subsection{Efficiency analysis}
\label{sec:exp:efficiency}
We implement custom Triton kernels for the PGDN and PKDA recurrences by modifying the GDN and KDA kernels from the \url{flash-linear-attention} (FLA) repository \citep{fla} using the chunkwise parallel form presented in Eq.~\eqref{eq:chunkwise_atk_form}. Additionally, we write kernels for computing the preconditioner recurrence shown in Eq.~\eqref{eq:precond_end2end} and then compose the two for the end-to-end recurrence. For the PDN recurrence, since we do not scale the model up to 1B parameters in this work, we simply reuse the PGDN kernel by fixing the decay term to 1. For implementation details and pseudocode, refer to Appendix~\ref{app:impl}. 
\begin{figure}[t]
\centering
\includegraphics[scale=0.45]{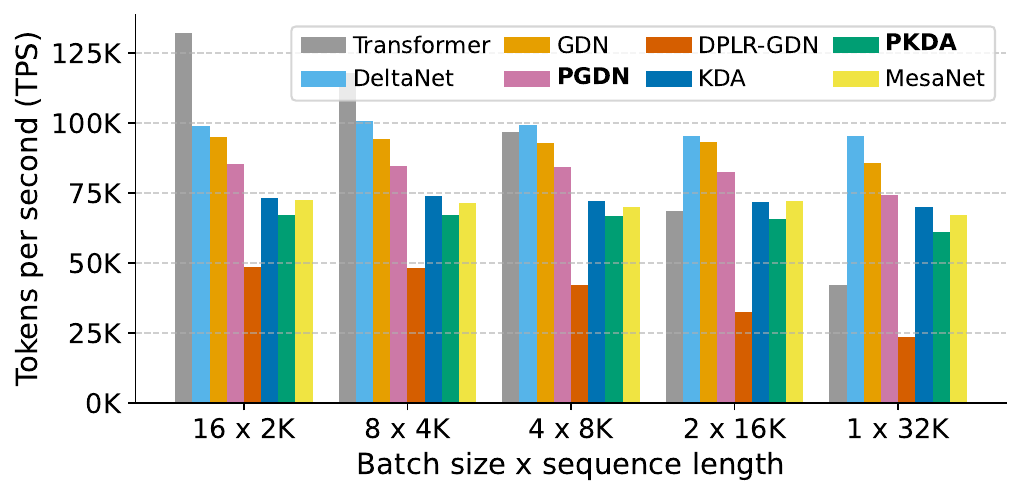}
\caption{Throughput results measuring tokens-per-second during training on 8-H100s using DDP. Models are 340M parameters (D = 1024, H = 8, layers = 24). }
\label{fig:profiling}
\end{figure}

\textbf{Profiling.} In Figure \ref{fig:profiling}, we profile the training throughput of the models we explore in this work at the 340M parameter scale. 
We see that the PGDN and PKDA recurrences incur a roughly $10 \%$ overhead relative to their GDN and KDA counterparts as a result of the diagonal preconditioner recurrence. This indicates a potential direction for future work where we learn the preconditioned key $\tilde{\bb{k}}_t$ directly to avoid the time complexity of computing the preconditioner. Of course, the drawback of this approach that it introduces significant parametric overhead; in contrast, the parametric overhead in our recurrence is minimal, coming mostly from small projection layers used to learn $\alpha_t^P, \beta_t^P$. See Section \ref{sec:neural_architecture} for more details. Additionally, we observe that PGDN is more than $20 \%$ faster than MesaNet, demonstrating the efficiency advantage of using an approximate preconditioner. 

\textbf{A new form within the DPLR class.} We additionally note that there exists a kernel in FLA for computing recurrences of the general diagonal-plus-low-rank (DPLR) form, as follows: $\mat{S}_t = \mat{S}_{t-1}\big(\mat{D}_t - \bb{a}_t \bb{b}_t^\top\big) + \bb{v}_t \bb{k}_t^\top$. Although this functional form supports all the preconditioned delta-rule recurrences discussed to this point, its generality comes at the cost of speed and memory consumption. We see in Figure \ref{fig:profiling} that DPLR-GDN, which refers to naively composing our preconditioner recurrence with the DPLR kernel, is significantly slower than our kernel. DPLR-KDA is not pictured as it OOMs with 32K tokens, demonstrating the exorbitant memory requirements of the DPLR kernel (which is used by RWKV-7). 
\begin{figure}[H]
\centering
\includegraphics[scale=0.4]{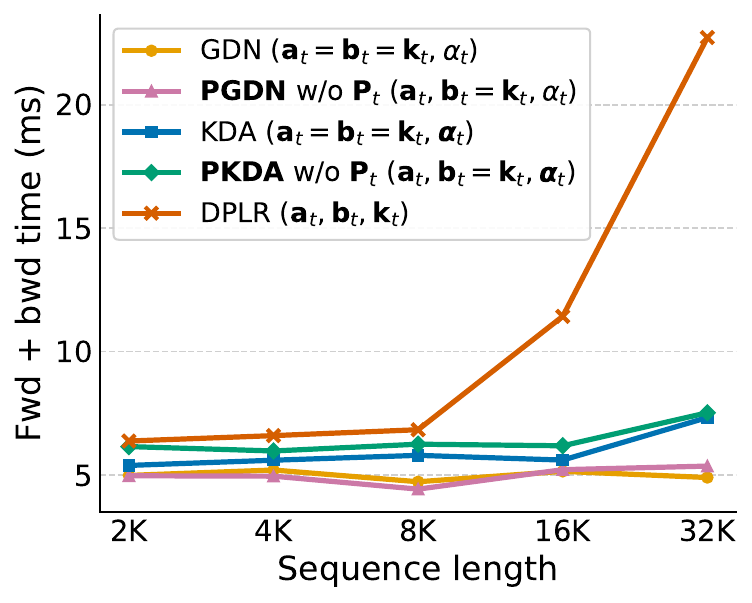}
\caption{Execution time of kernels for varying sequence lengths. Batch size is set to 1, number of heads to 8, and head dimension to 128. Without $\mat{P}_t$ refers to the exclusion of the preconditioner recurrence from the computation. $\alpha_t$ and $\bb{\alpha}_t$ denote scalar and diagonal decay respectively.}
\label{fig:profiling_kernels}
\end{figure}
Recall our PDN recurrence is given by $\mat{S}_{t}=\mat{S}_{t-1}+ \beta_t (\bb{v}_t-\mat{S}_{t-1}\bb{k}_t)\tilde{\bb{k}}_t^\top$ in which we have a distinct read key and write key. This corresponds to a constrained form of DPLR where $\bb{a_t}$ is free (in practice we use $\bb{a_t} = \mat{B}_t \bb{k}_t$, but can be generalized to the $\bb{a_t}$ free case) and $\bb{b_t} = \bb{k_t}$. This acts as an interpolation between the maximally constrained DPLR form used in DeltaNet, GDN, and KDA where $\bb{a_t} = \bb{b_t} = \bb{k_t}$, and the general DPLR form where they are all untied. The nice property of the tying scheme we propose is that having a distinct read and write key does not incur the same overhead that the general DPLR form does. This is shown in Figure \ref{fig:profiling_kernels}, which plots the speed of the PGDN and PKDA kernels (with the preconditioner recurrence removed) relative to the speeds of the GDN, KDA, and DPLR kernels. We observe that the overhead is effectively zero, pointing to the scalability of delta-style recurrences with distinct read and write keys. For further discussion, refer to Appendix~\ref{app:impl:tying-future}.

\begin{table*}[t]
\centering

% --- local sizing (only this table) ---
\scriptsize
\setlength{\tabcolsep}{2.0pt}
\renewcommand{\arraystretch}{1.05}

% --- table-local column types (single-letter, array-compatible) ---
\newcolumntype{N}{>{\centering\arraybackslash}c}
\newcolumntype{V}{>{\columncolor{avgshade}\centering\arraybackslash}c}

\caption{Zero-shot performance of 340M and 1B models trained on the SlimPajama dataset. Colors correspond to the better-performing model within (DN, PDN), (GDN, PGDN), and (KDA, PKDA) pairs. All tasks are evaluated using the \url{lm-evaluation-harness}. The in-context retrieval tasks follow \cite{based} with 2K input tokens. We note that the 340M KDA/PKDA models are actually 355M, since the KDA kernel in \url{flash-linear-attention} forces the use of an output gate. As such, decreasing the head dimension to make the models DN, GDN, and KDA isoparametric reduces the throughput of KDA shown in Figure $\ref{fig:profiling}$. Thus, we keep the head dimension at 128, which results in a slightly larger KDA/PKDA model. See Appendix~\ref{app:exp-details} for further discussion and additional results.}
\label{tab:zeroshot_slimpajama_lm_ruler}

% Vertical dividers: Model | Commonsense | ICR
\begin{tabular}{@{}l@{\hskip 3pt\vrule\hskip 3pt}
                N N N N N N N N N N V
                @{\hskip 3pt\vrule\hskip 3pt}
                N N N N N V@{}}
\toprule
\textbf{Model}
& \multicolumn{11}{c}{\textbf{Commonsense Reasoning} $\uparrow$}
& \multicolumn{6}{c}{\textbf{In-context retrieval} $\uparrow$} \\
\cmidrule(lr){2-12}\cmidrule(lr){13-18}
& LAMB & Wiki
& ARC$_e$ & ARC$_c$ & HellaS & Lamb. & PIQA & WinoG & BoolQ & SciQ & Avg
& FDA & SWDE & SQuAD & TQA & DROP & Avg \\
& ppl$\downarrow$ & ppl$\downarrow$
& acc & acc$_n$ & acc$_n$ & acc & acc & acc & acc & acc & acc
& acc & acc & acc & acc & acc & acc \\
\midrule

\multicolumn{18}{l}{\textit{340M params with 15B training tokens and 0.5M batchsize tokens}}\\
\addlinespace[1pt]

DN
& 45.76 & \winblue{29.04}
& 44.02 & \winblue{23.55} & \winblue{34.08} & 29.58 & 65.07 & 50.91 & 56.80 & \winblue{75.30} & 47.41
& 8.53 & 27.09 & 28.32 & \winblue{31.20} & \winblue{17.80} & 22.59 \\

\textbf{PDN}
& \winblue{43.05} & 29.05
& \winblue{44.70} & 23.52 & 33.83 & \winblue{30.48} & \winblue{65.14} & \winblue{52.41} & \winblue{58.50} & 75.10 & \textbf{\winblue{47.96}}
& \winblue{14.16} & \winblue{29.17} & \winblue{28.89} & 30.60 & 16.40 & \textbf{\winblue{23.84}} \\

GDN
& 31.39 & 27.08
& 44.49 & \winyellow{24.32} & 35.96 & 34.50 & \winyellow{65.83} & 51.30 & 56.30 & \winyellow{78.20} & 48.86
& 13.61 & 29.43 & 29.83 & 33.80 & 17.20 & 24.77 \\

\textbf{PGDN}
& \winyellow{28.43} & \winyellow{26.86}
& \winyellow{47.81} & 24.21 & \winyellow{36.19} & \winyellow{35.55} & 64.83 & \winyellow{52.97} & \winyellow{60.10} & 78.00 & \textbf{\winyellow{49.96}}
& \winyellow{14.25} & \winyellow{32.24} & \winyellow{31.76} & \winyellow{35.20} & \winyellow{17.40} & \textbf{\winyellow{26.17}} \\

KDA
& 31.37 & 26.18
& 45.45 & 22.70 & 36.06 & 34.04 & \winpurple{66.00} & \winpurple{52.25} & 57.00 & \winpurple{79.50} & 49.12
& 13.88 & 34.11 & 29.69 & 34.80 & 16.90 & 25.88 \\

\textbf{PKDA}
& \winpurple{25.33} & \winpurple{25.98}
& \winpurple{46.97} & \winpurple{22.95} & \winpurple{37.01} & \winpurple{37.22} & 65.68 & 51.46 & \winpurple{59.10} & 78.70 & \textbf{\winpurple{49.89}}
& \winpurple{14.25} & \winpurple{34.65} & \winpurple{31.39} & \winpurple{35.50} & \winpurple{18.00} & \textbf{\winpurple{26.76}} \\

\midrule
\multicolumn{18}{l}{\textit{1B params with 50B training tokens and 0.5M batchsize tokens}}\\
\addlinespace[1pt]

GDN
& 13.46 & 18.05
& 56.10 & \winyellow{27.30} & 48.91 & 46.42 & 70.08 & 54.22 & 55.80 & 84.70 & 55.44
& \winyellow{19.69} & 49.86 & 37.70 & 41.60 & 20.50 & 33.87 \\

\textbf{PGDN}
& \winyellow{13.09} & \winyellow{18.01}
& \winyellow{56.66} & 26.96 & \winyellow{49.66} & \winyellow{46.49} & \winyellow{70.73} & \winyellow{54.54} & \winyellow{59.90} & \winyellow{86.10} & \textbf{\winyellow{56.38}}
& 18.33 & \winyellow{50.32} & \winyellow{41.19} & \winyellow{44.30} & \winyellow{20.80} & \textbf{\winyellow{34.99}} \\

KDA
& 14.52 & 17.95
& 54.55 & 26.88 & 48.95 & 45.16 & 70.29 & 54.22 & 57.60 & 85.80 & 55.43
& \winpurple{23.23} & \winpurple{50.14} & 37.00 & 42.70 & 19.70 & 34.55 \\

\textbf{PKDA}
& \winpurple{11.86} & \winpurple{17.82}
& \winpurple{57.24} & \winpurple{26.96} & \winpurple{49.05} & \winpurple{48.52} & \winpurple{70.62} & \winpurple{55.49} & \winpurple{58.80} & \winpurple{86.40} & \textbf{\winpurple{56.64}}
& 21.69 & 48.79 & \winpurple{38.61} & \winpurple{44.60} & \winpurple{20.70} & \textbf{\winpurple{34.88}} \\

\bottomrule
\end{tabular}
\end{table*}

\subsection{Neural architecture}
\label{sec:neural_architecture}
We follow the same neural architecture presented in \cite{deltanet} when instantiating our models. In particular, we preprocess $\bb{q}, \bb{k}, \bb{v}$ with a short convolution to capture token-wise shifts and a SILU activation to approximate the exponential kernel in the softmax attention \citep{wang2020linformerselfattentionlinearcomplexity}. Before the output projection, we employ a headwise RMSNorm. In the KDA and PKDA recurrences only, a data-dependent gating mechanism is also used before the output projection, as this operation is fused into its kernel in \url{flash-linear-attention}. 

In practice, we implement our preconditioned recurrences by modifying the DeltaNet, GDN, and KDA models in \url{flash-linear-attention} by replacing the kernel that invokes the base recurrence with one that invokes our preconditioned recurrence shown in Eq.~\eqref{eq:precond_end2end}. Outside of this, the only other differences in our preconditioned models are the additional projection layers used to parameterize the input-dependent decay $\alpha_t^P$ and write term $\beta_t^P$ for the preconditioner recurrence. We do not tie these to the decay and write terms from the main recurrence as we find doing so hurts performance (see Appendix \ref{app:exp-ablations}). These projection layers are parameterized the same way as they are in the GDN main recurrence and thus introduce effectively no parametric overhead. 

\subsection{Experimental results}
\label{sec:experimental_results}

\subsubsection{Small-scale synthetics}
We begin by investigating the performance of our preconditioned recurrences at a small scale on the multi-query associative recall task (MQAR) proposed in \cite{zoology}. Each model comprises two sequence-mixing and two channel-mixing layers and is trained on various instances of the MQAR task, which are constructed by varying either the number of key-value pairs or the sequence length in the synthetic data. For additional details on the setup, refer to Appendix~\ref{app:exp-details}.

In Figure \ref{fig:mqar_results}, we plot the accuracy of DeltaNet (DN), PDN, GDN, and PGDN along two distinct cuts of the MQAR task space, both as a function of sequence length. In the first, we fix the number of key-value pairs, and in the second, we fix the ratio of the number of key-value pairs to sequence length. We find in both settings that the performance of MQAR is either maintained or improved by the preconditioned recurrence.  

\begin{figure}[H]
\centering
\includegraphics[scale=0.45]{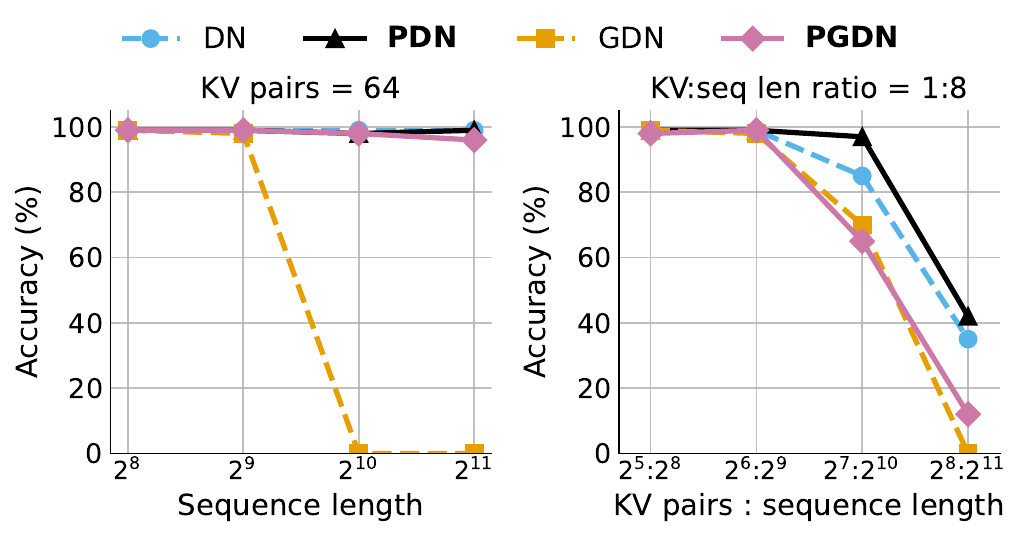}
\caption{Results on synthetic MQAR task.}
\label{fig:mqar_results}
\end{figure}

\subsubsection{Large-scale language modeling}
\label{sec:exp:language}
We move beyond small-scale synthetics and scale our models up to 340M/1B parameters to demonstrate the efficacy of the preconditioned recurrences on language modeling.

\textbf{Setup.} Our experiments are designed to ensure a fair comparison between the base DeltaNet/GDN/KDA recurrences and their preconditioned forms. As such, the PDN/PGDN/PKDA architectures change only the recurrence used for sequence mixing; the rest of the architectural components remain the same. We follow a similar training setup as prior work \citep{deltanet, comba}. In particular, we pretrain our models on the SlimPajama dataset \citep{slimpajama}, the 340M for 15B tokens and the 1B for 50B tokens. We employ the AdamW optimizer with a 4e-4 learning rate, cosine schedule, 0.01 weight decay, and 1.0 gradient clipping. We train with a context length of 2048. Furthermore, we use the \url{flash-linear-attention} and \url{flame} repositories to ensure reproducibility of our training runs. All evaluations were performed using the \url{lm-evaluation-harness}. For additional details on the setup, refer to \ref{app:exp-details}.

\textbf{Commonsense reasoning.} In Table \ref{tab:zeroshot_slimpajama_lm_ruler}, we present the language modeling perplexity and zero-shot accuracy on commonsense reasoning benchmarks for models pretrained with 340M and 1B parameters. We find that preconditioned recurrences consistently outperform their unpreconditioned counterparts. We caution the reader against using these evaluations to make cross-architectural comparisons (e.g. GDN vs KDA) as the architectural configuration employed here is fixed and is adapted from the default GDN configuration in \url{flame}. In particular, prior work \citep{kda} suggests that KDA outperforms GDN (presumably given proper architectural tuning). Nonetheless, we do find it interesting to note that in our experiments, PGDN is the best-performing model on commonsense reasoning at the 340M scale, suggesting a situational advantage to allocating per-dimension scaling to the write key $\tilde{\bb{k}}_t$ as opposed to state $\mat{S}_t$. We leave further exploration into this axis of parametric allocation to future work.

\begin{figure}[H]
\centering
\includegraphics[scale=0.4]{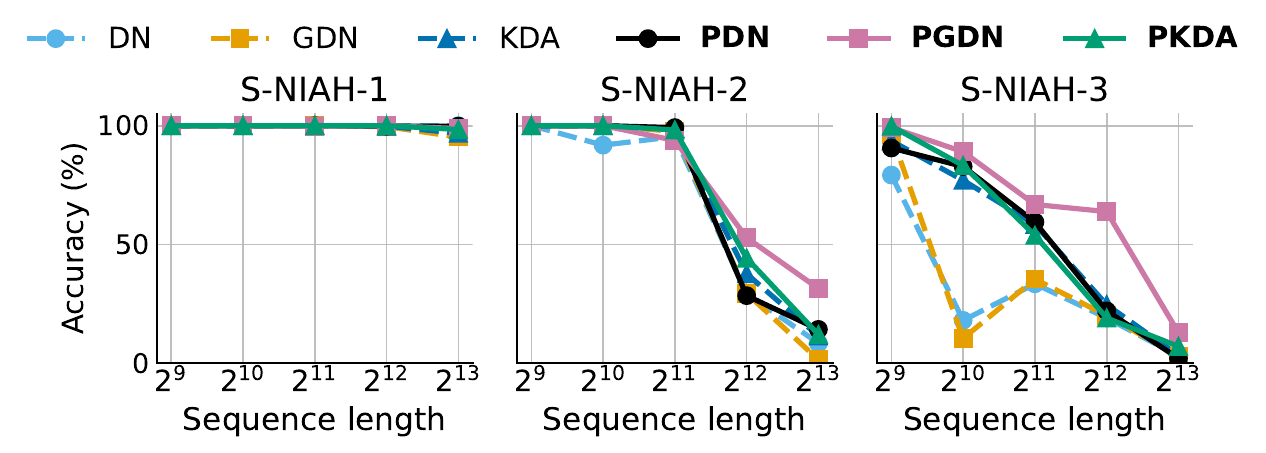}
\caption{340M models trained on 2K context length evaluated on S-NIAH benchmark from RULER \citep{ruler}.}
\label{fig:ruler-plot}
\end{figure}

\textbf{In-context retrieval.} Next, in Table \ref{tab:zeroshot_slimpajama_lm_ruler} we further evaluate our models on real-world recall-intensive tasks used in \cite{based} that are geared towards probing in-context retrieval (ICR). Analogous to the commonsense reasoning results, we find that preconditioning improves performance on ICR across all base recurrences. We observe particularly strong improvements from preconditioning in the DN and GDN models. To further support this observation, we evaluate our models on the Single Needle-In-A-Haystack (S-NIAH) benchmark suite from RULER \citep{ruler}, where a key-value pair acts as a needle in the haystack, and the model must recall the value when given the key. Note that this task is analogous to MQAR, but now only one key-value pair is present as opposed to multiple, making it a more targeted measure of long-context retrieval ability. The S-NIAH suite proposes 3 variants of increasing difficulty: S-NIAH-1, S-NIAH-2, and S-NIAH-3. We evaluate on each and find that in all task settings, there is an improvement in performance when using preconditioning (Figure \ref{fig:ruler-plot}). 

% We observe a smaller improvement on the simpler S-NIAH-1 and S-NIAH-2 variants as all models either solve or come close to solving the task with perfect accuracy (see \ref{app:main-results-continued}). 

\subsubsection{Analyzing recurrent dynamics}
\label{sec:exp:eigenvalues}

To further understand why preconditioning has a positive impact on long-context retrieval abilities, we take a closer look at the eigenvalues of recurrent matrices in GDN and PGDN to understand how the dynamics along the time dimension change in the presence of a preconditioner. 

\begin{figure}[H]
\centering
\includegraphics[scale=0.35]{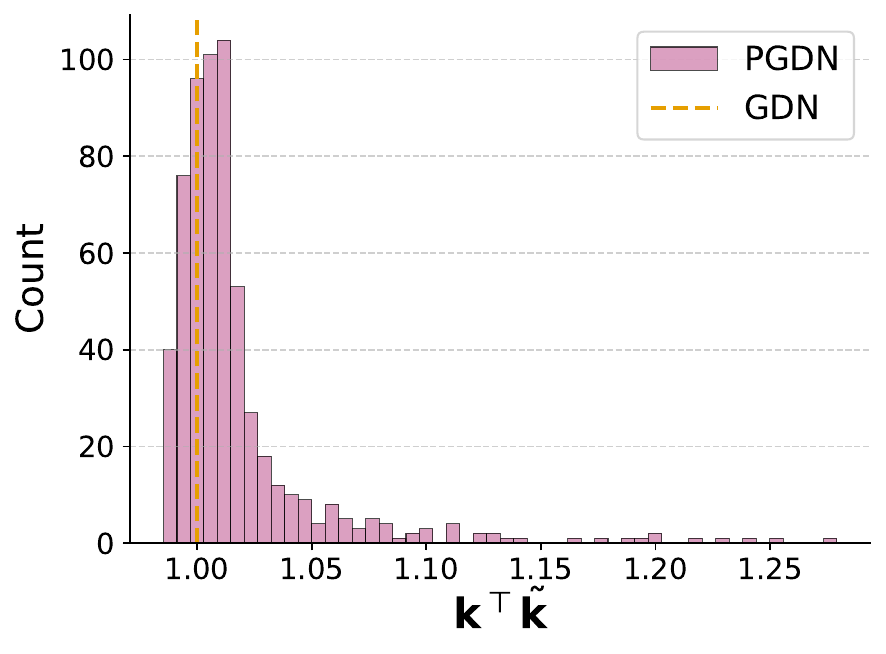}
\caption{Distribution of learned $\bb{k}_t^T\tilde{\bb{k}}_t$ which modulates the write eigenvalue in PGDN (averaged over layers).}
\label{fig:eigenvalue-plot}
\end{figure}

Note that the GDN recurrence is given by $\alpha_t \mat{S}_{t-1}+(\bb{v}_t-\alpha_t \mat{S}_{t-1}\bb{k}_t)\beta_t \bb{k}_t^\top
= \alpha_t \mat{S}_{t-1}\big(\mat{I}-\beta_t \bb{k}_t\bb{k}_t^\top\big) + \beta_t \bb{v}_t\bb{k}_t^\top.$ Since $||\bb{k}_t|| = 1$, the eigenvalues of $\alpha_t \big(\mat{I}- \beta_t \bb{k}_t\bb{k}_t^\top\big)$ are $\alpha_t$ with multiplicity $d_k - 1$ and $\alpha_t (1 - \beta_t)$ with multiplicity 1. In the PGDN recurrence, the eigenvalues of the transition matrix are given by $\alpha_t$ with multiplicity $d_k - 1$ and $\alpha_t (1 - \beta_t \bb{k}_t^T\tilde{\bb{k}}_t)$ with multiplicity 1 (see proof in Appendix~\ref{app:ablation-narrative}). We refer to the $\alpha_t (1 - \beta_t \bb{k}_t^T\tilde{\bb{k}}_t)$  eigenvalue as the write eigenvalue because in the PGDN recurrence, its value is modulated by the inner product between the read and write keys; in contrast, the GDN recurrence has no such expressivity. In Figure \ref{fig:eigenvalue-plot}, we observe this added expressivity by plotting the distribution of the learned $\bb{k}_t^T\tilde{\bb{k}}_t$, showing that it deviates from $1$, providing intuition for the improved long-context retrieval capabilities of PGDN discussed in Section~\ref{sec:exp:language}.

% \begin{table}[H]
% \centering
% \footnotesize
% \setlength{\tabcolsep}{2.0pt}
% \renewcommand{\arraystretch}{1.0}

% \caption{Average S-NIAH-3 accuracy on 340M models. $[-1, 1]$ models corresponds to base recurrence with $\beta_t \in [0, 2]$ \citep{statetracking}, enabling eigenvalues in range $[-1, 1]$.}
% \label{tab:h2h_sniah3_340m}

% \begin{tabular}{l@{\hskip 2pt\vrule\hskip 2pt}cc@{\hskip 2pt\vrule\hskip 2pt}cc@{\hskip 2pt\vrule\hskip 2pt}cc}
% \toprule
% Model & PDN & DN$_{\scriptscriptstyle[-1,1]}$ & PGDN & GDN$_{\scriptscriptstyle[-1,1]}$ & PKDA & KDA$_{\scriptscriptstyle[-1,1]}$ \\
% \midrule
% S-NIAH-3 & \textbf{77.6} & 74.2 & \textbf{85.0} & 80.2 & \textbf{78.9} & 66.1 \\
% \bottomrule
% \end{tabular}
% \end{table}

%% file: main_material/5_taxonomy.tex
\section{The DGPS taxonomy: a new design space for linear recurrences}
\label{sec:taxonomy}

\begin{table*}[t]
\centering
\small
\setlength{\tabcolsep}{6pt}
\begin{tabular}{lcccc}
\hline
Model & Decay & Gain & $\mat{P}_t$ (ATQ/ATK) & Solve \\
\hline
DeltaNet            & --                 & $\beta_t$                                  & --                       & online  \\
Gated DeltaNet      & $\alpha_t$         & $\beta_t$                                  & --                       & online  \\
Kimi Delta Attention& $\bb{\alpha_t}$    & $\beta_t$                                  & --                       & online  \\
Linear Attention    & --                 & --                                         & --                       & offline \\
RetNet              & $\alpha$           & --                                         & --                       & offline \\
Mamba-2              & $\alpha_t$         & --                                         & --                       & offline \\
GLA                 & $\bb{\alpha_t}$    & --                                         & --                       & offline \\
MesaNet             & $\alpha_t$         & $\beta_t$                                  & $\approx \mat{G}_t$ (ATQ) & offline \\
Comba$^\ast$        & $\alpha_t$         & $\beta_t / \alpha_t$                       & $\bb{q}_t - d\,\bb{k}_t$ (ATQ) & online \\
LongHorn            & --                 & $\beta_t/(1+\beta_t\,\bb{k}^\top \bb{k})$  & --                       & online \\
NLMS (TTR)          & --                 & $1/\|\bb{k}_t\|^2$                          & --                       & online \\
EFLA                & --                 & $(1-e^{-\beta_t \lambda_t}) / \lambda_t$    & --                       & online \\
PDN$^\dagger$       & --                 & $\beta_t$                                  & $\diag(\mat{G}_t)$ (ATK)  & online \\
PGDN$^\dagger$      & $\alpha_t$         & $\beta_t$                                  & $\diag(\mat{G}_t)$ (ATK)  & online \\
PKDA$^\dagger$      & $\bb{\alpha_t}$    & $\beta_t$                                  & $\diag(\mat{G}_t)$ (ATK)  & online \\
PLA$^\dagger$       & --                 & --                                         & $\diag(\mat{G}_t)$ (ATQ)  & offline \\
P-Mamba-2$^\dagger$  & $\alpha_t$         & --                                         & $\diag(\mat{G}_t)$ (ATQ)  & offline \\
PGLA$^\dagger$      & $\bb{\alpha_t}$    & --                                         & $\diag(\mat{G}_t)$ (ATQ)  & offline \\
\hline
\end{tabular}
\caption{The \textbf{DGPS} taxonomy as shown by a comparison of decay, gain, preconditioning, and solve choices across various linear recurrences. We define the gain term as the scalar multiplied by the write key in the recurrence. Note that Comba \citep{comba} actually disentangles the gain term into a separate $\beta_t$ applied to the write key in the write term $\bb{v} \bb{k}^T$, and a $\frac{\tilde{\beta_t}}{\alpha_t}$ applied to the write key in the erase term $\bb{k} \bb{k}^T$. $\mat{G}_t$ denotes the inverse Gram. Models marked with $^\dagger$ denote the preconditioned recurrences we propose in this paper.}
\label{tab:dgps}
\end{table*}

In this section, we briefly introduce a new taxonomy for designing linear recurrences motivated by a decomposition along the following four axes: decay term $\alpha_t$, preconditioner $\mat{P}_t$, gain/write $\beta_t$, and solve (offline vs online). So far, we have discussed two frameworks for thinking about recurrence design: TTR and DPLR. These two views lie on opposite ends of a spectrum: TTR is heavily backed by theory, but does not immediately expose a means of mapping recurrences to efficient implementations. DPLR is the opposite: it maps directly to chunkwise parallel forms but lacks the theory to make well-justified design choices. We now note a third one which attempts to bridge this gap: the online-convex programming (OCP) framework presented in \cite{longhorn}. The OCP formulation is a complementary view that models recurrences as a solution to an online convex program (OCP), where the state is updated by solving a proximal objective at each time step. For example, the Gated DeltaNet state update falls out of the solution to the following OCP: $\mat{S}_t \in \arg\min_{\mat{S}} \;\; \|\mat{S}- \alpha_t \mat{S}_{t-1}\|_F^2 \;-\; 2\left\langle \mat{S}\bb{k}_t,\ \beta_t(\bb{v}_t-\alpha_t \mat{S}_{t-1}\bb{k}_t)\right\rangle$. Note that this formulation exposes the decay, gain, and solve axes, the latter of which is traversed by changing the data term $\left\langle \mat{S}\bb{k}_t,\ \beta_t(\bb{v}_t-\mat{S}_{t-1}\bb{k}_t)\right\rangle$. In fact, we can augment this formulation with a key-side preconditioner $\mat{P}$ by replacing $\|\mat{S}- \alpha_t \mat{S}_{t-1}\|_F^2$ with $ \|\mat{S}- \alpha_t \mat{S}_{t-1}\|_{\mat{P}^{-1}}^2$. This preconditioned-OCP (POCP) framework now encapsulates the PDN, PGDN, and PKDA recurrences we explored in this work (see Appendix~\ref{app:pocp} for derivations). Unfortunately, a shortcoming of the POCP taxonomy is that it does not expose the query axis: in fact, it is entirely ignored in the formulation. And as we know from our discussion of the ATQ transform in Section~\ref{sec:method:atk-atq}, not only is query-side preconditioning theoretically-backed, but it also admits an efficient chunkwise parallel form. As such, we propose the \textbf{DGPS} (decay, gain, preconditioner, solve) taxonomy shown in Table~\ref{tab:dgps}, which parameterizes the design space as follows: 
{\definecolor{hdrred}{RGB}{220,130,130}
\definecolor{alphacol}{RGB}{45,95,220}
\definecolor{betacol}{RGB}{184,134,11}
\definecolor{kcol}{RGB}{128,0,180}
\definecolor{qcol}{RGB}{0,140,120}
\begin{align*}
\setlength{\jot}{3pt}
&\textbf{{\underline{\textbf{Online}}}}\ \textnormal{solve:}\\[-1pt]
&\mat{S}_t
= \textcolor{alphacol}{\alpha_t}\,\mat{S}_{t-1}
+ \textcolor{betacol}{\beta_t}\big(\bb{v}_t-\textcolor{alphacol}{\alpha_t} \mat{S}_{t-1}\bb{k}_t\big)\textcolor{kcol}{\tilde{\bb{k}}_t}^{\top},\,
\bb{o}_t=\mat{S}_t\,\bb{q}_t,\\[4pt]
&\textbf{{\underline{\textbf{Offline}}}}\ \textnormal{solve:}\\[-1pt]
&\mat{S}_t
= \textcolor{alphacol}{\alpha_t}\,\mat{S}_{t-1}
+ \textcolor{betacol}{\beta_t}\,\bb{v}_t\,\bb{k}_t^{\top},\,
\bb{o}_t=\mat{S}_t\,\textcolor{qcol}{\tilde{\bb{q}}_t}.
\end{align*}
where $\textcolor{alphacol}{\alpha_t}$ denotes the decay term, $\textcolor{betacol}{\beta_t}$ denotes the gain term and $\textcolor{kcol}{\tilde{\bb{k}}_t}$ and $\textcolor{qcol}{\tilde{\bb{q}}_t}$ denote the preconditioned keys and queries respectively. The benefit of DGPS is that the decay, gain, preconditioner, and solve axes are orthogonal levers that not only immediately relate to the theory derived in TTR but also map clearly onto a chunkwise parallel form, enabling an efficient implementation. In contrast, the OCP framework does not expose the query preconditioner axis, but also conflates the gain and solve axes. In particular, changing the data term (i.e. the solve axis in OCP) in the Gated DeltaNet OCP to $||\mat{S} \bb{k}_t - \bb{v}_t||_2^2$ also changes the gain term from $\beta_t$ to $\frac{\beta_t}{1 + \beta_t \bb{k}_t^T \bb{k}_t}$ as shown in LongHorn \citep{longhorn}. This demonstrates how axes are entangled in the OCP formulation, resulting in redundancies in the OCP design space. For an extended discussion of the DGPS taxonomy and how it relates to the existing space of frameworks for linear recurrence design, refer to Appendix~\ref{app:theory:taxonomy}. 

%% file: main_material/6_conclusion.tex
\section{Conclusion}
\label{sec:conclusion}
In this work, we incorporate the notion of preconditioning into the design of linear recurrences. Motivated by theory from online least squares, we derive a case in which linear attention and DeltaNet are equivalent under exact preconditioning. We then show that when considering a diagonal approximation to the exact preconditioner, we can realize efficient chunkwise parallel forms for both preconditioned linear attention and preconditioned DeltaNet. We translate this theory to practice by implementing preconditioned instances of DeltaNet, Gated DeltaNet, and Kimi Delta Attention. Empirically, we find that these preconditioned recurrences improve performance on synthetic recall and language benchmarks, uncovering the preconditioner axis as a lever for constructing novel linear recurrent models.

%% file: supplementary_material/acknowledgements.tex
\section*{Acknowledgements}
\label{sec:ack}
Support for this work has been provided in part by the Singapore MIT Alliance M3 program and the Schmidt AI2050 program. This research was also sponsored by the Department of the Air Force Artificial Intelligence Accelerator and was accomplished under Cooperative Agreement Number FA8750-19-2-1000. The views and conclusions contained in this document are those of the authors and should not be interpreted as representing the official policies, either expressed or implied, of the Department of the Air Force or the U.S. Government. The U.S. Government is authorized to reproduce and distribute reprints for Government purposes notwithstanding any copyright notation herein.

%% file: supplementary_material/_reproducibility.tex
\section*{Impact Statement}
% this section doesn't count towards page limit

This paper presents work whose goal is to make more efficient and expressive sequence models. While there are many potential societal consequences of our work, we feel none must be specifically highlighted here as this work does not possess any particular societal risk.

%% file: supplementary_material/A_relatedwork.tex
\section{Related work}
\label{app:relatedwork}

\subsection{Linear attention and SSMs.}
State space models (SSMs) are sequence models where the recurrent dynamics are governed by linear state space equations of the form $\bb{x}_t = \mat{A} \bb{x}_{t-1} + \mat{B} \bb{u}_t$, $\bb{y}_t = \mat{C} \bb{x}_t$ \cite{kalman}.  Early successes in SSMs emphasized careful parameterization and initialization of recurrent dynamics \citep{s4,s4d}. These early methods used non-time-varying $\mat{A}, \mat{B}, \mat{C}$, matrices, allowing them to be solved using the Fast Fourier Transform. Parallel scans can also be used to evaluate these recurrences in sublinear time \citep{s5, liquids4}, and additionally enable time-dependent dynamics, used in time-varying SSM layers like Mamba \citep{mamba}, which leverages an input-dependent, diagonal decay on a vector-valued state. A parallel line of work iterating on linear attention \citep{linear_attention} instead considers models that can be viewed as matrix-valued recurrences, where the state $\mat{S}_t$ can be interpreted as associative memory learning a mapping from the keys $\bb{k}_t$ to the values $\bb{v}_t$ \citep{deltarule}. Later work \citep{mamba2} shows that linear attention with decay can be considered a special case of SSMs with a matrix-valued state. Provided with sufficient structure on the recurrent matrix, these matrix-valued SSMs can admit an efficient \emph{chunkwise parallel} form, similar to the parallel form of linear attention.

\subsection{Delta-rule sequence models.}

DeltaNet, and its decayed variant Gated DeltaNet \citep{deltanet, gdn} leverage the delta rule update: $\mat{S}_t = \mat{S}_{t-1} + \beta_t \bb{\delta}_t \bb{k}_t^T$, where $\bb{\delta}_t$, is the error between the current readout of the key, $\hat{\bb{v}} = \mat{S}_{t-1}\bb{k}_t$ and the target value $\bb{v}_t$ given by $\bb{\delta}_t = \bb{v}_t - \hat{\bb{v}}_t$. This essentially lets the model correct the state so that the target value is associated with the key. Another interpretation is given by the effective recurrent matrix, $\mat{I} - \beta_t \bb{k}_t \bb{k}_t^\top$, which essentially modulates values of the state along the $\bb{k}_t$ axis. This gives the interpretation of DeltaNet updates as erasing what was previously stored in that state before writing it with the new value. \citet{deltanet} shows that such recurrent transitions can be efficiently parallelized using the WY   representation theorem. Notably, the recurrent matrices, $\mat{A}_t = \mat{I} - \beta_t \bb{k}_t \bb{k}_t^T$, do not commute with one another. A line of work has leveraged this non-commutativity to solve state-tracking tasks \citep{merrill2025illusionstatestatespacemodels, statetracking, deltaproduct}. Further work has leveraged the WY representation to parallelize more expressive state transition matrices, extending it to more general forms of the diagonal-plus-low-rank (DPLR) class \citep{kda, rwkv7}.

\subsection{Test-time training, mesa optimization, and online learning.}
Test-time-regression \citep{ttr} is a framework for analyzing recurrent models as approximately learning input-output mappings as the sequence is traversed. It primarily focuses on the setting of L2 regressions, constructing a taxonomy for recurrences that approximately solve this task, where keys and values act as the standard $(x,y)$ input tuples in linear regression, and the recurrent state takes the form of the linear regression weight. This framework frames linear attention as approximately solving this task via one full-batch (i.e. sequence-wide) gradient step (which was also noted in prior work \citet{ttt}). DeltaNet \citep{deltanet}, in contrast, considered an online version of approximating this task, which performs single-step gradient descent on each key-value pair for the linear regression task. On the other hand, the TTT-layer \citep{ttt} performs mini-batched gradient descent on non-linear regression (with an MLP hidden state). The Mesalayer \citep{mesalayer} aims to solve the linear regression task \textit{exactly} by computing the exact linear-regression weight at every time step, which is done in practice by Sherman-Morrison iteration on the key-Gram inverse matrix. Noting the slow sequential nature of Sherman-Morrison, MesaNet \citep{mesanet} solves the regression problem approximately using several conjugate gradients (CG), which enables a chunkwise parallel form that avoids computing sequential, non-linear recurrences in inverse space (as is done in Mesalayer). The downside of this approach is twofold: firstly, the number of FLOPs required to compute the MesaNet recurrence grows linearly in the number of CG iterations (which is set to $30$). Secondly, due to the approximate nature of CG, exact convergence is not guaranteed, which can lead to instability, especially considering the backwards pass, which uses the Implicit Value Theorem to compute gradients, assumes \textit{exact} convergence.

\subsection{Diagonal preconditioning in optimization.}
One of the most standard optimization algorithms is gradient descent, which directly takes a step in the direction of the gradient, $\bb{\theta} \leftarrow \bb{\theta} - \eta \bb{g}$, and its stochastic variants, stochastic gradient descent (SGD) \citep{sgd}. In practice, this performs poorly in highly ill-conditioned problems, where certain axes curve rapidly compared to others. For these problems, techniques such as momentum and preconditioning are used \citep{Nesterov1983AMF, polyak}. Preconditioning modifies the standard gradient step by multiplying the gradient by a preconditioner matrix: $\bb{\theta} \leftarrow \bb{\theta} - \eta  \mat{P} \bb{g}$. In the case of quadratic losses, a preconditioned gradient step leads directly to the minima, and in the case of $L_2$ regression with $N$ $(\bb{x}, \bb{y})$ pairs, $\mat{P}$ is given exactly by $(\sum_{i=1}^N \bb{x}_i \bb{x}_i^T)^{-1}$. In the event a full-rank preconditioner is too expensive, it is typical to use diagonal or factorized preconditioners \citep{adam, adagrad, kfac}.

\subsection{Online convex programming and approximate continual learning}
\label{app:continual_learning}

As discussed in Section \ref{sec:taxonomy}, the preconditioning in this method maps to the norm used in the proximal term in the OCP framework, changing the term from $\|\mat{S}- \alpha_t \mat{S}_{t-1}\|_F^2$ to $ \|\mat{S}- \alpha_t \mat{S}_{t-1}\|_{\mat{P}^{-1}}^2$. An alternative view of this term is as a \textit{forgetting} term which constrains the state from deviating too much from its history to prevent forgetting previous key--value pairs. This ties closely to literature on continual learning, which deals with training model parameters over streaming tasks with limited access to former tasks. In the least-squares case, it is known that the exact preconditioning for recovering the least-squares solution is the inverse key Gram matrix: $\mat{P}^{-1} = \sum_{i=1}^T \bb{k}_i \bb{k}_i^T + \lambda \mat{I}$.  For non-convex problems, it is common to use the Fisher information matrix, or a diagonal approximation of the Fisher information matrix \citep{ewc}. Such diagonal-quadratic regularizers are common in continual learning work \citep{ewc, vcl, gvcl}. Noting this connection between continual learning and sequence modeling could motivate more effective sequence mixers in the future.

%% file: supplementary_material/G_taxonomy.tex
\section{The DGPS taxonomy continued}
\label{app:theory:taxonomy}

\paragraph{Existing frameworks for linear recurrence design.} We begin by surveying other taxonomies for the recurrence design space that are tangentially related to the DGPS taxonomy discussed in Section \ref{sec:taxonomy}, but are nonetheless useful frameworks for understanding the existing space of recurrent models. The DeltaNet (and originally GLA) paper \citep{deltanet} proposes the following functional form for a general class of associative recurrences with matrix-valued hidden states: $\mat{S}_t = \mat{S}_{t-1} \bullet \mat{M}_t + \bb{v}_t \bb{k}_t^T$ where $\bullet$ denotes an arbitrary associative operator. In practice, the use of an associative operator enables a parallel scan, \citep{s4} which can be used to compute the hidden states $\mat{S}_t$ in a logarithmic number of steps. The drawback of this tree-based algorithm is that it has poor locality of reference and does not cast much of the computation as matrix multiplications, making it inefficient on modern GPUs. An example of a recurrence that falls under this taxonomy is Mamba \citep{mamba}, which uses the Hadamard product $\odot$ to define the associative algebra. Due to the lack of rank-1 structure in the Mamba $\mat{M}_t$ gate, it cannot be cast in an efficient chunkwise parallel form as doing so would require a contraction along the state dimension. 

To address this, a new framework known as the state-space duality (SSD) was proposed in Mamba-2 \citep{mamba2} which considers an instantiation of Mamba with scalar decay $\alpha_t$, enabling a chunkwise parallel form as shown in Appendix \ref{app:chunkwise}. The SSD provides a connection between SSMs and semi-separable structured matrix multiplications, which are efficient to compute. The SSM framework can be generalized to structured masked attention (SMA) which includes all models whose output $\mat{O}$ can be written as $\mat{P} = \mat{A} \odot \mat{M}, \mat{O} = \mat{P} \mat{V}$ where $\mat{A}$ refers to the surrogate attention matrix \citep{ess} and $\mat{M}$ refers to a lower-triangular causal mask. While softmax attention \citep{attention} technically falls under this framework, in practice it refers to models which have structure in $\mat{M}$ and can thus realize subquadratic algorithms for computing them. This includes long-convolutions such as Hyena \citep{hyena} and recurrences with growing state size such as Log-Linear Attention \citep{loglinear}. 

\textbf{Unifying the TTR, DPLR, OCP and DGPS taxonomies.} We next discuss the frameworks mentioned in the main portion of this paper and elucidate where DGPS stands with respect to them: TTR, DPLR and OCP. 

As discussed throughout the main text, the TTR framework provides a unified perspective on sequence layers by framing them as regression problems learning an associative memory map from the keys to the values. In particular, it considers the offline batch gradient descent (BGD), online stochastic gradient descent (SGD) and exact solution views of parametric regression, as well as extensions to nonparametric regression. The TTR framework is quite general, which comes at the cost of encompassing less efficient functional forms. Outside the linear recurrences we have discussed in the main text, it also includes models in the test-time training \citep{ttt} space, such as Titans \citep{titans} and Atlas \citep{atlas}, which can be interpreted as an online SGD step using a nonlinear memory map. In the test-time training recurrences, this online SGD step (which induces the write term) is non-linear in the previous state, requiring an approximation to parallelize the otherwise non-linear recurrence. As such, we consider it out of the scope of the DGPS taxonomy, which encompasses linear recurrences only. We further note that the TTR framework 
even includes softmax attention, which can be interpreted as an instance of approximate kernel regression. But with this generality comes a rich basis from regression theory and optimization, which can be used to inform our design choices.
As such, in our interpretation, DGPS can be considered a practical instantiation of the TTR framework geared towards realizing its theory in the form of efficient implementations on modern hardware. 

We next briefly discuss the DPLR taxonomy, which posits the form $\mat{S}_t = \mat{S}_{t-1}\big(\mat{D}_t - \bb{a}_t \bb{b}_t^\top\big) + \bb{v}_t \bb{k}_t^\top$, and how it relates to our proposed DGPS taxonomy. In particular, we note the following: $\textrm{DPGS} \nsubseteq \textrm{DPLR and } \textrm{DPLR} \nsubseteq \textrm{DGPS}$. The former follows from the lack of the query axis in the DPLR form and the latter follows from the unconstrained tying across $\bb{a}_t, \bb{b}_t, \bb{k}_t$. However, when we restrict ourselves to subspaces of these taxonomies, we observe the following: $\textrm{ATK-DGPS} \subseteq \textrm{DPLR and } \textrm{fast-DPLR} \subseteq \textrm{DGPS}$. The former refers to models with key-side preconditioning only (e.g. PDN, PGDN, PKDA) within the DGPS framework. The latter refers to models that leverage a constrained tying scheme within the general DPLR form, admitting a more efficient implementation. We defer a more detailed discussion on the algorithmic relationship between the DGPS and DPLR taxonomies to Appendix~\ref{app:impl:tying-future}. 

Finally, we discuss the online-convex programming (OCP) framework briefly introduced in Section \ref{sec:taxonomy} which was originally proposed in the LongHorn \citep{longhorn} paper. The OCP framework proposes a unifying perspective on linear recurrence design via online learning: given a per-token convex loss $\ell_t(\mat{S})$ induced by $(\bb{k}_t,\bb{v}_t)$,
the OCP update takes the form
\begin{align*}
\label{eq:ocp_update}
\mat{S}_t \in \arg\min_{\mat{S}} \; ||\mat{S} - \alpha_t \mat{S}_{t-1} ||_F + \beta_t \,\ell_t(\mat{S}),
\end{align*}
where $\beta_t \ge 0$. 
This formulation exposes three design axes used throughout prior work \citep{gdn}: (i) the decay on $\mat{S}_{t-1}$ in the proximal term, (ii) the gain term $\beta_t$ controlling the magnitude of the write key, and (iii) the data term $\ell_t$ (e.g. squared loss or linearized variants), which induces different linear recurrences. The data term is analogous to the solve term in DGPS in that it corresponds to different approximations of the least squares loss. We note how particular forms of the OCP objective yield different recurrences as follows. In particular, the linear attention state update falls out of the solution to the following OCP: 
\begin{align*}
\mat{S}_t \in \arg\min_{\mat{S}} \;\; \|\mat{S}-\mat{S}_{t-1}\|_F^2 \;-\; 2\langle \mat{S}\bb{k}_t,\bb{v}_t\rangle,
\end{align*}
Similarly, the DeltaNet update is the solution to
\[
\mat{S}_t \in \arg\min_{\mat{S}} \;\; \|\mat{S}-\mat{S}_{t-1}\|_F^2 \;-\; 2\left\langle \mat{S}\bb{k}_t,\ \beta_t(\bb{v}_t-\mat{S}_{t-1}\bb{k}_t)\right\rangle .
\]
LongHorn uses the OCP framework to design a new recurrence whose OCP is given as follows:
\[
\mat{S}_t \in \arg\min_{\mat{S}} \;\; \|\mat{S}-\mat{S}_{t-1}\|_F^2 \;+\; \beta_t\,\|\mat{S}\bb{k}_t-\bb{v}_t\|_2^2 .
\]
Note that the solution to the LongHorn OCP is given by $$\mat{S}_{t-1}+(\bb{v}_t-\mat{S}_{t-1}\bb{k}_t)\epsilon_t \bb{k}_t^\top$$ where $\epsilon_t = \frac{\beta_t}{1 + \beta_t \bb{k}_t^T \bb{k}_t}$. With this, we observe the following property of the OCP framework: it conflates the gain and axes. In particular, changing the data term between linear attention and DeltaNet changes the functional form of the recurrence from one expressing an offline BGD solve to another expressing an online BGD solve, both of which admit fundamentally distinct chunkwise parallel forms. But then changing the data term again between DeltaNet and LongHorn changes only the gain term $\beta_t$ via a normalization factor that falls out of recursive least squares (as we showed in Appendix \ref{app:theory:atk-atq-equivalence}). While in theory this is a nice property of the OCP formulation, in practice we find that it is necessary to disentangle the gain axis into a standalone gain term plus an additional transform on this gain term (in the case of LongHorn this additional transform would be normalizing by $1 + \beta_t \bb{k}_t^T \bb{k}_t$). We show this empirically in Appendix \ref{app:ablation-narrative} where we ablate various normalizing factors on the gain term (including the one proposed in LongHorn) and find that they worsen performance in our preconditioned recurrences. 

On the flip side, a nice observation regarding OCP is that it can be trivially augmented with the preconditioner axis from DGPS by changing the proximal term norm from a Frobenius inner product to a general matrix-$\mat{P^{-1}}$ inner product, yielding the following preconditioned OCP (POCP): 
\[
\mat{S}_t \in \arg\min_{\mat{S}} \;\; \|\mat{S}-\mat{S}_{t-1}\|_{\mat{P}^{-1}}^2 \;+\; \beta_t\,\|\mat{S}\bb{k}_t-\bb{v}_t\|_2^2 .
\]
We show in Appendix \ref{app:pocp} that the PDN, PGDN, and PKDA recurrences can be derived from the POCP form. Furthermore, the POCP form has interpretations beyond an approximate preconditioner used for the key Gram in least squares: namely, in this framework, $\mat{P}$ can be interpreted as an arbitrary positive semi-definite transform applied to the proximal space $\mat{S}-\mat{S}_{t-1}$ that corresponds to a matrix-inner product computed in non-Frobenius space. In particular, this suggests the use of non-identity matrices to modulate forgetting along the proximal axis. Nonetheless, a drawback of even the POCP form is that it, like the DPLR taxonomy, does not expose the query axis for transformation. 

\textbf{DGPS outside the lens of least squares.} As we have now discussed DGPS and where it stands relative to other unifying perspectives on linear recurrence design, we next attempt to showcase its generality by discussing instances of linear recurrences that do not obviously fall out of the TTR framework, yet reside in DGPS. Note that to this point, we have implicitly tied together the notion of key-side preconditioning (i.e. ATK) with the base DeltaNet recurrence (i.e. online solve) and the notion of query-side preconditioning (i.e. ATQ) with the base linear attention recurrence (i.e. offline solve). But if we abstract away the connection between TTR and DGPS, then DGPS, like the DPLR form, is simply a set of four levers that map to an efficient chunkwise parallel form.
In particular, there is nothing that precludes using ATQ in the online solve form and ATK in the offline solve form. 

The utility of this abstracted lens is that we can leverage other theoretical bases to design the functional form of the recurrence and then implement it efficiently via the DGPS chunkwise parallel forms. One example of this is the Comba recurrence \citep{comba}, which leverages a control-theoretic framework for deriving the functional form of their model, shown in Table \ref{tab:dgps}. One interesting thing to note with the Comba recurrence is the presence of an output correction factor $d$, which transforms the query via $\bb{q}_t -d \bb{k}_t$ where $d$ is a learnable scalar. If we generalize the interpretation of ATQ from an approximate least-squares preconditioner to an arbitrary query-side transform, we see how Comba is an instance of a recurrence that falls into the DGPS space without entirely being derived from TTR. Indeed, Comba leverages the ATQ idea we formalize in this work when deriving the chunkwise parallel form for their recurrence. We can make the same generalization for ATK and consider the DGPS framework as a means of efficiently implementing key-side and query-side transforms in the cases where either linear attention or DeltaNet is the base recurrence. This DGPS abstraction away from TTR is also shown in the EFLA recurrence in Table \ref{tab:dgps}, which derives the gain term using results from a continuous-time ODE perspective on linear recurrences. 

Given the strong performance gains observed via the query-side transform in the online solve case in Comba, one promising avenue of future work within the DGPS space is exploring recurrences that perform both key and query-side preconditioning.

%% file: supplementary_material/B_theory.tex
\newpage 

\section{Additional theory and discussion}
\label{app:theory}

\subsection{Deriving an equivalence between linear attention and DeltaNet under exact preconditioner}
\label{app:theory:atk-atq-equivalence}

Let $\mat{G}_t \coloneqq \sum_{i=1}^t \bb{k}_i \bb{k}_i^\top + \lambda\mat{I}$, $\mat{P}_t \coloneqq \mat{G}_t^{-1}$ and $\mat{C}_t \coloneqq \sum_{i=1}^t  \bb{v}_i \bb{k}_i^\top$. The preconditioned linear attention state at step $t$ is given by $\mat{S}_t^{PLA} = \mat{C}_t \mat{P}_t$. At $t = 0$, $\mat{S}_0^{PLA} = \mat{S}_0^{PDN} = 0$.

\begin{theorem}
\label{thm:pla_is_pdn}
$\mat{S}_t^{PLA} = \mat{S}_t^{PDN}$ for all $t$ under exact preconditioning.
\end{theorem}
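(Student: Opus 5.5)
We argue by induction on $t$, with the Sherman--Morrison identity as the only nontrivial ingredient. The base case is $t=0$: here $\mat{C}_0=\bb{0}$, so $\mat{S}_0^{PLA}=\mat{C}_0\mat{P}_0=\bb{0}=\mat{S}_0^{PDN}$. For the inductive step we assume $\mat{S}_{t-1}^{PDN}=\mat{C}_{t-1}\mat{P}_{t-1}$ and show that one step of the recursion in Eq.~\eqref{eq:atk_update} yields $\mat{C}_t\mat{P}_t$.

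The first ingredient comes from applying Sherman--Morrison to $\mat{G}_t=\mat{G}_{t-1}+\bb{k}_t\bb{k}_t^\top$. Here $\mat{G}_{t-1}$ is symmetric positive definite because of the ridge $\lambda>0$, so $\mat{P}_{t-1}$ is symmetric and $1+\bb{k}_t^\top\mat{P}_{t-1}\bb{k}_t>0$. This gives
\[
\mat{P}_t=\mat{P}_{t-1}-\frac{\mat{P}_{t-1}\bb{k}_t\bb{k}_t^\top\mat{P}_{t-1}}{1+\bb{k}_t^\top\mat{P}_{t-1}\bb{k}_t}=\mat{P}_{t-1}-\tilde{\bb{k}}_t\bb{k}_t^\top\mat{P}_{t-1}.
\]
We record two consequences:
\[
\mat{P}_t\bb{k}_t=\tilde{\bb{k}}_t
\qquad\text{and}\qquad
\mat{P}_{t-1}-\mat{P}_t=\tilde{\bb{k}}_t\bb{k}_t^\top\mat{P}_{t-1}=\mat{P}_{t-1}\bb{k}_t\tilde{\bb{k}}_t^\top .
\]
The first follows by direct multiplication: $\mat{P}_t\bb{k}_t=\mat{P}_{t-1}\bb{k}_t\bigl(1-\tfrac{c}{1+c}\bigr)$ with $c=\bb{k}_t^\top\mat{P}_{t-1}\bb{k}_t$. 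The second uses the symmetry of $\mat{P}_{t-1}$ to transpose the rank-one term.

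Next we expand the target. Since $\mat{C}_t\mat{P}_t=\mat{C}_{t-1}\mat{P}_t+\bb{v}_t\bb{k}_t^\top\mat{P}_t$, the symmetry of $\mat{P}_t$ and the first identity give $\bb{v}_t\bb{k}_t^\top\mat{P}_t=\bb{v}_t\tilde{\bb{k}}_t^\top$. For the other term we use the second identity:
\[
\mat{C}_{t-1}\mat{P}_t=\mat{C}_{t-1}\mat{P}_{t-1}-\mat{C}_{t-1}\mat{P}_{t-1}\bb{k}_t\tilde{\bb{k}}_t^\top .
\]
By the inductive hypothesis this equals $\mat{S}_{t-1}-\mat{S}_{t-1}\bb{k}_t\tilde{\bb{k}}_t^\top$. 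Adding the two pieces yields
\[
\mat{C}_t\mat{P}_t=\mat{S}_{t-1}+(\bb{v}_t-\mat{S}_{t-1}\bb{k}_t)\tilde{\bb{k}}_t^\top,
\]
which is exactly the PDN update of Eq.~\eqref{eq:atk_update}. This closes the induction, and the equality of outputs in Theorem~\ref{thm:atk_atq_equiv} follows by multiplying both sides by $\bb{q}_t$.

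The main point requiring care is the bookkeeping in the second identity. One must check that the rank-one correction can be written with $\mat{P}_{t-1}\bb{k}_t$ on the left, so that the inductive hypothesis $\mat{C}_{t-1}\mat{P}_{t-1}=\mat{S}_{t-1}$ can be applied to produce the read term $\mat{S}_{t-1}\bb{k}_t$. This relies on the symmetry of $\mat{P}_{t-1}$ and on writing the normalization in $\tilde{\bb{k}}_t$ consistently with the Sherman--Morrison denominator.
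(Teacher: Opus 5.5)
Your proof is correct and follows essentially the same route as the paper's: induction on $t$, with Sherman--Morrison applied to $\mat{G}_t=\mat{G}_{t-1}+\bb{k}_t\bb{k}_t^\top$. The difference is only organizational. You first isolate the identities $\mat{P}_t\bb{k}_t=\tilde{\bb{k}}_t$ and $\mat{P}_{t-1}-\mat{P}_t=\mat{P}_{t-1}\bb{k}_t\tilde{\bb{k}}_t^\top$, whereas the paper expands the full product and simplifies the value term by combining fractions; it states the former identity only after its proof.
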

\begin{proof}
We show the equivalence via induction. The base case at $t = 0$ is trivial, and both $\mat{S}_0^{PLA} = \mat{S}_0^{PDN} = 0$. Assume $\mat{S}_t^{PLA} = \mat{S}_t^{PDN}$.

We have:

\begin{align*}
    \mat{S}_{t+1}^{PLA} &=  \mat{C}_{t+1} \mat{P}_{t+1} \\
    &= (\mat{C}_{t} + \bb{v}_{t+1}\bb{k}_{t+1}^\top) (\mat{G}_{t} + \bb{k}_{t+1}\bb{k}_{t+1}^\top)^{-1} \\
    &= (\mat{C}_{t} + \bb{v}_{t+1}\bb{k}_{t+1}^\top)\left(\mat{G}_{t}^{-1} - \frac{\mat{G}_{t}^{-1}\bb{k}_{t+1} \bb{k}_{t+1}^\top \mat{G}_{t}^{-1}}{1 + \bb{k}_{t+1}^\top \mat{G}_{t}^{-1} \bb{k}_{t+1}}\right) \\
    &= \mat{C}_{t}\mat{G}_{t}^{-1} - \frac{\mat{C}_{t} \mat{G}_{t}^{-1}\bb{k}_{t+1} \bb{k}_{t+1}^\top \mat{G}_{t}^{-1}}{1 + \bb{k}_{t+1}^\top \mat{G}_{t}^{-1} \bb{k}_{t+1}} 
 + \bb{v}_{t+1}\bb{k}_{t+1}^\top\mat{G}_{t}^{-1} - \frac{\bb{v}_{t+1}\bb{k}_{t+1}^\top \mat{G}_{t}^{-1}\bb{k}_{t+1} \bb{k}_{t+1}^\top \mat{G}_{t}^{-1}}{1 + \bb{k}_{t+1}^\top \mat{G}_{t}^{-1} \bb{k}_{t+1}}  \\ 
 &= \mat{C}_{t}\mat{G}_{t}^{-1} \left(\mat{I} - \bb{k}_{t+1} \left(\frac{\mat{G}_{t}^{-1}\bb{k}_{t+1} }{1 + \bb{k}_{t+1}^\top \mat{G}_{t}^{-1} \bb{k}_{t+1}} \right)^\top\right) + \bb{v}_{t+1}  \bb{k}_{t+1}^\top \mat{G}_{t}^{-1} \left(
 \mat{I} - \frac{\bb{k}_{t+1} \bb{k}_{t+1}^\top \mat{G}_{t}^{-1} }{1 + \bb{k}_{t+1}^\top \mat{G}_{t}^{-1} \bb{k}_{t+1}}
 \right) \\
 &= \mat{C}_{t}\mat{G}_{t}^{-1} \left(\mat{I} - \bb{k}_{t+1} \left(\frac{\mat{G}_{t}^{-1}\bb{k}_{t+1} }{1 + \bb{k}_{t+1}^\top \mat{G}_{t}^{-1} \bb{k}_{t+1}} \right)^\top\right) + \bb{v}_{t+1}  \bb{k}_{t+1}^\top \mat{G}_{t}^{-1} \left( \frac{ \mat{I} + \bb{k}_{t+1}^\top \mat{G}_{t}^{-1} \bb{k}_{t+1} -\bb{k}_{t+1} \bb{k}_{t+1}^\top \mat{G}_{t}^{-1} }{1 + \bb{k}_{t+1}^\top \mat{G}_{t}^{-1} \bb{k}_{t+1}}
 \right) \\
 &= \mat{C}_{t}\underbrace{\mat{G}_{t}^{-1}}_{\mat{P}_t} \left(\mat{I} - \bb{k}_{t+1} \left(\frac{\mat{G}_{t}^{-1}\bb{k}_{t+1} }{1 + \bb{k}_{t+1}^\top \mat{G}_{t}^{-1} \bb{k}_{t+1}} \right)^\top\right) + \bb{v}_{t+1}  \bb{k}_{t+1}^\top \mat{G}_{t}^{-1} \left( \frac{ 1}{1 + \bb{k}_{t+1}^\top \mat{G}_{t}^{-1} \bb{k}_{t+1}}
 \right) \\
 &= \underbrace{\mat{C}_{t}\mat{P}_t}_{S_{t}^{PLA} = S_{t}^{PDN}} \left(\mat{I} - \bb{k}_{t+1} \left(\frac{\mat{G}_{t}^{-1}\bb{k}_{t+1} }{1 + \bb{k}_{t+1}^\top \mat{G}_{t}^{-1} \bb{k}_{t+1}} \right)^\top\right) + \bb{v}_{t+1} \underbrace{\left(\frac{\mat{G}_{t}^{-1}\bb{k}_{t+1} }{1 + \bb{k}_{t+1}^\top \mat{G}_{t}^{-1} \bb{k}_{t+1}}
 \right)^\top}_{\tilde{\bb{k}}_{t+1}^\top} \\
 &= \mat{S}_{t}^{PDN}  - \mat{S}_{t}^{PDN} \bb{k}_{t+1}\tilde{\bb{k}}_{t+1}^\top + \bb{v}_{t+1}\tilde{\bb{k}}_{t+1}^\top \\
 &= \mat{S}_{t}^{PDN}  + (\bb{v}_{t+1} - \mat{S}_{t}^{PDN} \bb{k}_{t+1})\tilde{\bb{k}}_{t+1}^\top \\
 &= \mat{S}_{t+1}^{PDN} \\
\end{align*}
Where we expand $\mat{G}_{t + 1}^{-1}$ via. Sherman-Morrison.

\end{proof}

We note that we can also write $\tilde{\bb{k}}_{t + 1}$ as $\mat{P}_{t+1}\bb{k}_{t+1}$:
% \begin{align*}
%     \tilde{\bb{k}}_{t+1} &= \frac{\mat{G}_{t}^{-1}\bb{k}_{t+1} }{1 + \bb{k}_{t+1}^\top \mat{G}_{t}^{-1} \bb{k}_{t+1}} \\
%     &= \left(\frac{1 + \bb{k}_{t+1}^\top \mat{G}_{t}^{-1} \bb{k}_{t+1} - \bb{k}_{t+1}^\top \mat{G}_{t}^{-1} \bb{k}_{t+1}}{\mat{I} + \bb{k}_{t+1}^\top \mat{G}_{t}^{-1} \bb{k}_{t+1}}\right)\mat{G}_{t}^{-1}\bb{k}_{t+1} \\
%     &= \left(\mat{I} - \frac{\bb{k}_{t+1}^\top \mat{G}_{t}^{-1} \bb{k}_{t+1}}{\mat{I} + \bb{k}_{t+1}^\top \mat{G}_{t}^{-1} \bb{k}_{t+1}}\right)\mat{G}_{t}^{-1}\bb{k}_{t+1} \\
%     &= \left(\mat{G}_{t}^{-1} - \frac{\mat{G}_{t}^{-1} \bb{k}_{t+1}^\top \mat{G}_{t}^{-1} \bb{k}_{t+1}}{\mat{I} + \bb{k}_{t+1}^\top \mat{G}_{t}^{-1} \bb{k}_{t+1}}\right)\bb{k}_{t+1} \\
%     &= \mat{G}_{t + 1}^{-1} \bb{k}_{t+1} = \mat{P}_{t + 1}\bb{k}_{t+1}
% \end{align*}
\[
\tilde{\bb{k}}_{t+1}
=\frac{\mat{G}_{t}^{-1}\bb{k}_{t+1}}{1+\bb{k}_{t+1}^\top\mat{G}_{t}^{-1}\bb{k}_{t+1}}
=\left(\mat{G}_{t}^{-1}-\frac{\mat{G}_{t}^{-1}\bb{k}_{t+1}\bb{k}_{t+1}^\top\mat{G}_{t}^{-1}}{1+\bb{k}_{t+1}^\top\mat{G}_{t}^{-1}\bb{k}_{t+1}}\right)\bb{k}_{t+1}
=\mat{P}_{t+1}\bb{k}_{t+1}.
\]
Again, using Sherman-Morrison.

% \textcolor{red}{Some woodbury sherman-morrison stuff we can use in this section}
% Given $\mat{G}_t \coloneqq \sum_{i=1}^t \bb{k}_i \bb{k}_i^\top$ and $\mat{P}_t \coloneqq \mat{G}_t^{-1}$, we have
% \begin{equation}
% \mat{G}_t = \mat{G}_{t-1} + \bb{k}_t \bb{k}_t^\top.
% \end{equation}
% Using Sherman--Morrison, define the gain
% \begin{equation}
% \bb{g}_t \coloneqq \frac{\mat{P}_{t-1}\bb{k}_t}{1 + \bb{k}_t^\top \mat{P}_{t-1}\bb{k}_t},
% \end{equation}
% then update the inverse Gram as
% \begin{equation}
% \mat{P}_t = \mat{P}_{t-1} - \bb{g}_t \bb{k}_t^\top \mat{P}_{t-1}
% \;=\;
% \mat{P}_{t-1} - \frac{\mat{P}_{t-1}\bb{k}_t \bb{k}_t^\top \mat{P}_{t-1}}{1 + \bb{k}_t^\top \mat{P}_{t-1}\bb{k}_t}.
% \end{equation}
% Finally, the RLS update for the map $\mat{S}_t$ is
% \begin{equation}
% \mat{S}_t = \mat{S}_{t-1} + \big(\bb{v}_t - \mat{S}_{t-1}\bb{k}_t\big)\,\bb{g}_t^\top.
% \end{equation}

\subsection{Linear attention and DeltaNet are not equal under approximate preconditioning}
\label{app:theory:atk-atq-not-equal}

\begin{theorem}
\label{thm:apla_neq_apdn}
$\mat{S}_T^{PLA} \neq \mat{S}_T^{PDN}$ for approximate preconditioning.
\end{theorem}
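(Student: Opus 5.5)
Since the statement is an inequality "in general", the natural approach is a counterexample. I will exhibit one concrete choice of keys, values, ridge and diagonal preconditioner for which the two realizations differ. It suffices to work at small $T$ (say $T=2$) in $d_k=2$, $d_v=1$, with $\lambda=1$.

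The two objects are defined as follows.
\begin{itemize}
\item The approximate PLA state is $\mat{S}_T^{PLA}=\mat{C}_T\mat{D}_T^{-1}$, where $\mat{D}_t=\diag(\mat{A}_t)$ and $\mat{A}_t=\lambda\bb{1}+\sum_{i\le t}\bb{k}_i\odot\bb{k}_i$.
\item The approximate PDN state follows Eq.~\eqref{eq:atk_update} with $\mat{P}_{t-1}$ replaced by $\mat{D}_{t-1}^{-1}$, i.e.\ $\tilde{\bb{k}}_t=\mat{D}_{t-1}^{-1}\bb{k}_t/(1+\bb{k}_t^\top\mat{D}_{t-1}^{-1}\bb{k}_t)$.
\end{itemize}

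First I will note why equality can hold at all. The proof of Theorem~\ref{thm:pla_is_pdn} uses only Sherman--Morrison for the \emph{exact} update $\mat{G}_t=\mat{G}_{t-1}+\bb{k}_t\bb{k}_t^\top$. The diagonal preconditioner instead satisfies $\mat{D}_t=\mat{D}_{t-1}+\diag(\bb{k}_t\odot\bb{k}_t)$, which is not a rank-one update of the form $\mat{D}_{t-1}+\bb{k}_t\bb{k}_t^\top$ unless $\bb{k}_t$ is axis-aligned. Axis-aligned keys (e.g.\ $\bb{k}_1=\bb{e}_1$) make both methods exact, so the counterexample must use a key with two nonzero coordinates.

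Concretely, I take $\bb{k}_1=(1,1)^\top/\sqrt2$, $v_1=1$, and $T=1$.
\begin{itemize}
\item For PLA: $\mat{A}_1=(3/2,3/2)$, so $\mat{S}_1^{PLA}=\tfrac{1}{\sqrt2}(1,1)\cdot\tfrac23=\tfrac{\sqrt2}{3}(1,1)$.
\item For PDN: $\mat{D}_0=\mat{I}$, so $\tilde{\bb{k}}_1=\bb{k}_1/(1+1)=\bb{k}_1/2$ and $\mat{S}_1^{PDN}=v_1\tilde{\bb{k}}_1^\top=\tfrac{1}{2\sqrt2}(1,1)$.
\end{itemize}
Since $\sqrt2/3\neq 1/(2\sqrt2)$ (because $2/3\neq 1/2\cdot$ in the appropriate scaling; explicitly $\sqrt2/3\approx0.471$ and $1/(2\sqrt2)\approx0.354$), the states differ and we are done.

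The one point needing care is which normalization of the approximate PDN is meant. If instead one uses the identity $\tilde{\bb{k}}_t=\mat{P}_t\bb{k}_t$, with the current-step diagonal $\mat{D}_1^{-1}$, then $\tilde{\bb{k}}_1=\tfrac23\bb{k}_1$ and the $T=1$ states coincide. In that case I would move to $T=2$ with $\bb{k}_2=\bb{e}_1$, $v_2=0$. Then $\mat{S}_2^{PDN}=\mat{S}_1(\mat{I}-\bb{k}_2\tilde{\bb{k}}_2^\top)$ has a first coordinate that depends only on $\mat{S}_1\bb{e}_1$, while $\mat{S}_2^{PLA}=\mat{C}_1\mat{D}_2^{-1}$. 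A direct two-line computation then shows that the second coordinates differ: $\tfrac{1}{\sqrt2}\cdot\tfrac23$ for PLA versus $\tfrac{\sqrt2}{3}$ minus a nonzero cross term for PDN. This gives a mismatch under either convention. I will present the convention-robust $T=2$ example, noting that the cross term arising from the off-diagonal part of $\bb{k}_1\bb{k}_1^\top$ is exactly what breaks the Sherman--Morrison identity.
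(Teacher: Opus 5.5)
Your $T=1$ computation, under the convention $\tilde{\bb{k}}_t=\mat{D}_{t-1}^{-1}\bb{k}_t/(1+\bb{k}_t^\top\mat{D}_{t-1}^{-1}\bb{k}_t)$ that you list first, is correct and is essentially the paper's proof. The paper uses $\bb{k}_1=\bb{v}_1=(1,1)^\top$, $\lambda=1$, and compares $\tfrac12\bb{1}\bb{1}^\top$ with $\tfrac13\bb{1}\bb{1}^\top$. The paper defines the approximate PDN in exactly this Sherman--Morrison form with the previous-step diagonal, so that example alone proves the statement. You are also right that with the current-step key $\tilde{\bb{k}}_t=\mat{D}_t^{-1}\bb{k}_t$ the two states coincide at $T=1$, trivially, since $\mat{S}_0=\bb{0}$.

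The problem is the ``convention-robust'' $T=2$ example you say you will present instead: under the current-step convention it is not a counterexample. Because $\bb{k}_2=\bb{e}_1$ and the preconditioner is diagonal, $\tilde{\bb{k}}_2=\mat{D}_2^{-1}\bb{e}_1=(\tfrac25,0)^\top$ has no second component, so there is no cross term at all. You get $\mat{S}_2^{PDN}=\tfrac{\sqrt2}{3}(1,1)-\tfrac{\sqrt2}{3}(\tfrac25,0)=(\tfrac{\sqrt2}{5},\tfrac{\sqrt2}{3})$, which is exactly $\mat{S}_2^{PLA}=\tfrac{1}{\sqrt2}(\tfrac25,\tfrac23)$. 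More generally, if $\mat{S}_{t-1}^{PDN}=\mat{C}_{t-1}\mat{D}_{t-1}^{-1}$, then under this convention
\[
\mat{S}_t^{PDN}-\mat{S}_t^{PLA}=\mat{C}_{t-1}\mat{D}_{t-1}^{-1}\big(\diag(\bb{k}_t\odot\bb{k}_t)-\bb{k}_t\bb{k}_t^\top\big)\mat{D}_t^{-1}.
\]
This vanishes when $\bb{k}_t$ is axis-aligned (your own observation) and also when $\mat{C}_{t-1}=\bb{0}$. Your example puts the non-axis-aligned key at the step where $\mat{C}_0=\bb{0}$, and the axis-aligned key at the only step that could create a discrepancy.

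Under the Sherman--Morrison convention your $T=2$ states do differ. But that is only because they already differ at $T=1$ and $\mat{I}-\bb{k}_2\tilde{\bb{k}}_2^\top$ is invertible, not because of any cross term. The value $\tfrac{\sqrt2}{3}$ you use for the second entry of $\mat{S}_1^{PDN}$ is the current-step one.

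Swapping the order of the keys fixes it. Take $\lambda=1$, $\bb{k}_1=\bb{e}_1$, $v_1=1$, $\bb{k}_2=(1,1)^\top/\sqrt2$, $v_2=0$. Then $\mat{S}_2^{PLA}=(\tfrac25,0)$, while the current-step PDN gives $(\tfrac25,-\tfrac16)$ and the Sherman--Morrison PDN gives $(\tfrac37,-\tfrac17)$. Alternatively, just present the $T=1$ example, which already matches the paper's definition.
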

\begin{proof}

Let $\mat{S}_T^{APLA}$ and $\mat{S}_T^{APDN}$ the approximately preconditioned states. It suffices to show this for the diagonal approximate preconditioning case. To show this, we consider the case where $t=1$, and assume that $\mat{S}_0^{APLA} = \mat{S}_0^{APDN} = 0$.

Let $\lambda = 1$, $\bb{v}_1 = [1,1]^\top$ and $\bb{k}_1 = [1,1]^\top$. We have $\mat{G}_0 = \mat{I}$ and $\mat{G}_{1} = \mat{I} + \bb{1}\bb{1}^\top$. Let $\hat{\mat{G}}$ be the diagonal approximation, so $\hat{\mat{G}}_1 = 2\mat{I}$.
\begin{align*}
    \mat{S}^{APLA}_1 &= \mat{C}_{1}\hat{\mat{P}}_{1} = (\bb{v}_1 \bb{k}_1^\top) (\hat{\mat{G}}_1)^{-1} \\
    &= (\bb{1}\bb{1}^\top)(\frac{1}{2} I) = \frac{1}{2}\bb{1}\bb{1}^\top
\end{align*}
Now expanding $\mat{S}^{APDN}_1$:
\begin{align*}
    \tilde{\bb{k}}_1 &= \frac{\hat{\mat{G}}_{0}^{-1}\bb{k}_{1} }{1 + \bb{k}_{1}^\top \hat{\mat{G}}_{0}^{-1} \bb{k}_{1}} 
    = \frac{\mat{I}\bb{1} }{1 + \bb{1}^\top \mat{I} \bb{1}}
    = \frac{1}{3}\bb{1}
\end{align*}
\begin{align*}
    \mat{S}^{APDN}_1 &= \mat{S}_{0}^{APDN}  + (\bb{v}_{t+1} - \mat{S}_{0}^{APDN} \bb{k}_{1})\tilde{\bb{k}}_{1}^\top \\
    &= 0 + (\bb{1} - 0) \frac{1}{3} \bb{1}^\top
    = \frac{1}{3}\bb{1}\bb{1}^\top \neq \mat{S}^{APLA}_1
\end{align*}
\end{proof}

\subsection{Stability of write-key formulation}
\label{app:sec:atk_minus_one_vs_t}

As discussed in Appendix \ref{app:theory:atk-atq-equivalence}, we have a choice of writing $\tilde{\bb{k}}_{t+1}$ as $\frac{\mat{G}_{t}^{-1}\bb{k}_{t+1} }{1 + \bb{k}_{t+1}^\top \mat{G}_{t}^{-1} \bb{k}_{t+1}}$ (1) or $\mat{G}_{t+1}^{-1}\bb{k}_{t+1}$ (2). In this diagonal approximate case, however, there is a distinction, and it is necessary to use method (1). The state transition matrix is given by $\mat{I} - \beta_{t+1} \bb{k}_{t+1} \tilde{\bb{k}}_{t+1}^\top$. For the eigenvalues of this matrix to be norm-bounded by 1, it is necessary that $0\leq \bb{k}_{t+1}^\top \tilde{\bb{k}}_{t+1} \leq 2$. When using (1), this is guaranteed, even in the diagonal approximate case:
\begin{align*}
    \bb{k}_{t+1}^\top \tilde{\bb{k}}_{t+1} &= \bb{k}_{t+1}^\top \left( \frac{\hat{\mat{G}}^{-1}\bb{k}_{t+1} }{1 + \bb{k}_{t+1}^\top \hat{\mat{G}}_{t}^{-1} \bb{k}_{t+1}} \right) \\
    &= \frac{\bb{k}_{t+1}^\top \hat{\mat{G}}_{t}^{-1}\bb{k}_{t+1} }{1 + \bb{k}_{t+1}^\top \hat{\mat{G}}_{t}^{-1} \bb{k}_{t+1}}
\end{align*}
which is bounded between $0$ and $1$ because $\hat{\mat{G}}_t$ is positive definite. However, it is not the case that $ \bb{k}_{t+1}^\top \hat{\mat{G}}_{t+1}^{-1} \bb{k}_{t+1}$ is bounded, as there is a normalizing term in the denominator.

\subsection{MesaNet parallelization}
\label{app:theory:mesa}

Exact inverse Gram updates follow the Sherman-Morrison recursion
\[
\mat{P}_t
=
\mat{P}_{t-1}
-
\frac{\mat{P}_{t-1}\bb{k}_t\bb{k}_t^\top\mat{P}_{t-1}}{1+\bb{k}_t^\top\mat{P}_{t-1}\bb{k}_t},
\]
which depends on $\mat{P}_{t-1}$ in a way that cannot be summarized into a chunk-local affine transition. As a result, one cannot precompute per-chunk update statistics for $\mat{P}$ without performing the full within-chunk sequence of inverse updates, precluding a chunkwise parallel form from being constructed. Thus, Mesalayer \citep{mesalayer}, which employs the naive, sequential form of the recurrence, is quite inefficient. We note that it is possible to use an intra-chunk sequential, inter-chunk parallel form, by computing the Gram matrix at chunk boundaries in parallel, and running the recurrence sequentially within a chunk. Future work could explore this option.

In contrast, MesaNet \citep{mesanet}, solves these inverses in a parallel form by iteratively performing conjugate gradient steps. The key insight for this is that conjugate gradient directions can be computed by repeatedly applying linear attention with a fixed state, but varying queries, which can be implemented efficiently. The limitation of this is that to do the backwards pass, it is necessary to use the Implicit Value Theorem to avoid storing the conjugate gradient unrolling steps used in the forward pass. In the case of non-optimality of the forward pass (i.e. not full convergence), this process can lead to incorrect gradients, offering potential intuition as to the training instabilities we observed when training MesaNet ourselves (see Appendix \ref{app:mesa_train} for more details). 

\subsection{DeltaNet better approximates least squares than linear attention}
\label{app:theory:atk-better-atq}
For this analysis, we set the ridge to $\lambda=0$, i.e.\ $\mat{G}_t := \sum_{i=1}^t \bb{k}_i\bb{k}_i^\top$ and $\mat{P}_t=\mat{G}_t^{-1}$.

\begin{proof}
Since $\bb{k}_t,\bb{q}_t\sim \mathrm{Unif}(\mathbb{S}^{d-1})$, we have
$\|\bb{k}_t\|_2=\|\bb{q}_t\|_2=1$ as
$\mathbb{E}[\bb{k}_t\bb{k}_t^\top]=\mathbb{E}[\bb{q}_t\bb{q}_t^\top]=\frac{1}{d}\mat{I}$,
and $(\bb{k}_t\bb{k}_t^\top)^2=\bb{k}_t\bb{k}_t^\top$.

Recall from the normal equations (Eq.~(2) in the main text) that the exact least-squares map at time $t$ is
$\mat{S}_t^\star=\mat{C}_t\mat{P}_t$ with $\mat{P}_t=\mat{G}_t^{-1}$, and hence
$\bb{o}_t^\star=(\mat{C}_t\mat{P}_t)\bb{q}_t$ (equivalently $\bb{o}_t^\star=\mat{C}_t(\mat{P}_t\bb{q}_t)$ by ATK/ATQ).

\paragraph{Noiseless least-squares regime.}
As in the noiseless regression setting, assume there exists a fixed map $\mat{S}^\star$ such that
$\bb{v}_t=\mat{S}^\star\bb{k}_t$ for all $t$. Then
\[
\mat{C}_t=\sum_{i=1}^t \bb{v}_i\bb{k}_i^\top
=\mat{S}^\star\sum_{i=1}^t \bb{k}_i\bb{k}_i^\top
=\mat{S}^\star\mat{G}_t,
\]
so (for $t\ge d$, when $\mat{G}_t$ is invertible) we have
$\mat{S}_t^\star=\mat{C}_t\mat{P}_t=\mat{S}^\star$ and therefore $\bb{o}_t^\star=\mat{S}^\star\bb{q}_t$.
Moreover, linear attention outputs $\bb{o}_t^{\mathrm{LA}}=\mat{C}_t\bb{q}_t=\mat{S}^\star\mat{G}_t\bb{q}_t$.

\paragraph{Averaging over isotropic queries.}
For any (possibly random) matrix $\mat{A}$ independent of $\bb{q}_t$,
\begin{equation}
\label{eq:isotropic_query_id_app}
\mathbb{E}_{\bb{q}_t}\|\mat{A}\bb{q}_t\|_2^2
=\mathrm{tr}\!\Big(\mat{A}\,\mathbb{E}[\bb{q}_t\bb{q}_t^\top]\,\mat{A}^\top\Big)
=\frac{1}{d}\|\mat{A}\|_F^2.
\end{equation}

\paragraph{Linear attention deviation from exact LS.}
We have
\[
\bb{o}_t^{\mathrm{LA}}-\bb{o}_t^\star
=\mat{S}^\star(\mat{G}_t-\mat{I})\bb{q}_t.
\]
Conditioning on $\{\bb{k}_i\}_{i\le t}$ and using Eq.~\eqref{eq:isotropic_query_id_app},
\[
\mathbb{E}_{\bb{q}_t}\|\bb{o}_t^{\mathrm{LA}}-\bb{o}_t^\star\|_2^2
=\frac{1}{d}\,\|\mat{S}^\star(\mat{G}_t-\mat{I})\|_F^2
=\frac{1}{d}\,\mathrm{tr}\!\big(\mat{S}^{\star\top}\mat{S}^\star(\mat{G}_t-\mat{I})^2\big).
\]
By rotational invariance of $\mat{G}_t$ under i.i.d.\ isotropic keys,
$\mathbb{E}[(\mat{G}_t-\mat{I})^2]=\gamma_t\mat{I}$ for some scalar
$\gamma_t=\frac{1}{d}\mathbb{E}\|\mat{G}_t-\mat{I}\|_F^2$, hence
\[
\mathbb{E}\|\bb{o}_t^{\mathrm{LA}}-\bb{o}_t^\star\|_2^2
=\frac{\|\mat{S}^\star\|_F^2}{d}\,\gamma_t.
\]
It remains to compute $\gamma_t$. Using $\mathrm{tr}(\mat{G}_t)=t$ and
\[
\mathrm{tr}(\mat{G}_t^2)=t+\sum_{i\neq j}(\bb{k}_i^\top\bb{k}_j)^2,
\qquad
\mathbb{E}(\bb{k}_i^\top\bb{k}_j)^2=\frac{1}{d}\ \ (i\neq j),
\]
we obtain
\[
\mathbb{E}\|\mat{G}_t-\mat{I}\|_F^2
=\Big(t+\frac{t(t-1)}{d}\Big)-2t+d
=d-t+\frac{t(t-1)}{d},
\]
so
\begin{equation}
\label{eq:la_err_closed_app}
\mathbb{E}\|\bb{o}_t^{\mathrm{LA}}-\bb{o}_t^\star\|_2^2
=\frac{\|\mat{S}^\star\|_F^2}{d}\Big(1-\frac{t}{d}+\frac{t(t-1)}{d^2}\Big).
\end{equation}

\paragraph{DeltaNet deviation from exact LS.}
Let $\mat{S}_t$ denote the DeltaNet state with $\mat{S}_0=\bb{0}$ and update
\[
\mat{S}_t=\mat{S}_{t-1}+\beta_t(\bb{v}_t-\mat{S}_{t-1}\bb{k}_t)\bb{k}_t^\top,
\qquad \beta_t\in[0,1].
\]
Define the operator error $\bb{\Delta}_t:=\mat{S}^\star-\mat{S}_t$. Using $\bb{v}_t=\mat{S}^\star\bb{k}_t$,
\[
\bb{\Delta}_t
=\bb{\Delta}_{t-1}-\beta_t(\bb{\Delta}_{t-1}\bb{k}_t)\bb{k}_t^\top
=\bb{\Delta}_{t-1}(\mat{I}-\beta_t\,\bb{k}_t\bb{k}_t^\top).
\]
Since $(\bb{k}_t\bb{k}_t^\top)^2=\bb{k}_t\bb{k}_t^\top$, we have
\[
\|\bb{\Delta}_t\|_F^2
=\|\bb{\Delta}_{t-1}\|_F^2-\beta_t(2-\beta_t)\|\bb{\Delta}_{t-1}\bb{k}_t\|_2^2
\le \|\bb{\Delta}_{t-1}\|_F^2,
\]
and therefore $\|\bb{\Delta}_t\|_F^2\le \|\bb{\Delta}_0\|_F^2=\|\mat{S}^\star\|_F^2$.
Finally,
$\bb{o}_t^{\mathrm{DN}}-\bb{o}_t^\star=(\mat{S}_t-\mat{S}^\star)\bb{q}_t=-\bb{\Delta}_t\bb{q}_t$ and thus by
Eq.~\eqref{eq:isotropic_query_id_app},
\begin{equation}
\label{eq:dn_err_bound_app}
\mathbb{E}\|\bb{o}_t^{\mathrm{DN}}-\bb{o}_t^\star\|_2^2
=\frac{1}{d}\mathbb{E}\|\bb{\Delta}_t\|_F^2
\le \frac{\|\mat{S}^\star\|_F^2}{d}.
\end{equation}

\paragraph{Conclude existence of $t_0$.}
By Eq.~\eqref{eq:la_err_closed_app}, the linear-attention error satisfies
$\mathbb{E}\|\bb{o}_t^{\mathrm{LA}}-\bb{o}_t^\star\|_2^2
=\frac{\|\mat{S}^\star\|_F^2}{d}\big(1-\frac{t}{d}+\frac{t(t-1)}{d^2}\big)$,
and the bracketed factor is $\ge 1$ for all $t\ge d+1$.
Combining with Eq.~\eqref{eq:dn_err_bound_app}, we obtain that for all $t\ge d+1$,
\[
\mathbb{E}\|\bb{o}_t^{\mathrm{DN}}-\bb{o}_t^\star\|_2^2
\le \frac{\|\mat{S}^\star\|_F^2}{d}
\le \mathbb{E}\|\bb{o}_t^{\mathrm{LA}}-\bb{o}_t^\star\|_2^2.
\]
Thus the claim holds with (for example) $t_0=d+1$.
\end{proof}

%% file: supplementary_material/C_methods.tex
\section{Algorithm and implementation details}
\label{app:impl}

\subsection{Pseudocode comparing the GDN and PGDN chunkwise parallel kernels}
\label{app:impl:kernel-deltas}

\captionsetup{type=figure}
\captionof{figure}{Pseudocode for forward pass of GDN vs.\ PGDN chunkwise parallel kernels. The PKDA forward pass is analogously adapted from the KDA forward pass.}
\label{fig:gdn-pgdn-forward}

\begin{codebox}
{\small\bfseries (a) GDN recurrence forward pass}\vspace{1pt}

\begin{lstlisting}[style=compactpython]
# Recurrence:
#   S_t = alpha * S_{t-1}
#       + beta * k @ (v - alpha * S_{t-1}^T @ k)^T
# where alpha = exp(g)
#
# Notation:
#   h = S (hidden state)
#   g = log(alpha), so exp(g) = alpha (decay gate)

def gated_delta_rule_forward(q, k, v, g, beta, initial_state):
    # Step 1: Cumsum of log-space gate
    g_cumsum = chunk_local_cumsum(g)

    # Step 2: Symmetric KKT matrix (k @ k^T)
    A = chunk_scaled_dot_kkt_fwd(k, g_cumsum, beta)
    A = solve_tril(A)

    # Step 3: WY representation
    w, u = recompute_w_u_fwd(k, v, beta, A, g_cumsum)

    # Step 4: Hidden state update
    h, v_new, final_state = chunk_gated_delta_rule_fwd_h(
        k, w, u, g_cumsum, initial_state
    )

    # Step 5: Output
    o = chunk_fwd_o(q, k, v_new, h, g_cumsum)

    return o, final_state
\end{lstlisting}
\end{codebox}

\medskip

\begin{codebox}
{\small\bfseries (b) PGDN recurrence forward pass}\vspace{1pt}

\begin{lstlisting}[style=compactpython]
# Recurrence:
#   ATK:
#     A_t = alpha_atk * A_{t-1} + beta_atk * k^2   (diagonal)
#   Delta Rule:
#     S_t = alpha * S_{t-1}
#         + beta * k_precond @ (v - alpha * S_{t-1}^T @ k)^T
#
# where alpha_atk = exp(g_atk), alpha = exp(g)
#
# Notation:
#   h = S (hidden state)
#   g = log(alpha), g_atk = log(alpha_atk)
#   ac = accumulated ATK state at chunk boundaries (inter-chunk)
#   a_atk = per-chunk ATK state (intra-chunk)
#   k_precond = k * B(A), where B is the squash function
#
# Key:
#   k for reading, k_precond for writing

def precond_gated_delta_rule_forward(
    q, k, v, g_atk, g, beta_atk, beta, initial_state, log_A_scale
):
    # Step 1 [NEW]: ATK diagonal recurrence
    # Outputs k_precond = k * B(A)
    k_precond, ac, a_atk, at = fused_atk_fwd(
        k, beta_atk, g_atk, log_A_scale
    )

    # Step 2: Cumsum of log-space gate (uses g, not g_atk)
    g_cumsum = chunk_local_cumsum(g)

    # Step 3 [MODIFIED]: Asymmetric KKT
    # (k @ k_precond^T instead of k @ k^T)
    A = chunk_precond_kkt_fwd(k, k_precond, g_cumsum, beta)
    A = solve_tril(A)

    # Step 4: WY representation (uses k for reading)
    w, u = recompute_w_u_fwd(k, v, beta, A, g_cumsum)

    # Step 5 [MODIFIED]: Hidden state update
    # (uses k_precond for writing)
    h, v_new, final_state = chunk_gated_delta_rule_fwd_h(
        k_precond, w, u, g_cumsum, initial_state
    )

    # Step 6 [MODIFIED]: Output
    # (uses k_precond for local attention)
    o = chunk_fwd_o(q, k_precond, v_new, h, g_cumsum)

    return o, final_state, k_precond, ac, a_atk, at
\end{lstlisting}
\end{codebox}

\bigskip

\captionsetup{type=figure}
\captionof{figure}{Pseudocode for backward pass of GDN vs.\ PGDN chunkwise parallel kernels. The PKDA backward pass is analogously adapted from the KDA backward pass.}
\label{fig:gdn-pgdn-backward}

\begin{codebox}
{\small\bfseries (a) GDN recurrence backward pass}\vspace{1pt}

\begin{lstlisting}[style=compactpython]
def gated_delta_rule_backward(
    q, k, v, g_cumsum, beta, A, initial_state, do, dht
):
    # Recompute forward intermediates
    w, u = recompute_w_u_fwd(k, v, beta, A, g_cumsum)
    h, v_new, _ = chunk_gated_delta_rule_fwd_h(
        k, w, u, g_cumsum, initial_state
    )

    # Step 1-3: Backward through output and state
    dv = chunk_bwd_dv_local(q, k, g_cumsum, do)
    dh, dh0, dv = chunk_gated_delta_rule_bwd_dhu(
        q, k, w, g_cumsum, initial_state, dht, do, dv
    )
    dq, dk, dw, dg = chunk_bwd_dqkwg(
        q, k, v_new, w, g_cumsum, h, dv, do, dh
    )

    # Step 4: Backward through WY (symmetric)
    dk2, dv, dbeta, dg2 = prepare_wy_repr_bwd(
        k, v, beta, g_cumsum, A, dw, dv
    )

    # Step 5: Combine gradients
    dk = dk + dk2
    dg = reverse_cumsum(dg + dg2)

    return dq, dk, dv, dbeta, dg, dh0
\end{lstlisting}
\end{codebox}

\medskip

\begin{codebox}
{\small\bfseries (b) PGDN recurrence backward pass}\vspace{1pt}

\begin{lstlisting}[style=compactpython]
def precond_gated_delta_rule_backward(
    q, k, v, g_atk, g_cumsum, beta_atk, beta, A,
    ac, a_atk, k_precond, initial_state, do, dht, log_A_scale
):
    # Recompute forward intermediates
    w, u = recompute_w_u_fwd(k, v, beta, A, g_cumsum)
    h, v_new, _ = chunk_gated_delta_rule_fwd_h(
        k_precond, w, u, g_cumsum, initial_state
    )

    # Step 1-3 [MODIFIED]:
    # Backward through output and state (uses k_precond instead of k)
    dv = chunk_bwd_dv_local(q, k_precond, g_cumsum, do)
    dh, dh0, dv = chunk_gated_delta_rule_bwd_dhu(
        q, k_precond, w, g_cumsum, initial_state, dht, do, dv
    )
    dq, dk_precond, dw, dg = chunk_bwd_dqkwg(
        q, k_precond, v_new, w, g_cumsum, h, dv, do, dh
    )

    # Step 4 [MODIFIED]: Asymmetric WY backward
    # (returns separate dk_wy and dk_precond_wy)
    dk_wy, dk_precond_wy, dv, dbeta, dg2 = prepare_precond_wy_repr_bwd(
        k, k_precond, v, beta, g_cumsum, A, dw, dv
    )
    dk_precond = dk_precond + dk_precond_wy

    # Step 5 [NEW]: ATK backward
    # (propagates dk_precond through k_precond = k * B(A))
    dk_atk, dbeta_atk, dg_atk, d_log_A_scale = atk_backward(
        k, g_atk, beta_atk, ac, a_atk, dk_precond, log_A_scale
    )

    # Step 6: Combine gradients
    dk = dk_wy + dk_atk
    dg = reverse_cumsum(dg + dg2)

    return dq, dk, dv, dbeta_atk, dbeta, dg_atk, dg, dh0, d_log_A_scale
\end{lstlisting}
\end{codebox}

\subsection{The DPLR taxonomy from an algorithmic standpoint}
\label{app:impl:tying-future}

\begin{table}[H]
\centering
\caption{Tying forms of various linear recurrences within the DPLR space parameterized by $\mat{S}_t = \mat{S}_{t-1}\big(\mat{D}_t - \bb{a}_t \bb{b}_t^\top\big) + \bb{v}_t \bb{k}_t^\top$.}
\label{tab:tying-map}
\small
\begin{tabular}{ll}
\toprule
Model & Tying \\
\midrule
DN      & $\bb{a}_t=\bb{b}_t=\bb{k}_t$ \\
GDN     & $\bb{a}_t=\bb{b}_t=\bb{k}_t$ \\
KDA     & $\bb{a}_t=\bb{b}_t=\bb{k}_t$ \\
Comba   & $\bb{a}_t=\bb{b}_t=\bb{k}_t$ \\
RWKV-7  & $\bb{a}_t=\bb{b}_t,\;\bb{k}_t$ \\
\midrule
PDN     & $\bb{a}_t,\bb{b}_t=\bb{k}_t$ \\
PGDN    & $\bb{a}_t,\bb{b}_t=\bb{k}_t$ \\
PKDA    & $\bb{a}_t,\bb{b}_t=\bb{k}_t$ \\
\bottomrule
\end{tabular}
\end{table}

As discussed in Section \ref{sec:exp:efficiency}, the DPLR taxonomy proposes the following functional form for recurrence design: $\mat{S}_t = \mat{S}_{t-1}\big(\mat{D}_t - \bb{a}_t \bb{b}_t^\top\big) + \bb{v}_t \bb{k}_t^\top$. Here, we discuss how existing models and our preconditioned variants reside in the DPLR framework, particularly as it pertains to constrained forms of tying among the $\bb{a}_t, \bb{b}_t, \bb{k}_t$ vectors as shown in Table \ref{tab:tying-map}. 

\paragraph{Efficiency of constrained DPLR via interaction tensors.}
We write the DPLR-style update in the right-multiplying form used throughout this paper,
\begin{equation}
\label{eq:dplr_right}
\mat{S}_t
\;=\;
\mat{S}_{t-1}\big(\mat{D}_t - \bb{a}_t \bb{b}_t^\top\big)
\;+\;
\bb{v}_t \bb{k}_t^\top.
\end{equation}
In chunkwise parallel implementations (see Section 6.2 of the KDA paper) of the general DPLR form, the dominant intra-chunk cost is governed by the number of distinct lower-triangular interaction tensors (BT$\times$BT) that must be formed. Denoting the per-chunk interaction matrices by
\[
\mat{A}_{xy}[i,j] \;\propto\; \langle \bb{x}_i,\bb{y}_j\rangle \quad (i,j \text{ within the same chunk}),
\]
the required set of interactions depends on how $\bb{a}_t,\bb{b}_t,\bb{k}_t$ are tied.

\smallskip
\noindent\textbf{(i) General DPLR (no tying).}
In the general case, the chunkwise DPLR computation requires four distinct interaction tensors
(Listing~8a in the KDA paper):
\[
\mat{A}_{qk},\quad \mat{A}_{qb},\quad \mat{A}_{ak},\quad \mat{A}_{ab}.
\]
These additional interaction paths are precisely what the KDA paper highlights as a major source of inefficiency, compounded in practice by
the need for numerically stable handling of reciprocal cumulative decays in the chunkwise formulation.

\smallskip
\noindent\textbf{(ii) KDA tying (single-key case).}
KDA corresponds to a constrained DPLR family in which the relevant vectors are all derived from the same key stream such that $\bb{a}_t=\bb{b}_t=\bb{k}_t$(up to scaling). This means that the required interactions collapse to two (Listing~8b in the KDA report):
\[
\mat{A}_{qk},\quad \mat{A}_{kk}.
\]
KDA further notes that this constraint also removes several matrix multiplications in the inter-chunk/output paths relative to general DPLR
(Listing~8a vs.\ Listing~8b), yielding a substantially more efficient kernel.

Note that this tying scheme is also used in DeltaNet and GDN. 

\smallskip
\noindent\textbf{(iii) RWKV-7 tying ($\bb{a}=\bb{b}$, $\bb{k}$ free).}
If one ties only $\bb{a}_t=\bb{b}_t$ while keeping the additive-update key $\bb{k}_t$ independent, then
\[
\mat{A}_{ab}=\mat{A}_{aa},\qquad \mat{A}_{qb}=\mat{A}_{qa},
\]
but the interactions involving $\bb{k}$ remain distinct. As a result, the required interaction tensors remain four in general:
\[
\mat{A}_{qk},\quad \mat{A}_{qa},\quad \mat{A}_{ak},\quad \mat{A}_{aa}.
\]
Thus, tying $\bb{a}=\bb{b}$ alone does not yield the two-interaction structure that underlies the efficiency of KDA. We note that this tying scheme is used in the RWKV-7 \citep{rwkv7} model, which necessitates the use of the general DPLR kernel in \url{flash-linear-attention}. 

\smallskip
\noindent\textbf{(iv) Our tying ($\bb{b}=\bb{k}$, $\bb{a}$ free).}
Because Eq.~\eqref{eq:dplr_right} updates along the right direction $\bb{k}_t$ via $\bb{v}_t\bb{k}_t^\top$, aligning the right factor of the
low-rank term with the same write direction by setting
\begin{equation}
\label{eq:bk_tying}
\bb{b}_t \equiv \bb{k}_t,\qquad \bb{a}_t \ \text{free},
\end{equation}
this collapses the interaction set as follows:
\[
\mat{A}_{qb}=\mat{A}_{qk},\qquad \mat{A}_{ab}=\mat{A}_{ak}.
\]
Consequently, the chunkwise computation requires only two interaction tensors,
\[
\mat{A}_{qk},\quad \mat{A}_{ak},
\]
matching the same two-interaction scaling as KDA.
Relative to the RWKV-7 tying scheme $\bb{a}=\bb{b}$, the constraint Eq.~\eqref{eq:bk_tying} is strictly more efficient at the intra-chunk level because it
eliminates two interaction tensors outright, whereas $\bb{a}=\bb{b}$ does not reduce the general DPLR interaction count.

\paragraph{Distinguishing between the write and read key.} From an interpretability standpoint, in our preconditioned recurrences (i.e. PDN, PGDN, PKDA), we can intuit $\bb{a}_t$ as the read key and $\bb{b}_t = \bb{k}_t$ as the write key, each serving a distinct purpose in modulating the recurrent dynamics as discussed in Section \ref{sec:exp:eigenvalues}  

\paragraph{Transpose convention.}
The KDA paper presents the DPLR recurrence using the left-multiplying form
$\mat{S}_t=(\mat{D}_t-\bb{a}_t\bb{b}_t^\top)\mat{S}_{t-1}+\bb{k}_t\bb{v}_t^\top$.
Under this transposed convention, the roles of $\bb{a}$ and $\bb{b}$ swap with respect to the write direction, so our tying
$\bb{b}\equiv \bb{k}$ in Eq.~\eqref{eq:bk_tying} corresponds to the statement $\bb{a}\equiv \bb{k}$ with $\bb{b}$ free using the notation in KDA. 

%% file: supplementary_material/E_experiments_appendix.tex
\newpage

\section{Experimental setup and additional results}
\label{app:exp-ablations}

\subsection{Experimental details}
\label{app:exp-details}

\paragraph{Training hyperparameters.}
We follow the same training setup as prior work, \citep{deltanet, comba, gdn} where we use 8 H100 GPUs for 340M and 1B language modeling experiments. We train all of our models using the \url{flame} repository with the following hyperparameters:
\begin{itemize}
    \item AdamW for optimization
    \item Peak learning rate of $4 \times 10^{-4}$, cosine learning rate schedule,  initial and final learning rates set at $4 \times 10^{-5}$
    \item Warmup phase of 0.5B tokens
    \item Batch size of 0.5M tokens
    \item Weight decay of 0.01 and gradient clipping of 1.0
\end{itemize}
The 340M models are trained using 15B tokens, while the 1B models are trained with 50 billion tokens.

\paragraph{Architectural hyperparameters.}
For the 340M DeltaNet, GDN, and KDA models, the following architectural hyperparameters are used to instantiate the model. All hyperparameters that are not specified follow the default ones as implemented in \url{flash-linear-attention}. The model dimension is set to 1024, the number of heads is set to 8, and the key and value dimensions are set to 128 (i.e. no expansion factor). A total of 24 layers are used in the network, each of which consists of both a sequence mixing layer and a channel mixing layer (SwiGLU). For DeltaNet and Gated DeltaNet, we do not enable the use of an output gate after the recurrence. However, in KDA, we are forced, since there is no option to disable the output gate in the KDA kernel in \url{flash-linear-attention}, as its computation is fused into the recurrence. Furthermore, since all kernels in \url{flash-linear-attention} are optimized for strides of 16, using a smaller head dimension of 120 in KDA to construct an iso-parameter comparison across DeltaNet, GDN and KDA results significantly reduces throughput. As we wanted the architectural configurations we analyzed in our training throughput analyses to align with the models we trained, the model referred to as 340M KDA actually has 355M parameters.  

For the 1B GDN and KDA models, we increase the number of heads to 14. In the GDN 1B, we keep the head dimension as 128, resulting in a model dimension of 1792. To enable an iso-parametric comparison at the larger scale, in the KDA 1B model, we scale the model dimension down to 1680, resulting in a head dimension of 120 to account for the use of output gates in KDA. We do not train DeltaNet at the 1B scale, given the significantly worse performance we observe at the 340M scale relative to GDN and KDA due to the lack of a decay term in the recurrence. 

We also note that for all models, we L2 normalize the keys and queries such that $||\bb{k}_t|| = 1$ and $||\bb{q}_t|| = 1$. This point will be relevant regarding our discussion on the eigenvalues of the recurrence in Appendix \ref{app:ablation-narrative}. 

\paragraph{Preconditioner hyperparameters.} The PDN, PGDN, and PKDA models share all the same architectural components with their unpreconditioned baselines, with marginal parametric overhead coming from parameters used in the preconditioner recurrence. The additional parameters include $\alpha_t^P$ and $\beta_t^P$ projection layers used to parameterize the decay and gain terms, respectively, in the preconditioner recurrence (each of which adds $DH$ parameters to the layer) and a per-head learnable scalar to center the learned preconditioner distribution (which adds $H$ parameters to the layer). Note that $D$ denotes the model dimension and $H$ denotes the number of heads. The last relevant parameter of the preconditioner recurrence is the value of $x$, which we set to be $1.5$, meaning that the elements of the learned preconditioner $\mat{B}_t$ lie in the interval $[\frac{2}{3}, \frac{3}{2}]$. For more details on the parameterization of the preconditioner recurrence, refer to Appendix \ref{app:ablation-narrative}. 

\paragraph{Evaluation suite.} 
For our small-scale synthetic experiments, to probe the associative recall capabilities of our model, we evaluate on the MQAR benchmark and vary the number of key-value pairs as well as the sequence length to modulate the difficulty of the task. For all other hyperparameter settings, we use the defaults from \cite{zoology}.

Following the evaluation scheme used in \citep{gdn, comba}, for our large-scale language models, we test on a range of commonsense-reasoning benchmarks, including PIQA \citep{bisk2020piqa}, HellaSwag \citep{zellers-etal-2019-hellaswag}, WinoGrande \citep{sakaguchi2020winogrande},
ARC-Easy and ARC-Challenge \citep{clark2018arc}, WikiText \citep{merity2017pointer}, LAMBADA \citep{paperno-etal-2016-lambada}, BoolQ \citep{clark-etal-2019-boolq} and SciQ \citep{welbl-etal-2017-crowdsourcing}. 

Additionally, we evaluate our models on both synthetic and real-world in-context retrieval (ICR) tasks. For synthetic long-context retrieval, we evaluate on the Single Needle-In-A-Haystack (S-NIAH) subset of the RULER \citep{hsieh2024ruler} benchmark. In particular, we evaluate our models on instances of this task corresponding to increasing levels of difficulty: S-NIAH-1, S-NIAH-2, and S-NIAH-3.
For real-world retrieval, following \citep{arora2024justreadtwice}, we evaluate  on the SWDE \citep{lockard-etal-2019-openceres}, FDA \citep{arora2023evaporate}, SQuAD \citep{rajpurkar-etal-2018-know}, TQA \citep{tqa} and DROP \citep{drop} benchmarks. Note that for TQA and DROP, we used the task versions in \url{lm-evaluation-harness} corresponding to the ones presented in \citep{based}. We omit the NQ \citep{nq} benchmark from the set of evaluations since our models were trained at a context length of 2K and most of the examples in the NQ suite exceed this length. 

% Unlike the evaluations in \citep{gdn}, we do not adopt the cloze-style prompt formatting from \cite{arora2024justreadtwice} for ICR evaluations. 

\subsection{Results continued}
\label{app:main-results-continued}

\subsubsection{Additional MQAR results}
\begin{figure}[H]
\centering
\includegraphics[scale=0.45]{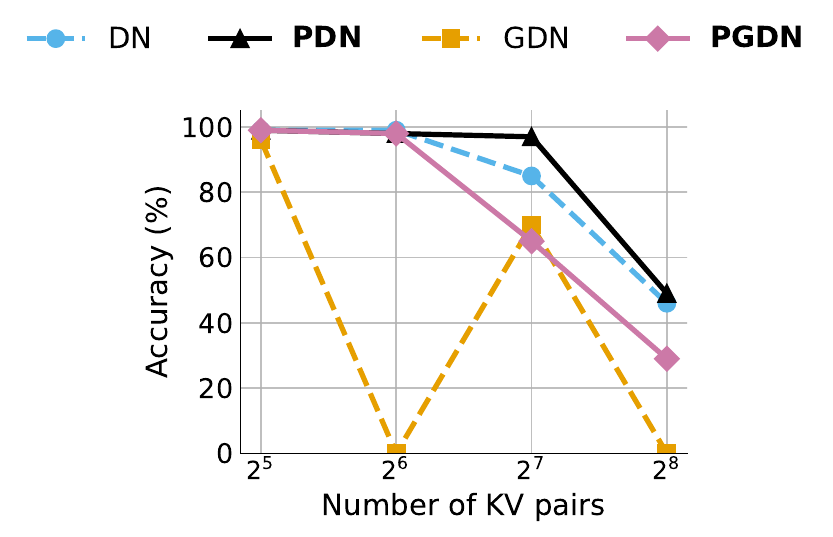}
\caption{Additional results on the synthetic MQAR task. Here, we keep the sequence length fixed at 1024 and vary the number of KV pairs. As we observe in Section \ref{sec:exp:language} with the other MQAR tasks configurations, we find that preconditioned recurrences either maintain or improve performance.}
\label{fig:mqar_results_extended}
\end{figure}

\subsubsection{Full language results}
\vspace{1cm}

\begin{table}[H]
\centering

% --- local sizing (only this table) ---
\scriptsize
\setlength{\tabcolsep}{2.0pt}
\renewcommand{\arraystretch}{1.05}

% --- table-local column types (single-letter, array-compatible) ---
\newcolumntype{N}{>{\centering\arraybackslash}c}
\newcolumntype{V}{>{\columncolor{avgshade}\centering\arraybackslash}c}

\caption{Full table of language modeling evaluation results for 340M and 1B models trained on the SlimPajama dataset. This includes results on 340M DN, GDN, and KDA models where we allow $\beta_t$ to vary in $[0,2]$ (instead of the default $[0,1]$), which enables these recurrences to learn negative eigenvalues (see \cite{statetracking}). These models are denoted by $[-1,1]$. All tasks are evaluated using \texttt{lm-evaluation-harness}.}
\label{tab:lm_eval_full}

% Shrink-to-fit ONLY if needed (never scales up).
\begin{adjustbox}{max width=\textwidth}
% Vertical dividers: Model | Commonsense | ICR
\begin{tabular}{@{}l@{\hskip 3pt\vrule\hskip 3pt}
                N N N N N N N N N N V
                @{\hskip 3pt\vrule\hskip 3pt}
                N N N N N V@{}}
\toprule
\textbf{Model}
& \multicolumn{11}{c}{\textbf{Commonsense Reasoning} $\uparrow$}
& \multicolumn{6}{c}{\textbf{In-context retrieval} $\uparrow$} \\
\cmidrule(lr){2-12}\cmidrule(lr){13-18}
& LAMB & Wiki
& ARC$_e$ & ARC$_c$ & HellaS & Lamb. & PIQA & WinoG & BoolQ & SciQ & Avg
& FDA & SWDE & SQuAD & TQA & DROP & Avg \\
& ppl$\downarrow$ & ppl$\downarrow$
& acc & acc$_n$ & acc$_n$ & acc & acc & acc & acc & acc & acc
& acc & acc & acc & acc & acc & acc \\
\midrule

\multicolumn{18}{l}{\textit{340M params with 15B training tokens and 0.5M batchsize tokens}}\\
\addlinespace[1pt]

DN $[-1,1]$
& 48.30 & 28.96
& 44.53 & 23.89 & 33.99 & 29.07 & 65.02 & 52.01 & 55.10 & 75.10 & 47.34
& 9.53 & 30.15 & 28.95 & 29.60 & 15.40 & 22.73 \\

DN
& 45.76 & 29.04
& 44.02 & 23.55 & 34.08 & 29.58 & 65.07 & 50.91 & 56.80 & 75.30 & 47.41
& 8.53 & 27.09 & 28.32 & 31.20 & 17.80 & 22.59 \\

\textbf{PDN}
& 43.05 & 29.05
& 44.70 & 23.52 & 33.83 & 30.48 & 65.14 & 52.41 & 58.50 & 75.10 & \textbf{\winblue{47.96}}
& 14.16 & 29.17 & 28.89 & 30.60 & 16.40 & \textbf{\winblue{23.84}} \\

GDN $[-1,1]$
& 30.11 & 27.00
& 45.88 & 23.81 & 35.56 & 35.16 & 65.07 & 51.30 & 58.70 & 77.50 & 49.12
& 16.97 & 31.41 & 31.03 & 35.00 & 16.70 & \textbf{\winyellow{26.22}} \\

GDN
& 31.39 & 27.08
& 44.49 & 24.32 & 35.96 & 34.50 & 65.83 & 51.30 & 56.30 & 78.20 & 48.86
& 13.61 & 29.43 & 29.83 & 33.80 & 17.20 & 24.77 \\

\textbf{PGDN}
& 28.43 & 26.86
& 47.81 & 24.21 & 36.19 & 35.55 & 64.83 & 52.97 & 60.10 & 78.00 & \textbf{\winyellow{49.96}}
& 14.25 & 32.24 & 31.76 & 35.20 & 17.40 & 26.17 \\

KDA $[-1,1]$
& 28.57 & 26.28
& 46.80 & 23.89 & 36.09 & 36.08 & 64.64 & 50.75 & 55.50 & 78.90 & 49.08
& 16.42 & 32.49 & 30.73 & 35.00 & 15.30 & 25.99 \\

KDA
& 31.37 & 26.18
& 45.45 & 22.70 & 36.06 & 34.04 & 66.00 & 52.25 & 57.00 & 79.50 & 49.12
& 13.88 & 34.11 & 29.69 & 34.80 & 16.90 & 25.88 \\

\textbf{PKDA}
& 25.33 & 25.98
& 46.97 & 22.95 & 37.01 & 37.22 & 65.68 & 51.46 & 59.10 & 78.70 & \textbf{\winpurple{49.89}}
& 14.25 & 34.65 & 31.39 & 35.50 & 18.00 & \textbf{\winpurple{26.76}} \\

\midrule
\multicolumn{18}{l}{\textit{1B params with 50B training tokens and 0.5M batchsize tokens}}\\
\addlinespace[1pt]

GDN
& 13.46 & 18.05
& 56.10 & 27.30 & 48.91 & 46.42 & 70.08 & 54.22 & 55.80 & 84.70 & 55.44
& 19.69 & 49.86 & 37.70 & 41.60 & 20.50 & 33.87 \\

\textbf{PGDN}
& 13.09 & 18.01
& 56.66 & 26.96 & 49.66 & 46.49 & 70.73 & 54.54 & 59.90 & 86.10 & \textbf{\winyellow{56.38}}
& 18.33 & 50.32 & 41.19 & 44.30 & 20.80 & \textbf{\winyellow{34.99}} \\

KDA
& 14.52 & 17.95
& 54.55 & 26.88 & 48.95 & 45.16 & 70.29 & 54.22 & 57.60 & 85.80 & 55.43
& 23.23 & 50.14 & 37.00 & 42.70 & 19.70 & 34.55 \\

\textbf{PKDA}
& 11.86 & 17.82
& 57.24 & 26.96 & 49.05 & 48.52 & 70.62 & 55.49 & 58.80 & 86.40 & \textbf{\winpurple{56.64}}
& 21.69 & 48.79 & 38.61 & 44.60 & 20.70 & \textbf{\winpurple{34.88}} \\

\bottomrule
\end{tabular}
\end{adjustbox}
\end{table}

\begin{table}[H]
\centering

\footnotesize
\setlength{\tabcolsep}{2.4pt}
\renewcommand{\arraystretch}{1.12}

\newcolumntype{N}{>{\centering\arraybackslash}c}
\vspace{0.5cm}
\caption{RULER (S-NIAH) evaluation results for 340M and 1B models across varying context lengths of 512, 1K, 2K, 4K, and 8K. This includes results on 340M DN, GDN, and KDA models where we allow $\beta_t$ to vary in $[0,2]$ (instead of the default $[0,1]$), which enables these recurrences to learn negative eigenvalues (see \cite{statetracking}). These models are denoted by $[-1,1]$. All tasks are evaluated using \texttt{lm-evaluation-harness}.}
\label{tab:ruler_niah_full}

\begin{tabular}{@{}l@{\hskip 3pt\vrule\hskip 3pt}
                N N N N N
                @{\hskip 3pt\vrule\hskip 3pt}
                N N N N N
                @{\hskip 3pt\vrule\hskip 3pt}
                N N N N N@{}}
\toprule
\textbf{Model}
& \multicolumn{15}{c}{\textbf{RULER} $\uparrow$} \\
\cmidrule(lr){2-16}
& \multicolumn{5}{c}{S-NIAH-1}
& \multicolumn{5}{c}{S-NIAH-2}
& \multicolumn{5}{c}{S-NIAH-3} \\
\cmidrule(lr){2-6}\cmidrule(lr){7-11}\cmidrule(lr){12-16}
& 512 & 1K & 2K & 4K & 8K
& 512 & 1K & 2K & 4K & 8K
& 512 & 1K & 2K & 4K & 8K \\
& acc & acc & acc & acc & acc
& acc & acc & acc & acc & acc
& acc & acc & acc & acc & acc \\
\midrule

\multicolumn{16}{l}{\textit{340M params with 15B training tokens and 0.5M batchsize tokens}}\\
\addlinespace[1pt]

DN
& 100 & 100 & 100 & 100 & 100
& 100 & 91.8 & 95.4 & 29.2 & 8.4
& 79.2 & 18.0 & 33.4 & 19.0 & 3.0 \\

\textbf{PDN}
& 100 & 100 & 100 & 99.8 & 99.8
& 100 & 100 & 99.2 & 28.4 & 14.2
& 90.6 & 82.8 & 59.4 & 22.0 & 2.0 \\

DN $[-1,1]$
& 100 & 100 & 100 & 100 & 100
& 100 & 99.8 & 98.8 & 31.2 & 14.4
& 94.4 & 76.8 & 51.2 & 12.0 & 8.0 \\

GDN
& 100 & 100 & 100 & 99.6 & 95.2
& 100 & 100 & 97.6 & 29.4 & 1.4
& 96.2 & 10.4 & 35.4 & 20.0 & 3.0 \\

\textbf{PGDN}
& 100 & 100 & 99.8 & 100 & 98.8
& 100 & 100 & 93.8 & 52.8 & 31.4
& 99.2 & 89.0 & 66.8 & 63.8 & 12.8 \\

GDN $[-1,1]$
& 100 & 100 & 100 & 100 & 99.8
& 100 & 100 & 88.4 & 31.6 & 15.2
& 99.6 & 78.8 & 62.2 & 24.8 & 13.6 \\

KDA
& 100 & 100 & 100 & 99.8 & 96.8
& 100 & 100 & 98.8 & 37.6 & 11.4
& 93.6 & 77.0 & 58.4 & 24.6 & 3.2 \\

\textbf{PKDA}
& 100 & 100 & 100 & 100 & 98.4
& 100 & 100 & 98.2 & 44.2 & 12.0
& 99.8 & 83.2 & 53.8 & 19.0 & 7.0 \\

KDA $[-1,1]$
& 100 & 100 & 100 & 100 & 99.6
& 100 & 100 & 99.4 & 36.0 & 3.4
& 99.2 & 94.6 & 4.6 & 1.4 & 8.0 \\

\midrule
\multicolumn{16}{l}{\textit{1B params with 50B training tokens and 0.5M batchsize tokens}}\\
\addlinespace[1pt]

GDN
& 100 & 100 & 100 & 100 & 100
& 100 & 100 & 100 & 67.4 & 24.8
& 97.6 & 58.6 & 60.4 & 64.2 & 9.4 \\

\textbf{PGDN}
& 100 & 100 & 100 & 100 & 100
& 100 & 100 & 99.0 & 90.2 & 11.3
& 98.2 & 90.8 & 76.8 & 63.8 & 12.8 \\

KDA
& 100 & 100 & 100 & 100 & 100
& 100 & 100 & 100 & 80.4 & 22.8
& 95.2 & 98.0 & 80.2 & 40.8 & 5.4 \\

\textbf{PKDA}
& 100 & 100 & 100 & 100 & 100
& 100 & 100 & 100 & 79.0 & 25.8
& 97.4 & 92.4 & 78.2 & 47.6 & 8.8 \\

\bottomrule
\end{tabular}
\end{table}

\subsection{Ablating on the functional form of the preconditioner recurrence}
\label{app:ablation-narrative}

\begin{table}[H]
\centering
\caption{Training loss from various ablations on the functional form of the preconditioner computed via a time-weighted EMA. All results are on 340M parameter models trained for 15B tokens. The base recurrence used in all the recurrences is GDN, apart from the ATQ case, which uses Mamba-2 with a gain term added to the write. See below for further discussion.}
\label{tab:ablation_results}
\small
\setlength{\tabcolsep}{6pt}
\renewcommand{\arraystretch}{1.15}
\begin{tabular}{lcccccc}
\toprule
& \textbf{PGDN} & \textbf{GDN} & \textbf{P-Mamba-2 (ATQ)} & \textbf{PGDN (unstable ATK)} & \textbf{PGDN (tied)} & \textbf{GDN[-1, 1]} \\
\midrule
\textbf{Train loss} & \underline{2.511} & 2.528 & 2.543 & 2.536 & 2.518 & 5.520 \\
\bottomrule
\end{tabular}
\end{table}

\paragraph{ATQ vs ATK.}
The first ablation we highlight compares the ATQ and ATK approaches for preconditioning the recurrence.
We note that while in the main text we formulated the diagonal preconditioner recurrences without a decay or write term
to reduce notational clutter, in practice the diagonal preconditioner recurrence we consider is given by
\[
\mat{A}_{t} \;=\; \alpha_t\, \mat{A}_{t-1} \;+\; \beta_t\,(\bb{k}_t \odot \bb{k}_t).
\]
Recall that in the case of an approximate preconditioner, ATQ and ATK are no longer equivalent, and so a natural question
arises as to which one is the better approach when using a diagonal approximation to the key Gram. To be concrete, the
two recurrences we are comparing here are
\[
\mat{S}_t \;=\; \alpha_t\,\mat{S}_{t-1} \;+\; \beta_t\, \bb{v}_t \bb{k}_t^{\top},
\qquad
\bb{o}_t \;=\; \mat{S}_t \tilde{\bb{q}}_t,
\qquad
\tilde{\bb{q}}_t \;=\; \mat{A}_{t}^{-1}\bb{q}_t,
\]
which corresponds to preconditioned Mamba-2, which is realized via the ATQ transform, and
\[
\mat{S}_t \;=\; \alpha_t\,\mat{S}_{t-1} \;+\; \beta_t\big(\bb{v}_t - \alpha_t\,\mat{S}_{t-1}\bb{k}_t\big)\tilde{\bb{k}}_t^{\top},
\qquad
\bb{o}_t \;=\; \mat{S}_t \bb{q}_t,
\qquad
\tilde{\bb{k}}_t \;=\; \frac{\mat{A}_{t-1}^{-1}\bb{k}_t}{1+\bb{k}_t^{\top}\mat{A}_{t-1}^{-1}\bb{k}_t}.
\]
which corresponds to preconditioned GDN, which is realized via the ATK transform. As shown in Table \ref{tab:ablation_results}, the ATK (unstable) approach outperforms the ATQ approach, aligning with our theoretical result in Appendix \ref{app:theory:atk-better-atq}. 

\paragraph{Stable parameterization via $\mat{B}_t$}
Although PGDN (unstable ATK) has lower training loss than P-Mamba-2, we note that the baseline GDN model still outperforms it (see Table \ref{tab:ablation_results}), even though we were adding second-order information to the recurrence. To pinpoint the source of instability, we investigated the distribution of the learned $\mat{A}_t$ and, as shown in the main text, found that they were log-normally distributed with heavy tails. This causes certain dimensions of the state to be significantly dampened, which negatively impacts training dynamics. To resolve this, we constrained the range of $\mat{A}_t$ to the interval $[\frac{1}{x}, x]$, where $x$ is a hyperparameter, using the following nonlinear, element-wise transformation: $$\log(\mat{A}_t)-\mu, \bb{s}_t = \bb{r}_t\oslash(\bb{1}+|\bb{r}_t|), \mat{B}_t = \exp\!\big(-\log(x)\bb{s}_t\big)$$ where $\oslash$ denotes element-wise division and $\mu > 0$ is a learnable scalar, which we parameterize via $e^{\textrm{logAScale}}$ where 
$\textrm{logAScale}$ is unconstrained. 

\begin{figure}[H]
  \centering
  \includegraphics[scale=0.5]{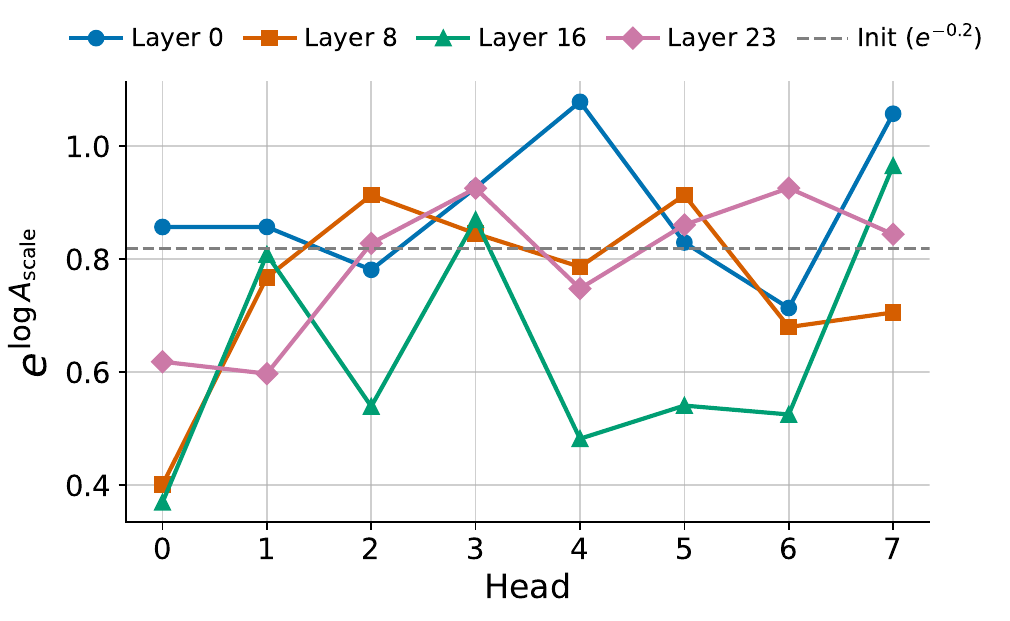}
  \caption{Plot of learned centers $\mu$ across various layers and heads in the PGDN model.}
  \label{fig:learned_centers}
\end{figure}

The intuition behind the first term $\log(\mat{A}_t)-\mu$ is that since we know our distribution of $\mat{A}_t$ looks roughly log-normal, we should transform it into log-space and let it learn its mean $\mu$ there. This accomplishes two things: one, the model can now learn $\mat{A}_t$ in a space that looks roughly normal (which is likely easier to train) and two, it enables $\mat{A}_t$ to learn a distribution that is not necessarily centered at $1$, allowing it to place itself around a potentially different point (as shown in Figure \ref{fig:learned_centers}) where the preconditioner neither amplifies nor dampens the state.

The next term in the nonlinear transform is given by the squash function $f(r) = \frac{r}{1 + |r|}$ which maps an input $r$ onto the interval $[-1, 1]$. The first thing we note about this function is that it moves quite fast as compared to the canonical tanh squash (see Figure \ref{fig:learned_sqush}). This is a useful property as it gives the function room to distribute the amplifying and dampening dynamics from $\mat{A}_t$ onto the interval $[-1, 1]$. This prevents $\mat{A}_t$ from saturating the squash function, promoting better training. 

\begin{figure}[H]
  \centering
  \includegraphics[scale=0.5]{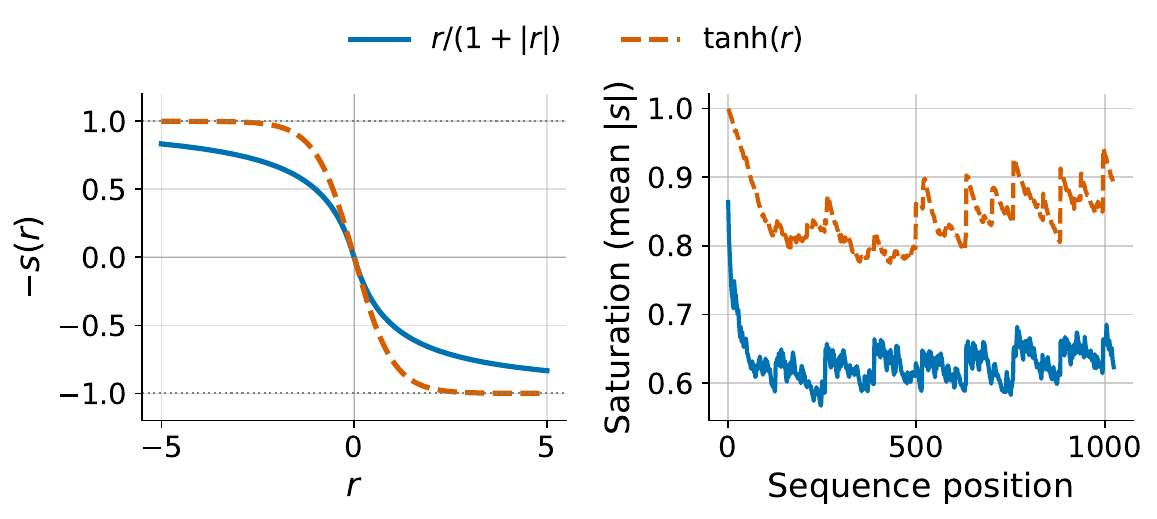}
  \caption{\textbf{Left.} Plot of the fast-moving inverted squash function $f(r) = \frac{r}{1 + |r|}$ which we use in our preconditioned recurrences, as well as the slower-moving inverted tanh function, which we found performed worse empirically. \textbf{Right.} Plot of average saturation of each squash function across the sequence. }
  \label{fig:learned_sqush}
\end{figure}

The final term in the transform on $\mat{A}_t$ is given by $\mat{B}_t = \exp\!\big(-\log(x)\bb{s}_t\big)$ which performs an inverse transform on the squashed $\mat{A}_t$. The benefit of this parameterization is that it enables us to learn the inverse of $\mat{A}_t$ implicitly, which avoids the numerical training instabilities associated with explicitly training in inverse space \citep{mesalayer}. 

We briefly note two more important considerations that were taken into account when converging upon the final functional form of the preconditioner: (i) the range of the interval to map onto and (ii) the value of $x$. 

\begin{figure}[H]
  \centering
  \includegraphics[scale=0.5]{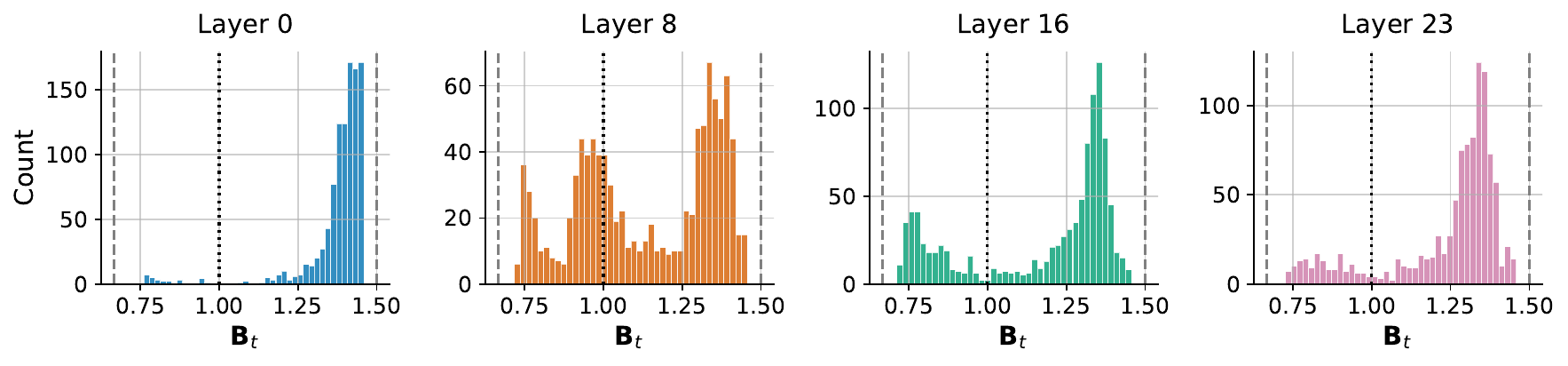}
  \caption{Plot showing the learned distribution of $\mat{B}_t$ across a few layers in the PGDN model.}
  \label{fig:learned_Bt}
\end{figure}

Regarding the former, aside from the interval we chose $[\frac{1}{x}, x]$, the other two immediately obvious candidates are $[1, x]$ and $[\frac{1}{x}, 1]$. The difference between these intervals and the one we chose is that they do not center around $1$, meaning that they are biased towards learning amplifying and dampening dynamics only, respectively. Empirically, we found both to perform worse than the symmetric interval $[\frac{1}{x}, x]$, which is supported by the learned distribution of $\mat{B}_t$ in Figure \ref{fig:learned_Bt}, which shows both amplifying and dampening dynamics being learned by the preconditioner. 

\paragraph{The choice of $x$ and its impact on recurrent dynamics.} The second axis that we found to impact performance was the choice of the hyperparameter $x$. To understand why, we prove the following results regarding the role $x$ plays in bounding the eigenvalues \citep{tumma} in the recurrence. 

\begin{lemma}[Eigenvalues of the GDN/PGDN transition]
\label{lem:gdn_pgdn_eigs}
Consider the modified GDN update with distinct read and write keys
\[
\mat{S}_t
=
\alpha_t \mat{S}_{t-1}
+
\beta_t\bigl(\bb{v}_t-\alpha_t \mat{S}_{t-1}\bb{k}_t\bigr)\tilde{\bb{k}}_t^\top
=
\alpha_t \mat{S}_{t-1}\bigl(\mat{I}-\beta_t \bb{k}_t\tilde{\bb{k}}_t^\top\bigr)
+
\beta_t \bb{v}_t\tilde{\bb{k}}_t^\top,
\]
where $\alpha_t,\beta_t\in\mathbb{R}$ and $\bb{k}_t,\tilde{\bb{k}}_t\in\mathbb{R}^{d_k}$.
Then the eigenvalues of the matrix factor $\alpha_t\bigl(\mat{I}-\beta_t \bb{k}_t\tilde{\bb{k}}_t^\top\bigr)$ are
\[
\alpha_t \quad (\text{mult. } d_k-1),
\qquad
\alpha_t\bigl(1-\beta_t\,\tilde{\bb{k}}_t^\top \bb{k}_t\bigr)\quad (\text{mult. }1).
\]
In particular, if $\tilde{\bb{k}}_t=\bb{k}_t$ and $\|\bb{k}_t\|_2=1$ (the standard GDN case), then the eigenvalues are
\[
\alpha_t \quad (\text{mult. } d_k-1),
\qquad
\alpha_t(1-\beta_t)\quad (\text{mult. }1).
\]
\end{lemma}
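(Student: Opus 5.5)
The plan is to first confirm the factored form of the update and then read off the spectrum of the rank-one perturbation of the identity, $\mat{I}-\beta_t\bb{k}_t\tilde{\bb{k}}_t^\top$.

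\textbf{Step 1: the factorization.} Distributing the write term gives
\[
\alpha_t\mat{S}_{t-1}+\beta_t\bb{v}_t\tilde{\bb{k}}_t^\top-\alpha_t\beta_t\mat{S}_{t-1}\bb{k}_t\tilde{\bb{k}}_t^\top .
\]
Factoring $\alpha_t\mat{S}_{t-1}$ on the left of the first and third terms yields the stated right-multiplied form.

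\textbf{Step 2: the spectrum of the factor.} I will not rely on an eigenvector count, because $\bb{k}_t\tilde{\bb{k}}_t^\top$ need not be diagonalizable (for instance when $\tilde{\bb{k}}_t^\top\bb{k}_t=0$ with both vectors nonzero). Instead I will compute the characteristic polynomial. Set $\mat{M}=\mat{I}-\beta_t\bb{k}_t\tilde{\bb{k}}_t^\top$. For $\mu\neq1$, the matrix determinant lemma gives
\[
\det\bigl(\mu\mat{I}-\mat{M}\bigr)=\det\bigl((\mu-1)\mat{I}+\beta_t\bb{k}_t\tilde{\bb{k}}_t^\top\bigr)=(\mu-1)^{d_k}\Bigl(1+\frac{\beta_t\tilde{\bb{k}}_t^\top\bb{k}_t}{\mu-1}\Bigr)=(\mu-1)^{d_k-1}\bigl(\mu-1+\beta_t\tilde{\bb{k}}_t^\top\bb{k}_t\bigr).
\]
Both sides are polynomials in $\mu$, so by continuity the identity also holds at $\mu=1$. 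The eigenvalues of $\mat{M}$, counted with algebraic multiplicity, are therefore $1$ with multiplicity $d_k-1$ and $1-\beta_t\tilde{\bb{k}}_t^\top\bb{k}_t$ with multiplicity $1$. If these two values coincide, the multiplicities simply merge, which is consistent with the statement.

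Multiplying by the scalar $\alpha_t$ scales every eigenvalue by $\alpha_t$, which gives the claimed spectrum.

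\textbf{Step 3: the special case.} Substituting $\tilde{\bb{k}}_t=\bb{k}_t$ gives $\tilde{\bb{k}}_t^\top\bb{k}_t=\|\bb{k}_t\|_2^2=1$, so the eigenvalues are $\alpha_t$ with multiplicity $d_k-1$ and $\alpha_t(1-\beta_t)$ with multiplicity $1$.

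\textbf{Main obstacle.} The only delicate point is the non-diagonalizable case. I expect to handle it by using the determinant identity (algebraic multiplicities) rather than by constructing explicit eigenvectors. The alternative direct argument also works: every vector in $\tilde{\bb{k}}_t^{\perp}$, a subspace of dimension $d_k-1$, is fixed by $\mat{M}$, and $\mat{M}\bb{k}_t=(1-\beta_t\tilde{\bb{k}}_t^\top\bb{k}_t)\bb{k}_t$. However, this does not give a full eigenbasis when $\bb{k}_t\in\tilde{\bb{k}}_t^\perp$, so I would present the determinant version.
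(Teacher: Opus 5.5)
Your argument is correct and follows the same skeleton as the paper: verify the factorization, read off the spectrum of the rank-one perturbation $\mat{I}-\beta_t\bb{k}_t\tilde{\bb{k}}_t^\top$, scale by $\alpha_t$, then specialize. The paper only asserts that rank-one spectrum, and adds an unnecessary $\|\bb{k}_t\|_2=1$ assumption for the general case; your matrix-determinant-lemma computation actually justifies it and, by working with algebraic multiplicities, correctly covers the non-diagonalizable case $\tilde{\bb{k}}_t^\top\bb{k}_t=0$.
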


Recall the GDN recurrence (from the main text) can be written in the form
\[
\mat{S}_t
=
\alpha_t \mat{S}_{t-1}
+
\beta_t\bigl(\bb{v}_t-\alpha_t \mat{S}_{t-1}\bb{k}_t\bigr)\bb{k}_t^\top
=
\alpha_t \mat{S}_{t-1}\bigl(\mat{I}-\beta_t \bb{k}_t\bb{k}_t^\top\bigr)
+
\beta_t \bb{v}_t\bb{k}_t^\top .
\]
In PGDN, we replace the write key by a vector $\tilde{\bb{k}}_t$, yielding
\[
\mat{S}_t
=
\alpha_t \mat{S}_{t-1}
+
\beta_t\bigl(\bb{v}_t-\alpha_t \mat{S}_{t-1}\bb{k}_t\bigr)\tilde{\bb{k}}_t^\top
=
\alpha_t \mat{S}_{t-1}\bigl(\mat{I}-\beta_t \bb{k}_t\tilde{\bb{k}}_t^\top\bigr)
+
\beta_t \bb{v}_t\tilde{\bb{k}}_t^\top .
\]
Assuming $\|\bb{k}_t\|_2=1$, the transition factor
\[
\mat{I}-\beta_t \bb{k}_t\tilde{\bb{k}}_t^\top
\]
has eigenvalue $1$ with multiplicity $d_k-1$ and one additional eigenvalue
\[
1-\beta_t\,\tilde{\bb{k}}_t^\top \bb{k}_t.
\]
Therefore, the corresponding eigenvalues for the PGDN step are
\[
\alpha_t \quad (\text{mult. } d_k-1),
\qquad
\alpha_t\bigl(1-\beta_t\,\tilde{\bb{k}}_t^\top \bb{k}_t\bigr)\quad (\text{mult. }1).
\]
In particular, for GDN where $\tilde{\bb{k}}_t=\bb{k}_t$ and $\|\bb{k}_t\|_2=1$, we obtain
\[
\alpha_t \quad (\text{mult. } d_k-1),
\qquad
\alpha_t(1-\beta_t)\quad (\text{mult. }1).
\]

\begin{lemma}[Sufficient condition for unit-disk eigenvalues in the PGDN recurrence]
\label{lem:stable_suff}
Denote the recurrent transition matrix in the PGDN update by 
\[
\mat{T}_t \;=\; \alpha_t\bigl(\mat{I}-\beta_t\,\bb{k}_t\tilde{\bb{k}}_t^\top\bigr),
\qquad \alpha_t\in[0,1],\ \beta_t\in[0,1],\ \|\bb{k}_t\|_2=1,
\]
with $\tilde{\bb{k}}_t=\mat{B}_t\bb{k}_t$ and $\mat{B}_t=\mathrm{diag}(b_i)$ satisfying
$b_i\in[1/x,x]$ for some $x>1$.
If
\[
\beta_t x \;\le\; 2,
\]
then every eigenvalue of $\mat{T}_t$ satisfies $|\lambda|\le 1$ for every admissible $\mat{B}_t$.
In particular, if $\beta_t\in[0,1]$ and $x\le 2$, the condition holds automatically.
\end{lemma}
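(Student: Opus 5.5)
The plan is to reduce everything to Lemma~\ref{lem:gdn_pgdn_eigs}, which already gives the full spectrum of $\mat{T}_t$. By that lemma, the eigenvalues of $\mat{T}_t=\alpha_t(\mat{I}-\beta_t\bb{k}_t\tilde{\bb{k}}_t^\top)$ are $\alpha_t$, with multiplicity $d_k-1$, and $\alpha_t(1-\beta_t\,\tilde{\bb{k}}_t^\top\bb{k}_t)$, with multiplicity $1$.

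Behind that lemma is the fact that $\bb{k}_t\tilde{\bb{k}}_t^\top$ has rank one with its only possibly nonzero eigenvalue equal to $\tilde{\bb{k}}_t^\top\bb{k}_t$. This holds even when $\tilde{\bb{k}}_t^\top\bb{k}_t=0$ and the matrix is not diagonalizable. Since $\alpha_t\in[0,1]$, the first family of eigenvalues trivially satisfies $|\lambda|\le 1$. The whole argument therefore comes down to bounding the single write eigenvalue.

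The key step is to control the scalar $\tilde{\bb{k}}_t^\top\bb{k}_t=\bb{k}_t^\top\mat{B}_t\bb{k}_t=\sum_i b_i k_{t,i}^2$. Because each $b_i\in[1/x,x]$ and $\sum_i k_{t,i}^2=\|\bb{k}_t\|_2^2=1$, this is a convex combination of the $b_i$, so
\[
\tfrac{1}{x}\;\le\;\tilde{\bb{k}}_t^\top\bb{k}_t\;\le\;x .
\]
In particular it is nonnegative. Multiplying by $\beta_t\in[0,1]$ gives $0\le\beta_t\,\tilde{\bb{k}}_t^\top\bb{k}_t\le\beta_t x\le 2$ under the hypothesis $\beta_t x\le 2$. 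Hence $1-\beta_t\,\tilde{\bb{k}}_t^\top\bb{k}_t\in[-1,1]$. Multiplying by $\alpha_t\in[0,1]$ keeps the write eigenvalue in $[-1,1]$, which proves the bound for every admissible diagonal $\mat{B}_t$.

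The final clause is immediate: if $\beta_t\le 1$ and $x\le 2$, then $\beta_t x\le 2$. I do not expect a genuine obstacle. The only point needing care is to state explicitly that the quadratic form is a convex combination of the $b_i$; this uses that $\mat{B}_t$ is diagonal with positive entries and that $\bb{k}_t$ has unit norm. That same observation also shows the bound is tight: taking $\bb{k}_t$ to be a coordinate vector $\bb{e}_i$ with $b_i=x$ and $\beta_t x=2$ makes the write eigenvalue equal to $-\alpha_t$.
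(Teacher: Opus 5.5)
Your proof is correct and follows the paper's argument essentially verbatim. It invokes Lemma~\ref{lem:gdn_pgdn_eigs} for the spectrum, bounds $\tilde{\bb{k}}_t^\top\bb{k}_t=\bb{k}_t^\top\mat{B}_t\bb{k}_t\in[1/x,x]$, deduces $1-\beta_t\tilde{\bb{k}}_t^\top\bb{k}_t\in[1-\beta_t x,\,1-\beta_t/x]\subseteq[-1,1]$, and multiplies by $\alpha_t\in[0,1]$; your convex-combination justification of the quadratic-form bound and your tightness example ($\bb{k}_t=\bb{e}_i$, $b_i=x$, $\beta_t x=2$) are small additions that the paper leaves implicit.
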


\begin{proof}
By Lemma~\ref{lem:gdn_pgdn_eigs}, the eigenvalues of $\mat{T}_t$ are
$\alpha_t$ (mult.\ $d_k-1$) and
\[
\lambda(t)=\alpha_t\bigl(1-\beta_t\,\tilde{\bb{k}}_t^\top\bb{k}_t\bigr).
\]
Since $\|\bb{k}_t\|_2=1$ and $\tilde{\bb{k}}_t=\mat{B}_t\bb{k}_t$ with $b_i\in[1/x,x]$,
we have $\tilde{\bb{k}}_t^\top\bb{k}_t=\bb{k}_t^\top\mat{B}_t\bb{k}_t\in[1/x,x]$.
Hence
\[
1-\beta_t\,\tilde{\bb{k}}_t^\top\bb{k}_t \in \bigl[\,1-\beta_t x,\ 1-\beta_t/x\,\bigr]
\subseteq [-1,1]
\]
whenever $\beta_t x\le 2$. Multiplying by $\alpha_t\in[0,1]$ preserves the bound
$|\lambda(t)|\le 1$, and also $|\alpha_t|\le 1$. Therefore, all eigenvalues of $\mat{T}_t$ lie in the unit disk.
\end{proof}

Given these results, we conclude that $x$ must lie in the interval $[1, 2]$ where $x = 1$ reduces the PGDN to GDN and $x = 2$ takes PGDN to the edge of stability. As such, empirically, we search across a few values of $x$ in this interval and found $x = 1.5$ to perform the best, which interpolates exactly between these two regimes. 

An additional benefit of this result is that if we keep $x$ within the stable interval $[1, 2]$, we are no longer constrained to the Sherman-Morrison form of the $\mat{A}_t$ preconditioner. Recall that the Sherman-Morrison form states that we must precondition the key with $\frac{\mat{A}_{t-1}^{-1}\bb{k}_t}{1+\bb{k}_t^{\top}\mat{A}_{t-1}^{-1}\bb{k}_t}$ to realize a stable recurrent form (as discussed in Appendix \ref{app:theory:mesa}). As discussed in \citep{mesalayer}, this particular recurrence is difficult to train. Fortunately, since our stabilized preconditioner $\mat{B}_t$ innately has eigenstability from the $x \in [1, 2]$ constraint, we are no longer required to use the precise combination of the normalization term and the previous time step $\mat{A}_{t - 1}$ to precondition the keys. Dropping the normalization term and using the current time step $f(\mat{A}_t) = \mat{B}_t$ where $f$ denotes the inverse squash function, yields the final form of the preconditioned recurrence. To substantiate this decision, we conduct ablations on various normalization factors in Table \ref{tab:norm_ablation_results}, showing that all of them worsen performance.

\begin{table}[H]
\centering
\caption{Results from an ablation on the normalization factor applied to the preconditioned key show that no normalization performs the best when using $\mat{B}_t$ as the preconditioner.}
\label{tab:norm_ablation_results}
\small
\setlength{\tabcolsep}{6pt}
\renewcommand{\arraystretch}{1.15}
\begin{tabular}{lcccc}
\toprule
& \textbf{no norm}
& $\mathbf{1}/(\bb{k}^\top \mat{B}_t \bb{k})$
& $\mathbf{1}/(1+\bb{k}^\top \mat{B}_t \bb{k})$
& $\boldsymbol{\beta_t}/(\alpha_t + \beta_t\, \bb{k}^\top \mat{B}_t \bb{k})$ \\
\midrule
\textbf{Value} & 2.511 & 2.517 & 2.521 & 2.524 \\
\bottomrule
\end{tabular}
\end{table}

\begin{figure}[H]
  \centering
  \includegraphics[scale=0.6]{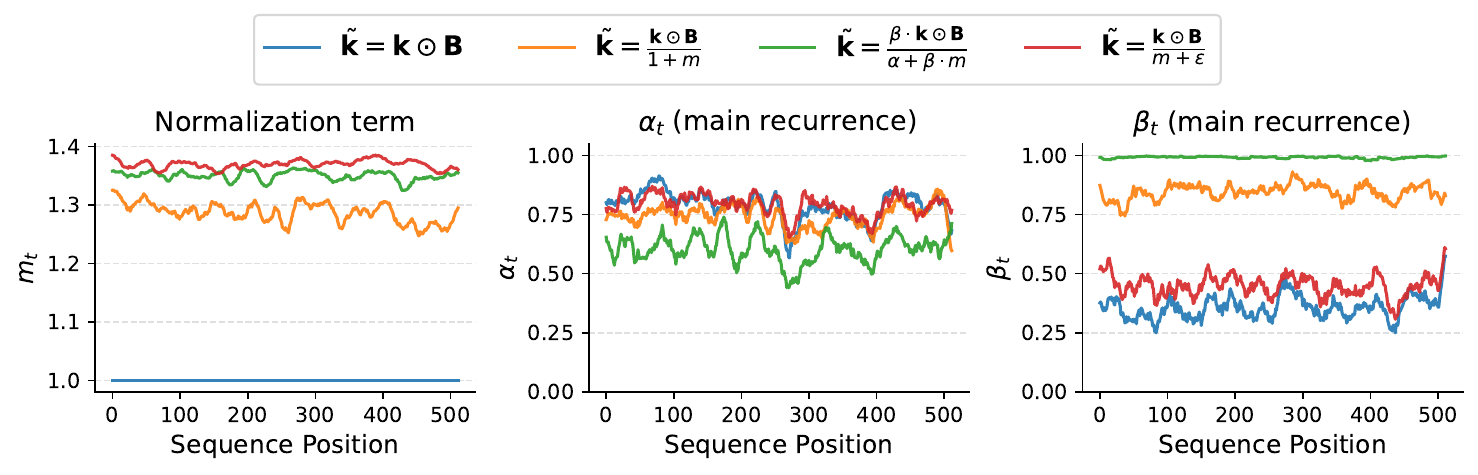}
  \caption{Impact of various normalization factors on learned decay and write terms in main PGDN recurrence. Here, $m_t = \bb{k}^T \mat{B}_t \bb{k}$.}
  \label{fig:learned_norm}
\end{figure}

\paragraph{Tying $\alpha_t$ and $\beta_t$.} The final ablation axis we discuss is tying together the decay $\alpha_t$ and gain $\beta_t$ between the preconditioner recurrence and main recurrence, as is done in MesaNet \citep{mesanet}. We find that untying improves training loss (as shown in Table \ref{tab:ablation_results}), likely due to the distinct training dynamics between the approximate preconditioner recurrence and main recurrence.

\subsection{Augmenting baseline recurrences with negative eigenvalues does not outperform preconditioning} 
An additional nice property of the preconditioner parameterization we use is that the model, in theory, can learn negative eigenvalues if the following condition is met:

\begin{lemma}[Sufficient condition for a negative eigenvalue]
\label{lem:neg_eig_suff}
Under the assumptions of Lemma~\ref{lem:gdn_pgdn_eigs}, if
\[
\beta_t x \;>\; 1,
\]
then the recurrent transition matrix admits a negative eigenvalue (for example, by taking $\mat{B}_t=x\mat{I}$).
\end{lemma}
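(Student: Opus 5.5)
The plan is to apply Lemma~\ref{lem:gdn_pgdn_eigs} directly and then choose the admissible preconditioner $\mat{B}_t=x\mat{I}$, as the statement suggests. We work under the standing assumptions of Lemma~\ref{lem:stable_suff}: $\|\bb{k}_t\|_2=1$, $\alpha_t\in(0,1]$, $\beta_t\in[0,1]$, and $\tilde{\bb{k}}_t=\mat{B}_t\bb{k}_t$ with diagonal entries $b_i\in[1/x,x]$. Note that we need $\alpha_t>0$; otherwise every eigenvalue is zero.

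First, Lemma~\ref{lem:gdn_pgdn_eigs} shows that the transition $\alpha_t(\mat{I}-\beta_t\bb{k}_t\tilde{\bb{k}}_t^\top)$ has eigenvalue $\alpha_t$ with multiplicity $d_k-1$. Its one remaining eigenvalue is
\[
\lambda(t)=\alpha_t\bigl(1-\beta_t\,\tilde{\bb{k}}_t^\top\bb{k}_t\bigr).
\]
The $d_k-1$ copies of $\alpha_t$ are nonnegative, so the only candidate for a negative eigenvalue is $\lambda(t)$.

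Second, substitute $\mat{B}_t=x\mat{I}$. Every diagonal entry equals $x$, which lies in $[1/x,x]$ because $x>1$, so this choice is admissible. Then $\tilde{\bb{k}}_t=x\bb{k}_t$ and $\tilde{\bb{k}}_t^\top\bb{k}_t=x\|\bb{k}_t\|_2^2=x$. This gives $\lambda(t)=\alpha_t(1-\beta_t x)$. Under the hypothesis $\beta_t x>1$ the factor $1-\beta_t x$ is negative, and since $\alpha_t>0$ we conclude $\lambda(t)<0$.

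The argument is short, so the main step needing care is stating the hypotheses precisely. These are the requirement $\alpha_t>0$ and the fact that $\mat{B}_t=x\mat{I}$ is a valid preconditioner in the squashed parameterization. The latter holds because $\mat{B}_t$ attains values arbitrarily close to $x$ as $\bb{s}_t\to-1$, so the bound should be read on the closed range.

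As a further remark, the same computation shows more generally that a negative eigenvalue occurs exactly when $\bb{k}_t^\top\mat{B}_t\bb{k}_t>1/\beta_t$. By Lemma~\ref{lem:stable_suff}, when $\beta_t x\le 2$ this eigenvalue still stays inside $[-1,0)$, so the negative eigenvalue does not break stability.
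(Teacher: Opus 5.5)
Your proposal is correct and follows the paper's proof. Like the paper, you substitute $\mat{B}_t = x\mat{I}$ into the nontrivial eigenvalue $\alpha_t(1-\beta_t\,\tilde{\bb{k}}_t^\top\bb{k}_t)$ from Lemma~\ref{lem:gdn_pgdn_eigs}, use $\|\bb{k}_t\|_2=1$ to obtain $\alpha_t(1-\beta_t x)$, and conclude negativity from $\alpha_t>0$ and $\beta_t x>1$. Listing the tacit hypotheses ($\alpha_t>0$, unit-norm keys, admissibility of $x\mat{I}$) is a useful tightening. For the squashed parameterization, where the entries of $\mat{B}_t$ lie in the open interval $(1/x,x)$, the cleaner justification is that $\beta_t x>1$ is strict: any $\mat{B}_t$ whose entries all exceed $1/\beta_t$ (attainable, since $1/\beta_t<x$) already gives $\bb{k}_t^\top\mat{B}_t\bb{k}_t>1/\beta_t$.
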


\begin{proof}
Take $\mat{B}_t=x\mat{I}$. Then (using $\|\mathbf{k}_t\|_2=1$ from the assumptions)
\[
\mathbf{k}_t^\top \mat{B}_t \mathbf{k}_t \;=\; x.
\]
By Lemma~\ref{lem:gdn_pgdn_eigs}, the nontrivial eigenvalue is
\[
\lambda_{*}(t)\;=\;\alpha_t\bigl(1-\beta_t\,\mathbf{k}_t^\top \mat{B}_t \mathbf{k}_t\bigr)
\;=\;\alpha_t(1-\beta_t x).
\]
Since $\alpha_t>0$, if $\beta_t x>1$ then $\lambda_{*}(t)<0$.
\end{proof}

% \begin{table}[H]
% \centering
% \footnotesize
% \setlength{\tabcolsep}{2.0pt}
% \renewcommand{\arraystretch}{1.0}

% \caption{Average S-NIAH-3 accuracy on 340M models taken from Table \ref{tab:ruler_niah_full} and isolated for ease of comparison between preconditioned recurrences and recurrences with negative eigenvalues. \textcolor{red}{Change this with new results}}
% \label{tab:h2h_sniah3_340m}

% \begin{tabular}{l@{\hskip 2pt\vrule\hskip 2pt}cc@{\hskip 2pt\vrule\hskip 2pt}cc@{\hskip 2pt\vrule\hskip 2pt}cc}
% \toprule
% Model & PDN & DN$_{\scriptscriptstyle[-1,1]}$ & PGDN & GDN$_{\scriptscriptstyle[-1,1]}$ & PKDA & KDA$_{\scriptscriptstyle[-1,1]}$ \\
% \midrule
% S-NIAH-3 & \textbf{77.6} & 74.2 & \textbf{85.0} & 80.2 & \textbf{78.9} & 66.1 \\
% \bottomrule
% \end{tabular}
% \end{table}

In practice, however, we find that the PGDN model does not learn negative eigenvalues, as shown in Figure \ref{fig:learned_decays}. This is potentially a drawback, as we know from \citep{statetracking} that recurrences equipped with negative eigenvalues perform better on state tracking and in some instances, long-context retrieval tasks. In fact, all the baseline delta-rule recurrences can be modified trivially by allowing $\beta_t \in [0, 2]$ instead of $[0, 1]$, which enables the model to learn eigenvalues on the range $[-1, 1]$. To show that preconditioning adds something beyond just changing the eigenspectrum of the model, we train 340M parameter DeltaNet, GDN, and KDA models that can learn negative eigenvalues and evaluate them on the same benchmarks as our baselines and preconditioned recurrence. We find that while augmenting the baseline recurrences with negative eigenvalues improves performance relative to the baselines on the in-context retrieval language benchmarks and synthetic S-NIAH-3 benchmarks (see Table \ref{tab:ruler_niah_full}), it still falls short of our preconditioned recurrences (see Table \ref{tab:lm_eval_full} for full results). An interesting direction for future work would be to augment our preconditioned recurrences with negative eigenvalues and see how performance changes. Note that using the same trick of allowing $\beta_t \in [0, 2]$ does not work in our preconditioned recurrence case, as we are no longer guaranteed eigenvalues less than $1$.

\subsection{Encountering training instabilities in MesaNet}
\label{app:mesa_train}

We briefly note training instabilities we encounter when attempting to train a MesaNet 340M parameter model to benchmark our models against. In particular, using the same training pipeline we used for the rest of our models (described in Appendix \ref{app:exp-details}) as well as the implementation of MesaNet in \url{flash-linear-attention}, we were unable to train the model to completion as we encountered NaNs roughly 5K steps into training (out of 30K). We note that this training instability is also observed in \citet{peng2025gatedkalmanetfadingmemory}. We additionally tried using the hyperparameters specified in the MesaNet paper to train the model, including the trick of upper-bounding the decay term discussed in Appendix F.5 of the paper \cite{mesanet}. As the original code for MesaNet is not open-source, we are unable to validate their model against ours. 

In any case, one area we note as an advantage of our diagonal preconditioner relative to the exact (up to conjugate gradients approximation error) least-squares preconditioner learned in MesaNet is that diagonal matrices are commutative, which promotes state-tracking abilities as discussed in \citet{merrill2025illusionstatestatespacemodels}.

\subsection{Additional plots of model internals}
\label{app:model_internals}

\begin{figure}[H]
  \centering
  \includegraphics[scale=0.45]{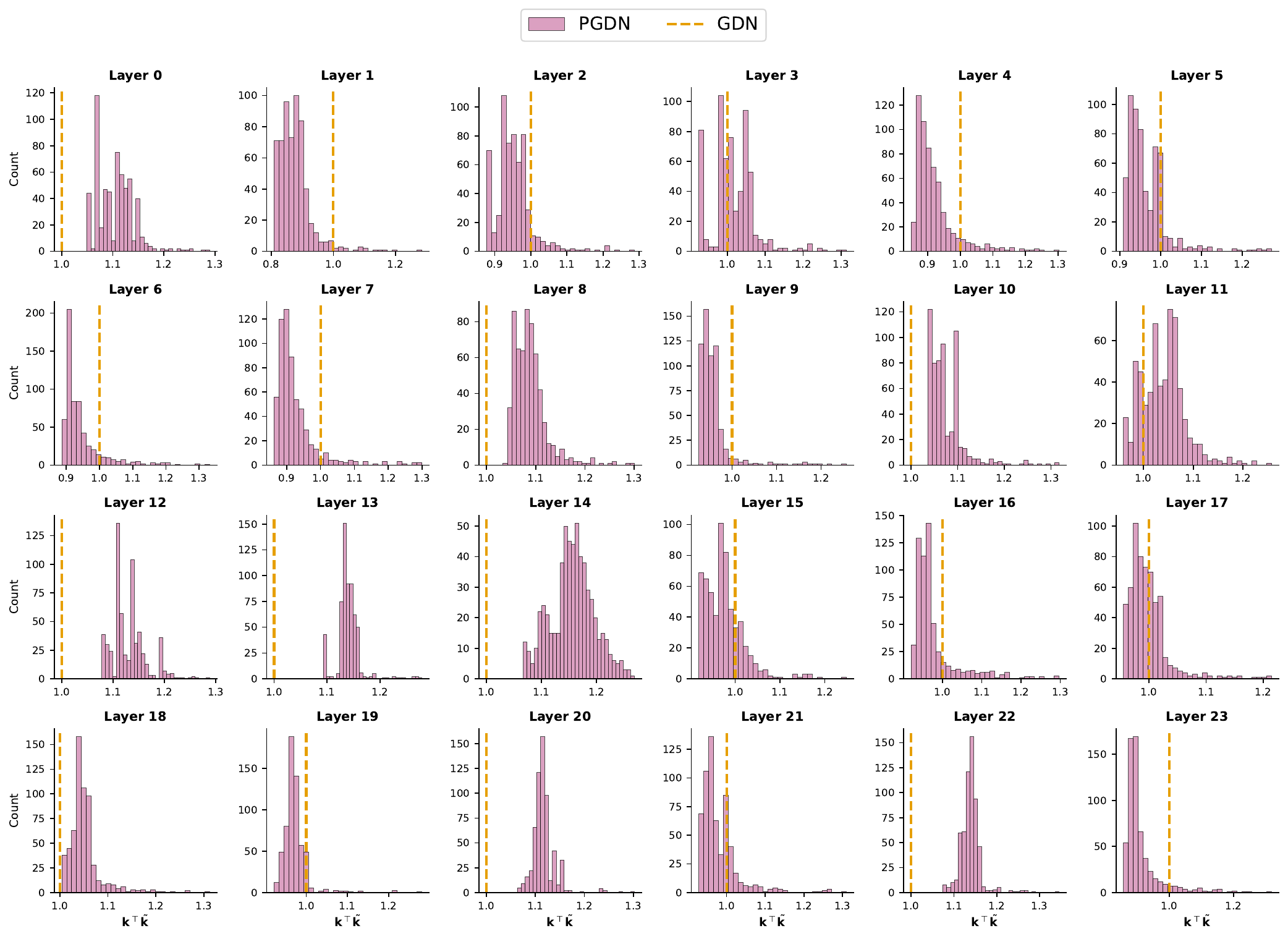}
  \caption{Distribution of learned $\tilde{\bb{k}}_t^T \bb{k}_t$ in PGDN showing that it deviates from 1, indicative of increased recurrent expressivity relative to GDN.}
  \label{fig:learned_dot_products}
\end{figure}

\begin{figure}[H]
  \centering
  \includegraphics[scale=0.45]{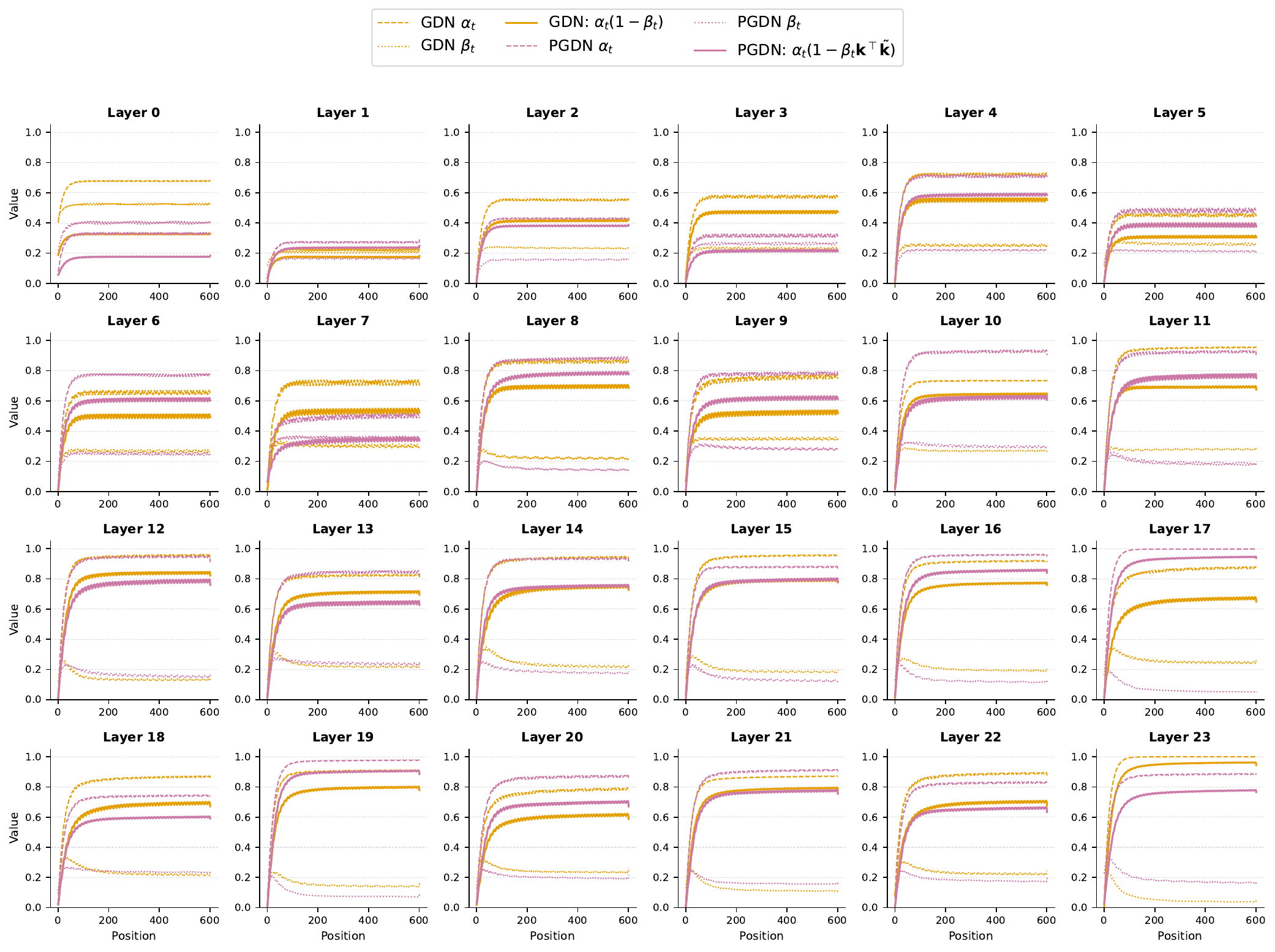}
  \caption {Learned $\alpha_t, \beta_t$ and eigenvalues in the GDN and PGDN recurrences.}
  \label{fig:learned_decays}
\end{figure}

%% file: supplementary_material/D_chunkwise_forms.tex
\section{Deriving the chunkwise parallel forms}
\label{app:chunkwise}

\subsection{Chunkwise parallel forms for the preconditioner recurrences}
In this section, we derive the chunkwise parallel form for the recursion in Eq.~\eqref{eq:diag_gram}, which is shown in Eq.~\eqref{eq:chunkwise_A_tildeK_tildeQ}. We consider a more general recurrence which includes a learnable decay, $\alpha_t$ and gain term, $\beta_t$:
\[
\mat{A}_{t} = \alpha_t \mat{A}_{t-1} + \beta_t \bb{k}_{t} \odot \bb{k}_{t}.
\]

Split the sequence into chunks of length $C$. For chunk $[i]$, let
$\mat{K}_{[i]}\in\mathbb{R}^{C\times d_k}$ stack $\{\bb{k}_{(i-1)C+j}\}_{j=1}^C$ as rows and let
$\beta_{[i]}\in\mathbb{R}^{C}$ stack $\{\beta_{(i-1)C+j}\}_{j=1}^C$.
Define $\mat{K}^{(2)}_{[i]} := \mat{K}_{[i]}\odot\mat{K}_{[i]}$.
We let $\bb{1}$ denote an all-ones vector of appropriate dimension.

Let $\mat{A}_{(i-1)C}$ be a known chunk boundary state at the start of chunk $i$. Then the end-of-chunk state is
\begin{align}
\mat{A}_{iC}
&=
\Big(\prod_{t=(i-1)C+1}^{iC}\alpha_t\Big)\,\mat{A}_{(i-1)C}
\;+\;
\sum_{t=(i-1)C+1}^{iC}
\Big(\prod_{l=t+1}^{iC}\alpha_l\Big)\,\beta_t\,\bb{k}_{t}^{\odot2}.
\label{eq:precond_from_boundary}
\end{align}

Let $\overrightarrow{\bb{\Phi}}_{[i]}$ be the $C\times C$ diagonal matrix with $j$-th diagonal entry
\[
(\overrightarrow{\bb{\Phi}}_{[i]})_{jj}
\;:=\;
\prod_{l=(i-1)C+j+1}^{iC}\alpha_l,
\qquad j=1,\dots,C,
\]
(with the empty product equal to $1$). Then Eq.~\eqref{eq:precond_from_boundary} can be written as
\[
\mat{A}_{iC}
=
\Big(\prod_{t=(i-1)C+1}^{iC}\alpha_t\Big)\mat{A}_{(i-1)C}
\;+\;
\bb{1}^\top \overrightarrow{\bb{\Phi}}_{[i]} \diag(\beta_{[i]}) \mat{K}^{(2)}_{[i]}.
\]

\subsubsection{ATQ preconditioner recurrence}
For this recurrence we require $(\mat{A}_{t} + \bb{\Lambda})^{-1}\bb{q}_{t}$, where the inverse is performed elementwise (because we are modeling a diagonal matrix).
From Eq.~\eqref{eq:precond_from_boundary}, we can form the within-chunk states as follows.

Let $\mat{A}_{[i]}\in\mathbb{R}^{C\times d_k}$ be the chunk-states with row $j$ given by
$\mat{A}_{(i-1)C + j}\in \mathbb{R}^{d_k}$.
Define the decayed causal mask $\mat{M}_{[i]}\in\mathbb{R}^{C\times C}$ by
\[
\mat{M}_{[i]}^{j,k}
=
\begin{cases}
\prod_{l=(i-1)C+k+1}^{(i-1)C+j}\alpha_l, & j\ge k,\\
0, & j<k,
\end{cases}
\]
(where the empty product gives $1$ on the diagonal).
Let $\overleftarrow{\bb{\Phi}}_{[i]}\in\mathbb{R}^{C\times C}$ be the diagonal matrix with $j$-th diagonal entry
\[
(\overleftarrow{\bb{\Phi}}_{[i]})_{jj}
:= \prod_{l=(i-1)C+1}^{(i-1)C+j}\alpha_l,
\qquad j=1,\dots,C.
\]
Then the chunk-states satisfy
\begin{align*}
\mat{A}_{[i]}
=
\overleftarrow{\bb{\Phi}}_{[i]}\bb{1}\,\mat{A}_{(i-1)C}^\top
\;+\;
\mat{M}_{[i]}\diag(\beta_{[i]}) \mat{K}^{(2)}_{[i]}.
\end{align*}

Consequently, to get $\tilde{\mat{Q}}_{[i]}$ we have the elementwise form
\[
\tilde{\mat{Q}}_{[i]}
=
\mat{Q}_{[i]} \oslash \big(\mat{A}_{[i]} + \bb{\Lambda}\big),
\]
where $\bb{\Lambda}$ is broadcast row-wise across the $C$ rows of $\mat{A}_{[i]}$.

\subsubsection{ATK (unstable) preconditioner recurrence}
Here, we show the chunkwise parallel form of the theoretically motivated ATK preconditioner recurrence derived from an application of the Sherman-Morrison identity discussed in Appendix \ref{app:theory:atk-atq-equivalence}. Recall that we found this functional form difficult to train due to reasons discussed in Appendix \ref{app:ablation-narrative}, but present it here for the sake of completeness.

For this recurrence we require
\[
\frac{(\mat{A}_{t-1} + \bb{\Lambda})^{-1}\bb{k}_{t}}{1 + \bb{k}_{t}^\top (\mat{A}_{t-1} + \bb{\Lambda})^{-1}\bb{k}_{t}}.
\]
Differing from the ATQ recurrence, we require $\mat{A}_{t-1}$, which means slightly shifted decay matrices. Let
$\mat{A}^{\mathrm{pre}}_{[i]}\in\mathbb{R}^{C\times d_k}$ have row $j$ equal to $\mat{A}_{(i-1)C+j-1}$ (so the first row is $\mat{A}_{(i-1)C}$).
Define the left-shifted decayed mask $\mat{M}^{<}_{[i]}\in\mathbb{R}^{C\times C}$ by
\[
\mat{M}^{<\,j,k}_{[i]}
=
\begin{cases}
\prod_{l=(i-1)C+k+1}^{(i-1)C+j-1}\alpha_l, & j>k,\\
0, & j\le k,
\end{cases}
\]
(where the empty product gives $1$ on the first subdiagonal).
Let $\overleftarrow{\bb{\Phi}}^{<}_{[i]}\in\mathbb{R}^{C\times C}$ be diagonal with $j$-th diagonal entry
\[
(\overleftarrow{\bb{\Phi}}^{<}_{[i]})_{jj}
:= \prod_{l=(i-1)C+1}^{(i-1)C+j-1}\alpha_l,
\qquad j=1,\dots,C,
\]
(with $(\overleftarrow{\bb{\Phi}}^{<}_{[i]})_{11}=1$). Then
\begin{align*}
\mat{A}_{[i]}^{\mathrm{pre}}
=
\overleftarrow{\bb{\Phi}}^{<}_{[i]}\bb{1}\,\mat{A}_{(i-1)C}^\top
\;+\;
\mat{M}^{<}_{[i]}\diag(\beta_{[i]}) \mat{K}^{(2)}_{[i]}.
\end{align*}

Let $\mat{P}^{\mathrm{pre}}_{[i]} := (\mat{A}^{\mathrm{pre}}_{[i]} + \bb{\Lambda})^{\odot(-1)}$ denote the elementwise reciprocal. Then
\[
\tilde{\mat{K}}_{[i]}
=
\diag\!\Big(\big(\bb{1} + \big(\mat{P}^{\mathrm{pre}}_{[i]}\odot \mat{K}^{(2)}_{[i]}\big)\bb{1}\big)^{-1}\Big)
\Big(\mat{P}^{\mathrm{pre}}_{[i]}\odot \mat{K}_{[i]}\Big),
\]
where the $(-1)$ on the $C$-vector inside $\diag(\cdot)$ is an elementwise inverse.

\subsubsection{ATK (stable) preconditioner recurrence}
Here, we show the chunkwise parallel form we employ in practice for our stable parameterization of the preconditioner given by $\mat{B}_t$. Recall from Appendix \ref{app:ablation-narrative} that we modify the ATK (unstable) form by dropping the normalization term and using the current time step $\mat{A}_t$ to precondition as opposed to the previous time step $\mat{A}_{t-1}$. As such, the chunkwise parallel form is the same as the one derived for ATQ above (no left shift), and we simply apply the elementwise squash $F$ on $\mat{A}_t$ to form $\mat{B}_t$:
\[
\mat{B}_{[i]} := F(\mat{A}_{[i]}),
\qquad
\tilde{\mat{K}}_{[i]} = \mat{B}_{[i]} \odot \mat{K}_{[i]}.
\]

\subsection{Chunkwise parallel forms for the preconditioned main recurrences}
In this section, we derive the chunkwise parallel form for the following preconditioned recurrences: PDN, PGDN, PKDA, PLA, P-Mamba-2, and PGLA. While we refer to them here as preconditioned recurrences given the context of our work, we note that these chunkwise parallel forms hold for arbitrary key-side and query-side transformations.

% ============================================================
% Delta-rule family
% ============================================================
\subsubsection{Chunkwise parallel forms for the preconditioned delta-rule and its decayed variants}
We note that the derivations here largely follow the one presented for the chunkwise parallel form in DeltaNet \citep{deltanet}.
% ============================================================
% Chunkwise parallel preconditioned DeltaNet (two keys)
% ============================================================
\paragraph{PDN chunkwise parallel form.} 
Consider the PDN recurrence with a \emph{read key} $\bb{k}_t$
and a distinct \emph{write key} $\tilde{\bb{k}}_t$:
\[
\mat{S}_t
\;=\;
\mat{S}_{t-1}
-\beta_t\big(\mat{S}_{t-1}\bb{k}_t-\bb{v}_t\big)\tilde{\bb{k}}_t^\top,
\qquad
\bb{o}_t=\mat{S}_t\bb{q}_t,
\]
where $\mat{S}_t\in\mathbb{R}^{d_v\times d_k}$, $\bb{k}_t,\tilde{\bb{k}}_t,\bb{q}_t\in\mathbb{R}^{d_k}$,
$\bb{v}_t\in\mathbb{R}^{d_v}$, and $\beta_t\in\mathbb{R}$.
Equivalently,
\begin{equation}
\label{eq:two_key_right_mult}
\mat{S}_t
=
\mat{S}_{t-1}\Big(\mat{I}-\beta_t\bb{k}_t\tilde{\bb{k}}_t^\top\Big)
+\beta_t\bb{v}_t\tilde{\bb{k}}_t^\top.
\end{equation}

\smallskip
\noindent\textbf{Chunking.}
Split the sequence into chunks of length $C$.
For chunk $[i]$, define the boundary state $\mat{S}_{[i]}:=\mat{S}_{iC}$ and stack tokens (as rows)
\[
\mat{Q}_{[i]}\in\mathbb{R}^{C\times d_k},\quad
\mat{K}_{[i]}\in\mathbb{R}^{C\times d_k},\quad
\tilde{\mat{K}}_{[i]}\in\mathbb{R}^{C\times d_k},\quad
\mat{V}_{[i]}\in\mathbb{R}^{C\times d_v},\quad
\mat{O}_{[i]}\in\mathbb{R}^{C\times d_v},
\]
where row $j$ of $\mat{K}_{[i]}$ is $\bb{k}_{iC+j}^\top$, and row $j$ of $\tilde{\mat{K}}_{[i]}$ is $\tilde{\bb{k}}_{iC+j}^\top$.
Let $\mat{M}\in\{0,1\}^{C\times C}$ be the causal (lower-triangular) mask.

% ------------------------------------------------------------
% Unrolling within a chunk
% ------------------------------------------------------------
\noindent\textbf{Unrolling within a chunk.}
For within-chunk index $r\in\{1,\dots,C\}$, define the per-step factors
\[
\mat{A}_{[i],r}:=\mat{I}-\beta_{iC+r}\,\bb{k}_{iC+r}\tilde{\bb{k}}_{iC+r}^\top,
\qquad
\mat{B}_{[i],r}:=\beta_{iC+r}\,\bb{v}_{iC+r}\tilde{\bb{k}}_{iC+r}^\top.
\]
Define the (within-chunk) product and contribution matrices
\[
\mat{P}_{[i],r}
:=\prod_{t=1}^{r}\mat{A}_{[i],t}\in\mathbb{R}^{d_k\times d_k},
\qquad
\mat{H}_{[i],r}
:=\sum_{t=1}^{r}\mat{B}_{[i],t}\!\left(\prod_{s=t+1}^{r}\mat{A}_{[i],s}\right)\in\mathbb{R}^{d_v\times d_k},
\]
with the convention $\prod_{s=r+1}^{r}\mat{A}_{[i],s}=\mat{I}$.
Then unrolling Eq.~\eqref{eq:two_key_right_mult} over the $r$ tokens in chunk $[i]$ yields
\begin{equation}
\label{eq:chunk_unroll_two_key}
\mat{S}_{iC+r}
=
\mat{S}_{[i]}\,\mat{P}_{[i],r}+\mat{H}_{[i],r}.
\end{equation}

% ------------------------------------------------------------
% Compact (WY/UT-style) representations of P and H
% ------------------------------------------------------------
\noindent\textbf{Compact form of $\mat{P}_{[i],r}$ and $\mat{H}_{[i],r}$.}
We show $\mat{P}_{[i],r}$ and $\mat{H}_{[i],r}$ admit rank-$r$ representations using the \emph{write keys} $\tilde{\bb{k}}_{iC+t}$:
\begin{equation}
\label{eq:PH_compact_two_key}
\mat{P}_{[i],r} = \mat{I}-\sum_{t=1}^{r}\bb{w}_{[i],t}\tilde{\bb{k}}_{iC+t}^\top,
\qquad
\mat{H}_{[i],r} = \sum_{t=1}^{r}\bb{u}_{[i],t}\tilde{\bb{k}}_{iC+t}^\top,
\end{equation}
for some vectors $\bb{w}_{[i],t}\in\mathbb{R}^{d_k}$ and $\bb{u}_{[i],t}\in\mathbb{R}^{d_v}$.
To obtain their recurrences, write
\[
\mat{P}_{[i],r}=\mat{P}_{[i],r-1}\mat{A}_{[i],r}
=
\Big(\mat{I}-\sum_{t<r}\bb{w}_{[i],t}\tilde{\bb{k}}_{iC+t}^\top\Big)
\Big(\mat{I}-\beta_{iC+r}\bb{k}_{iC+r}\tilde{\bb{k}}_{iC+r}^\top\Big),
\]
and collect the coefficients of $\tilde{\bb{k}}_{iC+r}^\top$, yielding
\begin{equation}
\label{eq:w_rec_two_key}
\bb{w}_{[i],r}
=
\beta_{iC+r}\Big(
\bb{k}_{iC+r}-\sum_{t<r}\bb{w}_{[i],t}\,(\tilde{\bb{k}}_{iC+t}^\top \bb{k}_{iC+r})
\Big).
\end{equation}
Similarly, using $\mat{H}_{[i],r}=\mat{H}_{[i],r-1}\mat{A}_{[i],r}+\mat{B}_{[i],r}$ and collecting $\tilde{\bb{k}}_{iC+r}^\top$ gives
\begin{equation}
\label{eq:u_rec_two_key}
\bb{u}_{[i],r}
=
\beta_{iC+r}\Big(
\bb{v}_{iC+r}-\sum_{t<r}\bb{u}_{[i],t}\,(\tilde{\bb{k}}_{iC+t}^\top \bb{k}_{iC+r})
\Big).
\end{equation}

% ------------------------------------------------------------
% Matrix form (UT transform) and definitions of T,W,U
% ------------------------------------------------------------
\noindent\textbf{Matrix form (UT transform).}
Stack $\bb{w}_{[i],r}^\top$ and $\bb{u}_{[i],r}^\top$ as rows into
\[
\mat{W}_{[i]}\in\mathbb{R}^{C\times d_k},
\qquad
\mat{U}_{[i]}\in\mathbb{R}^{C\times d_v},
\]
and let $\bb{\beta}_{[i]}\in\mathbb{R}^C$ be the vector with entries $(\bb{\beta}_{[i]})_r=\beta_{iC+r}$.
Equations \eqref{eq:w_rec_two_key}--\eqref{eq:u_rec_two_key} are equivalent to the lower-triangular linear systems
\[
\Big(\mat{I}+\tril\!\big(\diag(\bb{\beta}_{[i]})\,\mat{K}_{[i]}\tilde{\mat{K}}_{[i]}^\top,\,-1\big)\Big)\mat{W}_{[i]}
=
\diag(\bb{\beta}_{[i]})\,\mat{K}_{[i]},
\]
\[
\Big(\mat{I}+\tril\!\big(\diag(\bb{\beta}_{[i]})\,\mat{K}_{[i]}\tilde{\mat{K}}_{[i]}^\top,\,-1\big)\Big)\mat{U}_{[i]}
=
\diag(\bb{\beta}_{[i]})\,\mat{V}_{[i]}.
\]
Define
\begin{equation}
\label{eq:T_two_key}
\mat{T}_{[i]}
:=
\Big(\mat{I}+\tril\!\big(\diag(\bb{\beta}_{[i]})\,\mat{K}_{[i]}\tilde{\mat{K}}_{[i]}^\top,\,-1\big)\Big)^{-1}\diag(\bb{\beta}_{[i]}),
\end{equation}
so that
\begin{equation}
\label{eq:WU_two_key}
\mat{W}_{[i]}=\mat{T}_{[i]}\mat{K}_{[i]},
\qquad
\mat{U}_{[i]}=\mat{T}_{[i]}\mat{V}_{[i]}.
\end{equation}

% ------------------------------------------------------------
% Chunkwise-parallel state update and outputs
% ------------------------------------------------------------
\noindent\textbf{State update.}
Taking $r=C$ in Eq.~\eqref{eq:chunk_unroll_two_key} and using Eq.~\eqref{eq:PH_compact_two_key} gives
\[
\mat{S}_{[i+1]}
=
\mat{S}_{[i]}\Big(\mat{I}-\mat{W}_{[i]}^\top\tilde{\mat{K}}_{[i]}\Big)
+\mat{U}_{[i]}^\top\tilde{\mat{K}}_{[i]}
=
\mat{S}_{[i]}+\big(\mat{U}_{[i]}-\mat{W}_{[i]}\mat{S}_{[i]}^\top\big)^\top\tilde{\mat{K}}_{[i]}.
\]

\noindent\textbf{Outputs.}
For token $r$ in chunk $[i]$, combining Eq.~\eqref{eq:chunk_unroll_two_key} with Eq.~\eqref{eq:PH_compact_two_key} yields
\[
\mat{S}_{iC+r}
=
\mat{S}_{[i]}
+
\sum_{t\le r}\Big(\bb{u}_{[i],t}-\mat{S}_{[i]}\bb{w}_{[i],t}\Big)\tilde{\bb{k}}_{iC+t}^\top.
\]
Multiplying by $\bb{q}_{iC+r}$ and stacking over $r=1,\dots,C$ gives the exact chunkwise parallel form
\[
\boxed{
\mat{O}_{[i]}
=
\mat{Q}_{[i]}\mat{S}_{[i]}^\top
+
\Big(\big(\mat{Q}_{[i]}\tilde{\mat{K}}_{[i]}^\top\big)\odot \mat{M}\Big)\Big(\mat{U}_{[i]}-\mat{W}_{[i]}\mat{S}_{[i]}^\top\Big)
}
\]
together with
\[
\boxed{
\mat{S}_{[i+1]}
=
\mat{S}_{[i]}+\big(\mat{U}_{[i]}-\mat{W}_{[i]}\mat{S}_{[i]}^\top\big)^\top\tilde{\mat{K}}_{[i]},
\qquad
\mat{T}_{[i]} \text{ given by \eqref{eq:T_two_key}},\ 
\mat{W}_{[i]}=\mat{T}_{[i]}\mat{K}_{[i]},\
\mat{U}_{[i]}=\mat{T}_{[i]}\mat{V}_{[i]}.
}
\]
when $\tilde{\bb{k}}_t=\bb{k}_t$ (so $\tilde{\mat{K}}_{[i]}=\mat{K}_{[i]}$), this reduces to the standard DeltaNet chunkwise parallel form.

% ============================================================
% Decayed variants (stated without derivation)
% ============================================================
\paragraph{Extensions to decayed variants.}
As in Gated DeltaNet and Kimi Delta Attention, the same unrolling argument extends to (i) scalar decay and (ii) diagonal decay by inserting the corresponding decay factors into the within-chunk products.

\paragraph{Scalar-decayed PDN (PGDN).}
Let $\alpha_t\in(0,1)$ be a scalar decay and define, within chunk $t$ of length $C$,
\[
\gamma_r[t]\;:=\;\prod_{j=tC+1}^{tC+r}\alpha_j,\qquad
\Gamma[t]_{ab}\;:=\;\frac{\gamma_a[t]}{\gamma_b[t]}\,\mathbbm{1}\{a\ge b\}\in\mathbb{R}^{C\times C}.
\]
Using arrow notation,
\[
\overleftarrow{\mat{Q}}_{r}[t] := \gamma_r[t]\mat{Q}_{r}[t],
\qquad
\overrightarrow{\tilde{\mat{K}}}_{r}[t] := \frac{\gamma_C[t]}{\gamma_r[t]}\tilde{\mat{K}}_{r}[t],
\qquad
\overrightarrow{\mat{S}}[t] := \gamma_C[t]\mat{S}[t],
\]
the chunkwise parallel form becomes
\[
\mat{S}[t{+}1]
=
\overrightarrow{\mat{S}}[t]
+
\big(\mat{U}[t]-\mat{W}[t]\mat{S}[t]^\top\big)^\top\overrightarrow{\tilde{\mat{K}}}[t],
\qquad
\mat{O}[t]
=
\overleftarrow{\mat{Q}}[t]\mat{S}[t]^\top
+
\Big(\big(\mat{Q}[t]\tilde{\mat{K}}[t]^\top\big)\odot \Gamma[t]\Big)\big(\mat{U}[t]-\mat{W}[t]\mat{S}[t]^\top\big).
\]

\paragraph{Diagonally-decayed PDN (PKDA).}
Let $\bb{\alpha}_t\in(0,1)^{d_k}$ be an elementwise decay and define cumulative products
$\bb{b}_t:=\prod_{j=1}^t \bb{\alpha}_j$ (elementwise). For chunk index $i$ and within-chunk position $r\in[1,C]$, define
\[
\bb{\Lambda}_{iC+r}
:=\bb{b}_{iC+r}\oslash \bb{b}_{iC},
\qquad
\bb{\Gamma}_{iC+r}
:=\bb{b}_{(i+1)C}\oslash \bb{b}_{iC+r},
\qquad
\bb{\gamma}_{i+1}
:=\bb{b}_{(i+1)C}\oslash \bb{b}_{iC},
\]
and stack $\bb{\Lambda}_{[i+1]},\bb{\Gamma}_{[i+1]}\in\mathbb{R}^{C\times d_k}$ row-wise from $\bb{\Lambda}_{iC+r},\bb{\Gamma}_{iC+r}$.
Define
\[
\mat{Q}^{\,\Lambda}_{[i+1]} := \mat{Q}_{[i+1]}\odot \bb{\Lambda}_{[i+1]},
\qquad
\tilde{\mat{K}}^{\,\Gamma}_{[i+1]} := \tilde{\mat{K}}_{[i+1]}\odot \bb{\Gamma}_{[i+1]},
\qquad
\bar{\tilde{\mat{K}}}_{[i+1]} := \tilde{\mat{K}}_{[i+1]}\oslash \bb{\Lambda}_{[i+1]},
\]
and let $\mat{M}\in\{0,1\}^{C\times C}$ be the causal mask. Then
\[
\mat{O}_{[i+1]}
=
\mat{Q}^{\,\Lambda}_{[i+1]}\mat{S}_{[i]}^\top
+
\Big(\big(\mat{Q}^{\,\Lambda}_{[i+1]\!}\bar{\tilde{\mat{K}}}_{[i+1]}^\top\big)\odot \mat{M}\Big)\big(\mat{U}_{[i+1]}-\mat{W}_{[i+1]}\mat{S}_{[i]}^\top\big),
\]
\[
\mat{S}_{[i+1]}
=
\mat{S}_{[i]}\diag(\bb{\gamma}_{i+1})
+
\big(\mat{U}_{[i+1]}-\mat{W}_{[i+1]}\mat{S}_{[i]}^\top\big)^\top \tilde{\mat{K}}^{\,\Gamma}_{[i+1]}.
\]

% ============================================================
% Linear-attention family
% ============================================================
\subsubsection{Chunkwise parallel forms for preconditioned linear attention and its decayed variants}

\paragraph{PLA chunkwise parallel form.}
The derivation for the chunkwise parallel form of PLA via ATQ follows exactly from the chunkwise parallel form of linear attention, but replacing the original queries with the transformed queries as follows:

PLA maintains
\[
\mat{S}_t \;=\; \mat{S}_{t-1} + \bb{v}_t \bb{k}_t^\top,
\qquad
\bb{o}_t \;=\; \mat{S}_t \tilde{\bb{q}}_t.
\]
where $\tilde{\bb{q}}_t$ denotes the preconditioned query.
Split the sequence into chunks of length $C$. For chunk $[i]$, define the boundary state
$\mat{S}_{[i]} := \mat{S}_{iC}$ and stack tokens (as rows)
\[
\tilde{\mat{Q}}_{[i]}\in\mathbb{R}^{C\times d_k},\quad
\mat{K}_{[i]}\in\mathbb{R}^{C\times d_k},\quad
\mat{V}_{[i]}\in\mathbb{R}^{C\times d_v},\quad
\mat{O}_{[i]}\in\mathbb{R}^{C\times d_v},
\]
where row $j$ of $\mat{V}_{[i]}$ is $\bb{v}_{iC+j}^\top$, etc. Let $\mat{M}\in\{0,1\}^{C\times C}$ be the causal
(lower-triangular) mask.

\noindent\textbf{State update.}
Unrolling across the $C$ tokens in chunk $[i]$,
\[
\mat{S}_{[i+1]}
= \mat{S}_{[i]} + \sum_{j=1}^C \bb{v}_{iC+j}\bb{k}_{iC+j}^\top
= \mat{S}_{[i]} + \mat{V}_{[i]}^\top \mat{K}_{[i]}.
\]

\noindent\textbf{Outputs.}
For token $j$ in chunk $[i]$,
\[
\mat{S}_{iC+j}
= \mat{S}_{[i]} + \sum_{r\le j}\bb{v}_{iC+r}\bb{k}_{iC+r}^\top,
\qquad
\bb{o}_{iC+j}
= \mat{S}_{[i]}\tilde{\bb{q}}_{iC+j} + \sum_{r\le j}\bb{v}_{iC+r}\big(\bb{k}_{iC+r}^\top \tilde{\bb{q}}_{iC+j}\big).
\]
Stacking over $j=1,\dots,C$ and using
\(
(\tilde{\mat{Q}}_{[i]}\mat{K}_{[i]}^\top)_{j,r}=\tilde{\bb{q}}_{iC+j}^\top \bb{k}_{iC+r}
\)
gives the exact chunkwise parallel form
\[
\boxed{
\mat{O}_{[i]}
=
\tilde{\mat{Q}}_{[i]}\mat{S}_{[i]}^\top
+
\Big(\big(\tilde{\mat{Q}}_{[i]}\mat{K}_{[i]}^\top\big)\odot \mat{M}\Big)\mat{V}_{[i]}
}
\qquad
\boxed{
\mat{S}_{[i+1]}=\mat{S}_{[i]}+\mat{V}_{[i]}^\top\mat{K}_{[i]} }.
\]

% ============================================================
% Scalar-decayed PLA (preconditioned Mamba2-style)
% ============================================================
\paragraph{Scalar-decayed PLA (P-Mamba-2).}
The scalar-decayed version of the PLA recurrence corresponds to a preconditioned version of Mamba-2 and follows readily by incorporating the scalar decay terms into the derivation (see \cite{gdn} for details) as follows:

Let $\alpha_t\in(0,1)$ be a scalar decay and define, within chunk $t$ of length $C$,
\[
\gamma_r[t]\;:=\;\prod_{j=tC+1}^{tC+r}\alpha_j,\qquad
\Gamma[t]_{ij}\;:=\;\frac{\gamma_i[t]}{\gamma_j[t]}\,\mathbbm{1}\{i\ge j\}\in\mathbb{R}^{C\times C}.
\]
Using the arrow notation (decay-to-first / decay-to-last within a chunk),
\[
\overleftarrow{\tilde{\bb{q}}}_r[t] := \gamma_r[t]\tilde{\bb{q}}_r[t],
\qquad
\overrightarrow{\bb{k}}_r[t] := \frac{\gamma_C[t]}{\gamma_r[t]}\bb{k}_r[t],
\qquad
\overrightarrow{\mat{S}}_{[t]} := \gamma_C[t]\mat{S}_{[t]},
\]
and defining the corresponding stacked blocks
\[
\overleftarrow{\tilde{\mat{Q}}}_{[t]} := \begin{bmatrix}\overleftarrow{\tilde{\bb{q}}}_1[t]^\top\\ \vdots\\ \overleftarrow{\tilde{\bb{q}}}_C[t]^\top\end{bmatrix},
\qquad
\overrightarrow{\mat{K}}_{[t]} := \begin{bmatrix}\overrightarrow{\bb{k}}_1[t]^\top\\ \vdots\\ \overrightarrow{\bb{k}}_C[t]^\top\end{bmatrix},
\]
with $\tilde{\mat{Q}}_{[t]},\mat{K}_{[t]},\mat{V}_{[t]},\mat{O}_{[t]}$ denoting the usual (unarrowed) stacked blocks,
the chunkwise parallel form is
\[
\mat{S}_{[t{+}1]}
=
\overrightarrow{\mat{S}}_{[t]} \;+\; \mat{V}_{[t]}^\top \overrightarrow{\mat{K}}_{[t]},
\qquad
\mat{O}_{[t]}
=
\overleftarrow{\tilde{\mat{Q}}}_{[t]}\mat{S}_{[t]}^\top
\;+\;
\Big(\big(\tilde{\mat{Q}}_{[t]}\mat{K}_{[t]}^\top\big)\odot \Gamma[t]\Big)\mat{V}_{[t]}.
\]

% ============================================================
% Diagonal-decayed PLA / PGLA (preconditioned GLA-style)
% ============================================================
\paragraph{Diagonally-decayed PLA (PGLA).}
Similarly, the diagonally-decayed version of the PLA recurrence yields PGLA, which also follows readily from incorporating the diagonal decay terms into the derivation (see \cite{gla} for details) as follows:

Let $\bb{\alpha}_t\in(0,1)^{d_k}$ be an elementwise (diagonal) decay and define the cumulative products
$\bb{b}_t:=\prod_{j=1}^t \bb{\alpha}_j$ (elementwise). For chunk index $i$ and within-chunk position $j\in[1,C]$, define
\[
\bb{\Lambda}_{iC+j}
:=\bb{b}_{iC+j}\oslash \bb{b}_{iC},
\qquad
\bb{\Gamma}_{iC+j}
:=\bb{b}_{(i+1)C}\oslash \bb{b}_{iC+j},
\qquad
\bb{\gamma}_{i+1}
:=\bb{b}_{(i+1)C}\oslash \bb{b}_{iC},
\]
and stack $\bb{\Lambda}_{[i+1]},\bb{\Gamma}_{[i+1]}\in\mathbb{R}^{C\times d_k}$ row-wise from $\bb{\Lambda}_{iC+j},\bb{\Gamma}_{iC+j}$.
Define the gate-adjusted blocks
\[
\tilde{\mat{Q}}^{\,\Lambda}_{[i+1]} := \tilde{\mat{Q}}_{[i+1]}\odot \bb{\Lambda}_{[i+1]},
\qquad
\tilde{\mat{K}}^{\,\Gamma}_{[i+1]} := \mat{K}_{[i+1]}\odot \bb{\Gamma}_{[i+1]},
\qquad
\bar{\mat{K}}_{[i+1]} := \mat{K}_{[i+1]}\oslash \bb{\Lambda}_{[i+1]},
\]
and let $\mat{M}\in\{0,1\}^{C\times C}$ be the causal mask. Then the chunkwise parallel form is
\[
\mat{O}_{[i+1]}
=
\tilde{\mat{Q}}^{\,\Lambda}_{[i+1]}\mat{S}_{[i]}^\top
\;+\;
\Big(\big(\tilde{\mat{Q}}^{\,\Lambda}_{[i+1]}\bar{\mat{K}}_{[i+1]}^\top\big)\odot \mat{M}\Big)\mat{V}_{[i+1]},
\]
\[
\mat{S}_{[i+1]}
=
\mat{S}_{[i]}\diag(\bb{\gamma}_{i+1})
\;+\;
\mat{V}_{[i+1]}^\top \tilde{\mat{K}}^{\,\Gamma}_{[i+1]}.
\]

%% file: supplementary_material/F_POCP.tex
\section{Deriving recurrences using the preconditioned OCP framework}
\label{app:pocp}

In this section, we show how preconditioned DeltaNet can be derived readily by changing the norm of the proximal term under the online convex programming (OCP) framework. Table \ref{tab:linear_rnn_online_objectives} shows the online learning objectives of common recurrences, for convenience.

\begin{table}[H]
\centering
\caption{Linear recurrences and corresponding online learning objectives. Table taken from \citep{gdn} and extended to include preconditioner.}
\label{tab:linear_rnn_online_objectives}
\resizebox{\linewidth}{!}{%
\begin{tabular}{@{}l p{0.43\textwidth} p{0.47\textwidth} @{}}
\toprule
\textbf{Method} & \textbf{Online Learning Objective} & \textbf{Online Update} \\
\midrule
LA &
$\|\mat{S}_t - \mat{S}_{t-1}\|_F^2 - 2\langle \mat{S}_t \bb{k}_t,\, \bb{v}_t\rangle$ &
$\mat{S}_t = \mat{S}_{t-1} + \bb{v}_t \bb{k}_t^{\top}$ \\

Mamba-2 &
$\|\mat{S}_t - \alpha_t \mat{S}_{t-1}\|_F^2 - 2\langle \mat{S}_t \bb{k}_t,\, \bb{v}_t\rangle$ &
$\mat{S}_t = \alpha_t \mat{S}_{t-1} + \bb{v}_t \bb{k}_t^{\top}$ \\

Longhorn &
$\|\mat{S}_t - \mat{S}_{t-1}\|_F^2 + \beta_t \|\mat{S}_t \bb{k}_t - \bb{v}_t\|^2$ &
$\mat{S}_t = \mat{S}_{t-1}(I - \epsilon_t \bb{k}_t \bb{k}_t^{\top}) + \epsilon_t \bb{v}_t \bb{k}_t^{\top},\quad
\epsilon_t = \frac{\beta_t}{1 + \beta_t \bb{k}_t^{\top}\bb{k}_t}$ \\

DeltaNet &
$\|\mat{S}_t - \mat{S}_{t-1}\|_F^2 - 2\langle \mat{S}_t \bb{k}_t,\, \beta_t (\bb{v}_t - \mat{S}_{t-1}\bb{k}_t)\rangle$ &
$\mat{S}_t = \mat{S}_{t-1}(I - \beta_t \bb{k}_t \bb{k}_t^{\top}) + \beta_t \bb{v}_t \bb{k}_t^{\top}$ \\

Gated DeltaNet &
$\|\mat{S}_t - \alpha_t \mat{S}_{t-1}\|_F^2 - 2\langle \mat{S}_t \bb{k}_t,\, \beta_t (\bb{v}_t - \alpha_t \mat{S}_{t-1}\bb{k}_t)\rangle$ &
$\mat{S}_t = \mat{S}_{t-1}\big(\alpha_t (I - \beta_t \bb{k}_t \bb{k}_t^{\top})\big) + \beta_t \bb{v}_t \bb{k}_t^{\top}$ \\ \midrule

P-Mamba-2 &
$\|\mat{S}_t - \alpha_t \mat{S}_{t-1}\|^2_{\color{blue}{\mat{P}^{-1}}} - 2\langle \mat{S}_t \bb{k}_t,\, \bb{v}_t\rangle$ &
$\mat{S}_t = \alpha_t \mat{S}_{t-1} + \bb{v}_t {\color{blue}\tilde{\bb{k}}_t^{\top}}$ \\

P-Longhorn &
$\|\mat{S}_t - \mat{S}_{t-1}\|^2_{\color{blue}{\mat{P}^{-1}}} + \beta_t \|\mat{S}_t \bb{k}_t - \bb{v}_t\|^2$ &
$\mat{S}_t = \mat{S}_{t-1}(I - \epsilon_t \bb{k}_t {\color{blue}\tilde{\bb{k}}_t^{\top}}) + \epsilon_t \bb{v}_t {\color{blue}\tilde{\bb{k}}_t^{\top}},\quad
\epsilon_t = \frac{\beta_t}{1 + \beta_t \bb{k}_t^{\top}{\color{blue}{P}}\bb{k}_t}$ \\

PGDN &
$\|\mat{S}_t - \alpha_t \mat{S}_{t-1}\|^2_{\color{blue}{\mat{P}^{-1}}} - 2\langle \mat{S}_t \bb{k}_t,\, \beta_t (\bb{v}_t - \alpha_t \mat{S}_{t-1}\bb{k}_t)\rangle$ &
$\mat{S}_t = \mat{S}_{t-1}\big(\alpha_t (I - \beta_t \bb{k}_t {\color{blue}\tilde{\bb{k}}_t^{\top}})\big) + \beta_t \bb{v}_t {\color{blue}\tilde{\bb{k}}_t^{\top}}$ \\
\bottomrule
\end{tabular}%
}
\end{table}

\subsection{Preconditioned Gated DeltaNet}
\label{sec:gdn_ocp}

Consider the modified preconditioned Gated DeltaNet learning objective: 
\[
\mathcal{L}_{OCP} = \|\mat{S}_t - \alpha_t \mat{S}_{t-1}\|_{\mat{P}^{-1}}^2 - 2\langle \mat{S}_t \bb{k}_t,\, \beta_t (\bb{v}_t - \alpha_t \mat{S}_{t-1}\bb{k}_t)\rangle
\]

Where $||\mat{X}||^2_{\mat{A}}$ is taken as $\tr(\mat{X} \mat{A} \mat{X}^\top)$. We generally assume $\mat{P}\succ 0$.

The objective can be written as
\[
\min_{\mat{S}_t}\;\;
\mathrm{tr}\!\Big((\mat{S}_t-\alpha_t \mat{S}_{t-1})\,\mat{P}^{-1}(\mat{S}_t-\alpha_t \mat{S}_{t-1})^{\top}\Big)
\;-\;
2\left\langle \mat{S}_t \bb{k}_t,\; \beta_t\!\left(\bb{v}_t-\alpha_t \mat{S}_{t-1}\bb{k}_t\right)\right\rangle.
\]

Using the identities
\[
\nabla_{\mat{S}_t}\,\mathrm{tr}\!\Big((\mat{S}_t-\alpha_t \mat{S}_{t-1})\,\mat{P}^{-1}(\mat{S}_t-\alpha_t \mat{S}_{t-1})^{\top}\Big)
=2(\mat{S}_t-\alpha_t \mat{S}_{t-1})\mat{P}^{-1},
\]
and
\[
\nabla_{\mat{S}_t}\Big(
-2\left\langle \mat{S}_t \bb{k}_t,\; \beta_t\!\left(\bb{v}_t-\alpha_t \mat{S}_{t-1}\bb{k}_t\right)\right\rangle
\Big)
=
-2\,\beta_t\!\left(\bb{v}_t-\alpha_t \mat{S}_{t-1}\bb{k}_t\right)\bb{k}_t^{\top},
\]
the first-order optimality condition gives
\[
2(\mat{S}_t-\alpha_t \mat{S}_{t-1})\mat{P}^{-1}
-
2\,\beta_t\!\left(\bb{v}_t-\alpha_t \mat{S}_{t-1}\bb{k}_t\right)\bb{k}_t^{\top}
=0.
\]
Thus,
\[
(\mat{S}_t-\alpha_t \mat{S}_{t-1})\mat{P}^{-1}
=
\beta_t\!\left(\bb{v}_t-\alpha_t \mat{S}_{t-1}\bb{k}_t\right)\bb{k}_t^{\top}
\;\;\Longrightarrow\;\;
\mat{S}_t
=
\alpha_t \mat{S}_{t-1}
+
\beta_t\!\left(\bb{v}_t-\alpha_t \mat{S}_{t-1}\bb{k}_t\right)\bb{k}_t^{\top}\mat{P}.
\]
\begin{align*}
\mat{S}_t^{\star}
&=
\alpha_t \mat{S}_{t-1}
+
\beta_t\!\left(\bb{v}_t-\alpha_t \mat{S}_{t-1}\bb{k}_t\right)\bb{k}_t^{\top}\mat{P} \\
&= \alpha_t \mat{S}_{t-1}
+
\beta_t\!\left(\bb{v}_t-\alpha_t \mat{S}_{t-1}\bb{k}_t\right)\tilde{\bb{k}}_t^\top
\end{align*}

The case for DeltaNet trivially follows by setting $\alpha_t = 1$.

\subsection{Preconditioned Longhorn}

We have the modified Longhorn \citep{longhorn} objective:

\[
\min_{\mat{S}_t}\;\;
\|\mat{S}_t-\mat{S}_{t-1}\|_{\mat{P}^{-1}}^{2}
\;+\;
\beta_t \|\mat{S}_t \bb{k}_t-\bb{v}_t\|_2^{2},
\qquad
\|\mat{X}\|_{\mat{P}^{-1}}^{2} := \mathrm{tr}\!\big(\mat{X}\,\mat{P}^{-1}\mat{X}^{\top}\big).
\]

Setting the first-order optimality condition:

\begin{align*}
0
&= \nabla_{\mat{S}_t}\Big(
\mathrm{tr}\!\big((\mat{S}_t-\mat{S}_{t-1})\,\mat{P}^{-1}(\mat{S}_t-\mat{S}_{t-1})^{\top}\big)
+\beta_t\|\mat{S}_t\bb{k}_t-\bb{v}_t\|_2^2
\Big) \\
&= 2(\mat{S}_t-\mat{S}_{t-1})\mat{P}^{-1}+2\beta_t(\mat{S}_t\bb{k}_t-\bb{v}_t)\bb{k}_t^{\top} \\
(\mat{S}_t-\mat{S}_{t-1})\mat{P}^{-1}
&= -\beta_t(\mat{S}_t\bb{k}_t-\bb{v}_t)\bb{k}_t^{\top} \\
\mat{S}_t-\mat{S}_{t-1}
&= -\beta_t(\mat{S}_t\bb{k}_t-\bb{v}_t)\bb{k}_t^{\top}\mat{P} \\
&= -\beta_t \mat{S}_t\bb{k}_t\bb{k}_t^{\top}\mat{P}+\beta_t \bb{v}_t\bb{k}_t^{\top}\mat{P} \\
\mat{S}_t\big(I+\beta_t \bb{k}_t\bb{k}_t^{\top}\mat{P}\big)
&= \mat{S}_{t-1}+\beta_t \bb{v}_t\bb{k}_t^{\top}\mat{P} \\
\mat{S}_t^{\star}
&= \big(\mat{S}_{t-1}+\beta_t \bb{v}_t\bb{k}_t^{\top}\mat{P}\big)
\big(I+\beta_t \bb{k}_t\bb{k}_t^{\top}\mat{P}\big)^{-1}.
\end{align*}

Applying Sherman-Morrison to the solution:

% Sherman--Morrison with u=\beta_t\bb{k}_t,\; v^\top=\bb{k}_t^\top\mat{P}:
\begin{align*}
\big(I+\beta_t \bb{k}_t\bb{k}_t^{\top}\mat{P}\big)^{-1}
&= I-\frac{\beta_t}{1+\beta_t\,\bb{k}_t^{\top}\mat{P}\bb{k}_t}\;\bb{k}_t\bb{k}_t^{\top}\mat{P} \\
\mat{S}_t^{\star}
&= \big(\mat{S}_{t-1}+\beta_t \bb{v}_t\bb{k}_t^{\top}\mat{P}\big)
\left(I-\frac{\beta_t}{1+\beta_t\,\bb{k}_t^{\top}\mat{P}\bb{k}_t}\;\bb{k}_t\bb{k}_t^{\top}\mat{P}\right) \\
&= \mat{S}_{t-1}
+\frac{\beta_t}{1+\beta_t\,\bb{k}_t^{\top}\mat{P}\bb{k}_t}\;
(\bb{v}_t-\mat{S}_{t-1}\bb{k}_t)\bb{k}_t^{\top}\mat{P}.
\mat{S}_t^{\star} \\
&=
\mat{S}_{t-1}
+
\underbrace{\frac{\beta_t}{1+\beta_t\,\bb{k}_t^{\top}\mat{P}\bb{k}_t}}_{\text{modified gain, }\tilde{\beta}_t}
(\bb{v}_t-\mat{S}_{t-1}\bb{k}_t)\underbrace{\bb{k}_t^{\top}\mat{P}}_{\tilde{\bb{k}}^\top} \\
&=
\mat{S}_{t-1}
+
\tilde{\beta}_t
(\bb{v}_t-\mat{S}_{t-1}\bb{k}_t)\tilde{\bb{k}}^\top
\end{align*}

Note that in this case, the gain term contains the normalization $\frac{1}{1+\beta_t\,\bb{k}_t^{\top}\mat{P}\bb{k}_t}$. This is closer to our ATK formulation used in Eq.~\ref{eq:atk_update}, whereas the preconditioned DeltaNet update derived from the OCP formulation in Appendix \ref{sec:gdn_ocp} is closer to the version derived in Appendix \ref{app:sec:atk_minus_one_vs_t}. To get exact correspondence between the OCP preconditioned GDN update and the OCP preconditioned Longhorn update, we need to use $\mat{P}_{OCP, \text{Longhorn}}^{-1} = \mat{G}_{t-1}$ and $\mat{P}_{OCP, \text{GDN}}^{-1} = \mat{G}_{t}$.

\subsection{Key-preconditioned Mamba-2}

In this case, we consider \textbf{key-preconditioned} Mamba-2, found by modifying the proximal term of Mamba-2 to include the $\mat{P}^{-1}$ norm. As discussed in Section \ref{sec:taxonomy}, the preconditioned OCP framework \emph{does not properly expose the solve axis}, so we cannot recover the ATQ/ATK correspondence between query-preconditioned linear attention and key-preconditioned DeltaNet. Therefore, the recurrence exposed in this section \emph{is distinct} from the P-Mamba-2 recurrence discussed in the previous section, which is query-preconditioned.

Consider the POCP given by:
\[
\min_{\mat{S}_t}\;\;
\|\mat{S}_t-\alpha_t \mat{S}_{t-1}\|_{\mat{P}^{-1}}^{2}
\;-\;
2\langle \mat{S}_t \bb{k}_t,\; \bb{v}_t\rangle,
\qquad
\|\mat{X}\|_{\mat{P}^{-1}}^{2} := \mathrm{tr}\!\big(\mat{X}\,\mat{P}^{-1}\mat{X}^{\top}\big).
\]

Setting the first-order optimality condition gives us
\begin{align*}
0
&= \nabla_{\mat{S}_t}\Big(
\mathrm{tr}\!\big((\mat{S}_t-\alpha_t \mat{S}_{t-1})\,\mat{P}^{-1}(\mat{S}_t-\alpha_t \mat{S}_{t-1})^{\top}\big)
-2\langle \mat{S}_t\bb{k}_t,\bb{v}_t\rangle
\Big) \\
&= 2(\mat{S}_t-\alpha_t \mat{S}_{t-1})\mat{P}^{-1}-2\,\bb{v}_t\bb{k}_t^{\top} \\
(\mat{S}_t-\alpha_t \mat{S}_{t-1})\mat{P}^{-1}
&= \bb{v}_t\bb{k}_t^{\top} \\
\mat{S}_t-\alpha_t \mat{S}_{t-1}
&= \bb{v}_t\bb{k}_t^{\top}\mat{P}.
\end{align*}
and solving yields 
\begin{align*}
\mat{S}_t^{\star}
&=
\alpha_t \mat{S}_{t-1}
+
\bb{v}_t\bb{k}_t^{\top}\mat{P} \\
&=
\alpha_t \mat{S}_{t-1}
+
\bb{v}_t\tilde{\bb{k}}_t^\top
\end{align*}

The non-decayed case, standard linear attention with key-side preconditioning, follows directly.